\newlength\aftertitskip     \newlength\beforetitskip
\newlength\interauthorskip  \newlength\aftermaketitskip
\def\maketitle{\par
 \begingroup
   \def\thefootnote{\fnsymbol{footnote}}
   \def\@makefnmark{\hbox to 4pt{$^{\@thefnmark}$\hss}}
   \@maketitle \@thanks
 \endgroup
\setcounter{footnote}{0}
 \let\maketitle\relax \let\@maketitle\relax
 \gdef\@thanks{}\gdef\@author{}\gdef\@title{}\let\thanks\relax}
\def\@startauthor{\noindent \normalsize\bf}
\def\@endauthor{}
\def\@starteditor{\noindent \small {\bf Editor:~}}
\def\@endeditor{\normalsize}
\def\@maketitle{\vbox{\hsize\textwidth
 \linewidth\hsize \vskip \beforetitskip
 {\begin{center} \LARGE\@title \par \end{center}} \vskip \aftertitskip
 {\def\and{\unskip\enspace{\rm and}\enspace}%
  \def\addr{\small\it}%
  \def\email{\hfill\small\tt}%
  \def\name{\normalsize\bf}%
  \def\AND{\@endauthor\rm\hss \vskip \interauthorskip \@startauthor}
  \@startauthor \@author \@endauthor}
}}
\newtheorem{thm}{Theorem}
\newtheorem{definition}[thm]{Definition}
\newtheorem{theorem}[thm]{Theorem}
\newtheorem{lemma}[thm]{Lemma}
\newtheorem{proposition}[thm]{Proposition}
\newtheorem{corollary}[thm]{Corollary}
\newcommand{\grad}{\text{grad}\,}
\newcommand{\R}{{\mathbb R}}
\newcommand{\E}{{\mathbb E}}
\newcommand{\bmu}{\mathbf{\upmu}}
\newcommand{\e}{\varepsilon}
\newcommand{\calH}{{\mathcal H}}
\newcommand{\calP}{{\mathcal P}}
\newcommand{\calW}{{\mathcal W}}
\newcommand{\calF}{{\mathcal F}}
\def\longequals{\mathbin{=\kern-2pt=}}
\newcommand{\beq}{\begin{equation}}
\newcommand{\eeq}{\end{equation}}
\newtheorem{remark}{Remark}
\providecommand{\nor}[1]{\ensuremath{\left\lVert {#1} \right\rVert}}
\providecommand{\scalT}[2]{\ensuremath{\left\langle{#1},{#2}\right\rangle}}
\title{On the Convergence of Gradient Descent in GANs:   MMD GAN As a Gradient Flow}
\author{\name Youssef Mroueh$^*$ \email{mroueh@us.ibm.com}\\
\addr{IBM Research } \\
\name Truyen Nguyen$^*$  \email{tn8@uakron.edu}\\
\addr{Akron University } \\
\newline
\addr{$*$ Equally contributed } 

}
\begin{document}

\maketitle

\begin{abstract}
We consider the maximum mean discrepancy ($\mathrm{MMD}$) GAN problem and propose a parametric kernelized  gradient flow that mimics the min-max game in gradient regularized  $\mathrm{MMD}$ GAN. We show that this flow provides a descent direction minimizing  the $\mathrm{MMD}$ on a statistical manifold of probability distributions. We then derive an explicit condition 
which ensures  that gradient descent on the parameter space of the generator in gradient regularized $\mathrm{MMD}$ GAN is globally convergent to the target distribution. Under this condition, we give non asymptotic convergence results of gradient descent in MMD GAN. Another contribution of this paper is the introduction of a  dynamic formulation of a regularization of $\mathrm{MMD}$ and demonstrating that the parametric kernelized descent for $\mathrm{MMD}$  is the gradient flow of this functional with respect to the new  Riemannian structure. Our obtained  theoretical result allows ones  to treat  gradient flows for quite general functionals  and thus has potential applications to other types of variational inferences  on a statistical manifold beyond GANs.  Finally, numerical experiments suggest that our parametric kernelized gradient flow stabilizes GAN training and guarantees convergence.  
\end{abstract}
\renewcommand{\thefootnote}{\fnsymbol{footnote}}







\setcounter{equation}{0}
\section{Introduction}
Generative Adversarial Networks (GANs) were introduced in \cite{GANoriginal} and have attracted growing attention in the machine learning community. Implicit Generative models such as GANs can be seen as  learning a distribution  via optimizing a  functional defined on a statistical manifold. The statistical manifold refers to the parametrization of the \emph{Generator}. 
There are a plethora of works on  functionals that are optimized in GANs for example the Jensen-Shanon divergence  in the original work \cite{GANoriginal}; general $\phi$ divergences in \cite{FGAN}; the neural net distance in \cite{Ma_2018}; integral probability metrics such as the Wasserstein order 1 distance  considered in \cite{WGAN}; the maximum mean discrepancy \cite{MMD}  considered in \cite{mmdGAN1,mmdGAN2,li2017mmd,mroueh2017mcgan,Santos_2019_ICCV}. Despite their striking empirical success, the rigorous  understanding of the convergence in \emph{distributional sense} of gradient descent in GANs remains less understood. Much of the theoretical analysis has been dedicated to the stability of the min-max game  via the introduction of gradient regularizers \cite{gulrajani2017improved,SobolevGAN,mescheder2017numerics,roth2017stabilizing,mescheder2018training,Arbel:2018,nagarajan2017gradient}. Min-max convergence rates  for a large class of GANs was studied in \cite{uppal2019nonparametric,liang2018generative}, however these bounds are not specific to  gradient descent in GANs. In this work we aim at understanding the distributional convergence properties of  gradient descent in the context of  $\mathrm{MMD}$ GANs. The work closest to ours is  \cite{bottou2017geometrical} that establishes  global convergence of the generator using the Wasserstein $1$ distance. Nevertheless, gradient descent is not explicitly considered  in \cite{bottou2017geometrical}.\\   

\noindent We summarize our main contributions in this work as follows: 
\begin{itemize}
 \item We introduce in Section \ref{sec:mmdalphabeta} a new gradient regularizer for the MMD. The new regularizer  has the form of  a parametric energy, where the gradient is taken with respect to the generator parameters, instead of the input space as usually considered in previous works.  We call the new proposed discrepancy $\mathrm{MMD}_{\alpha, \beta}$. 

\item  We consider the $\mathrm{MMD}$ GAN problem in Section \ref{sec:mmdflows} and propose a new descent direction in terms of the witness function of the parametric regularized $\mathrm{MMD}_{\alpha, \beta}$. We give in this section detailed descriptions and properties of the corresponding continuous flow.  

\item  We analyze in Section \ref{sec:gradientmmd} the non-asymptotic distributional convergence properties of gradient descent in MMD GAN when using the  $\mathrm{MMD}_{\alpha, \beta}$ witness functions to drive the generator updates.   

\item  We derive in Section \ref{sec:flows} a dynamic formulation of the  $\mathrm{MMD}$ on the statistical manifold of probability distributions and use it to propose a novel regularization   of the $\mathrm{MMD}$, which we call $d_{\alpha,\beta}$. We show that $d_{\alpha,\beta}$ admits a Riemannian metric tensor and  investigate gradient flows for general functionals w.r.t. this structure. Intriguingly, we show that gradient descent in MMD GANs 
driven by the witness function of  $\mathrm{MMD}_{\alpha, \beta}$ coincides with the gradient flow of the $\mathrm{MMD}$ w.r.t. this  new geometric structure.

\item  Finally, we discuss related works in Section~\ref{sec:relatedwork}, and validate experimentally our theoretical findings in Section~\ref{sec:exp}. 

\end{itemize}


\section{Preliminaries}

Let $\Omega$ be an open region in $\R^d$ and let  $\mathcal{H}$ be a reproducing kernel Hilbert space (RKHS) generated by a 
kernel $k(x, y)$ on $\Omega\times \Omega$.
Let $\mathcal{Z}\subset \mathbb{R}^m$  be an open region on a lower dimensional space endowed with a probability distribution $\nu$ on $\mathcal{Z}$.  Let $\Theta$ be a parameter space in  $\mathbb{R}^p$, and 
 $(\theta, z) \to G_{\theta}(z) =\big(G_{\theta}^1(z),..., G_{\theta}^d(z)\big)\in \Omega $ be a generator function defined on $\Theta$. We assume that $k$ is bounded, $G_{\theta}$ is differentiable in $\theta$, and 
\begin{eqnarray}\label{kG-cond-1}
    &\nor{k(x,.) - k(y,.)}_{\mathcal{H}} \leq L\nor{x-y} \quad \mbox{and}\quad \nonumber \\
    &\nor{G_{\theta}(z)- G_{\theta'}(z)} \leq D(z) \nor{\theta-\theta'}
\end{eqnarray}
for some constant $L>0$ and function $D:\mathcal{Z} \to [0,\infty)$ with $\mathbb{E}_{\nu} [D(z)^2] <\infty$. 
In \eqref{kG-cond-1} and throughout the paper, $\langle\cdot, \cdot \rangle$ and $\|\cdot \|$ denote the standard Euclidean inner product and norm, while  $\langle\cdot, \cdot \rangle_\calH$ and $\|\cdot \|_\calH$   denote the inner product and norm on $\calH$. For a probability distribution $\rho$ on $\Omega$,
let $\bmu_{\rho}(x) :=\int k(x,y) \rho(dy)$ denote its  kernel mean embedding. As $k$ is bounded, we have $\bmu_{\rho}\in \mathcal H$
and $\int f(x) \rho(dx) =\langle f, \bmu_\rho \rangle_\calH$ for every $f\in\calH$
(see \cite{muandet2016kernel}). Note that $\bmu_{\rho_1 -\rho_2}=\bmu_{\rho_1} - \bmu_{\rho_2}$ by linearity.

We consider the following statistical manifold of probability distributions:
$$\mathcal{P}_{\Theta}:=\{q_{\theta}= (G_{\theta})_{\#} \nu, \,\, \, \theta \in \Theta \}.$$
The main objective functions considered in this paper are  parametric energy regularizations of
$\mathrm{MMD}(p, q_\theta):= \|\bmu_p - \bmu_{q_\theta}\|_{\calH}$ and the introduction of the following operators plays an important role in understanding these objective functions and associated gradient flows.

\begin{definition}[Matrix Mass]

\noindent Let $J_{\theta}G_{\theta}(z)
 =\big(\frac{\partial G_{\theta}^j(z)}{\partial \theta_i }\big)_{ij}\in \mathbb{R}^{p\times d}$ denote  the Jacobian of $G_{\theta}$ with respect to $\theta$.  
Then for $\theta \in \Theta$, we define a matrix valued kernel $\Gamma_{\theta}$  on $\mathcal{Z}\times \mathcal{Z}$ as follows, $\mbox{for } (z,z') \in \mathcal{Z}\times \mathcal{Z}$:
$$\Gamma_{\theta}(z,z') : =J_{\theta}G_{\theta}(z)^{\top} J_{\theta}G_{\theta}(z') \in \mathbb{R}^{d\times d}\quad .$$
\end{definition}

\begin{definition}[Parametric Grammian - Mass Corrected Grammian of Derivatives]
For  $\theta\in \Theta$, let  
$L_\theta: \mathcal{H} \to  \R^p$ be the operator given by
\[L_\theta(f) :=   \int   J_{\theta}G_{\theta}(z)\nabla f (G_{\theta}(z)) \nu(dz) 
\]
and  $L_\theta^{\top}: \R^p \to \mathcal{H}$ be the operator given by
\begin{align*}
L_\theta^{\top}(v) &:=  \int \langle \nabla_{\theta}[k(G_{\theta}(z)],.), v\rangle  \, \nu(dz) \\
&= \int \langle \nabla_x k(G_{\theta}(z),.), J_\theta G_{\theta}(z)^{\top}v\rangle \, \nu(d z).
\end{align*}
Then the parametric Grammian $D_\theta: \mathcal{H} \to \mathcal{H}$ is defined by
$D_\theta :=L_\theta^{\top}L_\theta$.
\end{definition}
The main properties of these operators are summarized as follows. Proofs are given in the Appendix. 
\begin{proposition}\label{operator} For each $\theta\in \Theta$, we have
\begin{itemize}
    \item[i)] $L_\theta^{\top}$ is the adjoint operator of $L_\theta$, i.e., $\langle L_\theta f, v \rangle =\langle  f, L_\theta^{\top} v \rangle_{\mathcal H}$ for  $f\in \mathcal H$ and $v\in \R^p$.
     \item[ii)] $D_\theta$ is symmetric, i.e., $\langle D_\theta f, g \rangle_{\mathcal H} =\langle  f, D_\theta g \rangle_{\mathcal H}$
     for $f, \, g\in \mathcal H$.
      \item[iii)] $\scalT{f}{D_\theta f}_{\mathcal{H}}=\nor{ \nabla_{\theta} \int f(x)  q_\theta(d x)  }^2=\|\nabla_\theta[\langle f, \bmu_{q_\theta} \rangle_\calH]\|^2\geq 0$. In particular, $D_\theta$ is a positive operator and hence its spectrum is contained in $[0,\infty)$.
      \item[iv)] For $f\in \mathcal H$, we have $(D_\theta f)(x)= \langle D(x,\cdot), f \rangle_{\mathcal H}$ with 
$$ D(y,y') :=  \iint \langle  \partial_{\theta}k(G_{\theta}(z),y) ,  \partial_{\theta}k(G_{\theta}(z'),y') \rangle \nu(d z)\nu(d z')$$, where: $ \partial_{\theta}k(G_{\theta}(z),y) =J_{\theta}G_{\theta}(z) \nabla_{x} k(G_{\theta}(z),y)$. Equivalently $$D(y,y') =\iint  Trace (\nabla_{x} k(G_{\theta}(z),y) \otimes  \Gamma_{\theta}(z,z')  \nabla_x k(G_{\theta}(z'),y') )  \nu(d z)\nu(d z').$$
      
\end{itemize}

\end{proposition}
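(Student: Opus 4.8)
The plan is to reduce everything, via linearity and Bochner integration, to two facts about $\calH$: the reproducing property $f(x)=\langle f,k(x,\cdot)\rangle_\calH$ and its differentiated form $\nabla f(x)=\langle f,\nabla_x k(x,\cdot)\rangle_\calH$ (read componentwise). First I would justify the differentiated version from the standing hypotheses. The bound $\nor{k(x,\cdot)-k(y,\cdot)}_\calH\le L\nor{x-y}$ makes $x\mapsto k(x,\cdot)$ Lipschitz into $\calH$, so the difference quotients $h^{-1}\big(k(x+he_j,\cdot)-k(x,\cdot)\big)$ are bounded in $\calH$ by $L$; their weak limits define $\partial_{x_j}k(x,\cdot)\in\calH$ with $\nor{\partial_{x_j}k(x,\cdot)}_\calH\le L$, and testing against a fixed $f$ and passing to the limit gives $\partial_{x_j}f(x)=\langle f,\partial_{x_j}k(x,\cdot)\rangle_\calH$. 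Combined with $\nor{J_\theta G_\theta(z)}\le D(z)$ (the second line of \eqref{kG-cond-1}) and $\E_\nu[D^2]<\infty$, this dominates every integrand below by a constant times $\nor{f}_\calH\,D(z)$ (or $D(z)D(z')$ for the double integral in (iv)), so all the integrals converge and may be exchanged with bounded linear maps, point evaluations, and --- by Fubini --- with one another.

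For (i), expand $\langle L_\theta f,v\rangle=\int\langle\nabla f(G_\theta(z)),J_\theta G_\theta(z)^\top v\rangle\,\nu(dz)$, substitute $\nabla f(G_\theta(z))=\langle f,\nabla_x k(G_\theta(z),\cdot)\rangle_\calH$, and pull $f$ out of the $z$-integral by continuity of $\langle f,\cdot\rangle_\calH$; this is precisely $\langle f,L_\theta^\top v\rangle_\calH$, and the two displayed formulas for $L_\theta^\top$ agree by the chain rule $\nabla_\theta[k(G_\theta(z),\cdot)]=J_\theta G_\theta(z)\nabla_x k(G_\theta(z),\cdot)$. Item (ii) is then formal: $\langle D_\theta f,g\rangle_\calH=\langle L_\theta f,L_\theta g\rangle$ by (i), which is symmetric in $(f,g)$. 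For (iii), the same computation gives $\langle f,D_\theta f\rangle_\calH=\nor{L_\theta f}^2\ge 0$, and it remains to identify $L_\theta f$ with a gradient: by the pushforward change of variables $\int f\,dq_\theta=\int f(G_\theta(z))\,\nu(dz)$, differentiating under the integral sign (legitimate since $\abs{f(G_\theta(z))-f(G_{\theta'}(z))}\le\nor{f}_\calH L\,D(z)\nor{\theta-\theta'}$ with $D\in L^1(\nu)$) and using the chain rule gives $\nabla_\theta\!\int f\,dq_\theta=\int J_\theta G_\theta(z)\nabla f(G_\theta(z))\,\nu(dz)=L_\theta f$; since $\int f\,dq_\theta=\langle f,\bmu_{q_\theta}\rangle_\calH$ by the mean-embedding identity of Section~2, all three expressions coincide and the positivity and spectral claims follow.

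For (iv), I would evaluate $D_\theta f=L_\theta^\top(L_\theta f)$ at a point. Writing $\partial_\theta k(G_\theta(z),\cdot):=J_\theta G_\theta(z)\nabla_x k(G_\theta(z),\cdot)$ and using the differentiated reproducing property, $L_\theta f=\big\langle f,\int\partial_\theta k(G_\theta(z'),\cdot)\,\nu(dz')\big\rangle_\calH\in\R^p$; and for $v\in\R^p$, the reproducing property at $x$ gives $(L_\theta^\top v)(x)=\int\langle\partial_\theta k(G_\theta(z),x),v\rangle\,\nu(dz)$. Substituting $v=L_\theta f$ and merging the two $z$-integrals by Fubini yields $(D_\theta f)(x)=\big\langle f,\iint\langle\partial_\theta k(G_\theta(z),x),\partial_\theta k(G_\theta(z'),\cdot)\rangle\,\nu(dz)\nu(dz')\big\rangle_\calH=\langle D(x,\cdot),f\rangle_\calH$, which is the asserted formula (the two kernel arguments playing symmetric roles). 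The trace reformulation is then the identity $a^\top M b=\mathrm{Trace}\big(a\otimes(Mb)\big)$ applied with $a=\nabla_x k(G_\theta(z),y)$, $b=\nabla_x k(G_\theta(z'),y')$, and $M=J_\theta G_\theta(z)^\top J_\theta G_\theta(z')=\Gamma_\theta(z,z')$. The only genuinely delicate step in the whole proof is the differentiated reproducing property together with the Bochner-integrability and measurability bookkeeping it feeds into; everything else is the chain rule, Fubini, and dominated convergence.
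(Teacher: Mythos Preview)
Your proof is correct and follows essentially the same route as the paper: both establish (i) by expanding $\langle L_\theta f,v\rangle$ and invoking the reproducing property for $\nabla f$, derive (ii)--(iii) from (i) via $\langle D_\theta f,g\rangle_\calH=\langle L_\theta f,L_\theta g\rangle$, and obtain (iv) by expanding $\langle L_\theta f,L_\theta g\rangle$ (you do this by evaluating $L_\theta^\top(L_\theta f)$ at a point, the paper by computing the bilinear form directly, which is the same calculation). The main difference is that you take extra care to justify the differentiated reproducing property and the Bochner/Fubini bookkeeping from the standing Lipschitz hypotheses, whereas the paper treats these steps formally.
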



\section{A Novel Parametric Energy Regularization of MMD}\label{sec:mmdalphabeta}

Let us introduce a  parametric energy regularization of $\mathrm{MMD}$ and this notion of discrepancy  will play a central role in this paper. For  parameters $\alpha,\, \beta\geq 0$, the regularized discrepancy   between a given probability distribution $p$ on $\Omega$  and a parametric distribution $q_{\theta} \in \mathcal{P}_{\Theta}$ is defined  by
\begin{align}\label{MMD}
&\mathrm{MMD}_{\alpha,\beta}(p,q_{\theta}) :=\sup_{f\in E_{\alpha,\beta}} \left\{\int_{\Omega} f(x)\, p(dx)- \int_{\Omega} f(x)\,q_\theta(dx)\right\}
\end{align}
with  $E_{\alpha,\beta}:=  \Big\{ f\in\mathcal{H}: \,  \,  \alpha \nor{ \nabla_{\theta} \int f(x)  q_\theta(d x)  }^2_{\mathbb{R}^p}+\beta \nor{f}^2_{\mathcal{H}}\leq \frac12\Big\}$.  The case $\alpha=0$ and $\beta=1/2$  corresponds to the $\mathrm{MMD}$ \cite{MMD}, while the case $\alpha=1/2$ and $\beta=0$ shares  some similarity  with  the usual kernelized  Sobolev discrepancy \cite{SD}. The main difference in this definition  with the  Sobolev discrepancy is that the parametric energy $\|\nabla_\theta[\langle f, \bmu_{q_\theta} \rangle_\calH]\|_{\R^p}^2= \scalT{f}{D_{\theta}f}_{\cal{H}}$ is used in place of the standard energy $\|\nabla f\|_{L^2_{q_\theta}}^2$.

\begin{remark} Note that our regularization for  $\mathrm{MMD}$ while it shares similarities with WGAN-GP \cite{gulrajani2017improved}, it is different since our gradient penalty is with respect to the generator parameter whereas it is with respect to the input in WGAN-GP.
\end{remark}

 Hereafter, $I: \mathcal H \to \mathcal H$ denotes the identity operator. Then it follow from property iii) in Proposition~\ref{operator} that $\alpha D_\theta + \beta I$ is invertible whenever $\alpha\geq 0$ and $\beta>0$.
Proposition~\ref{operator} also allows us to express the constraint $ E_{\alpha,\beta}$ in \eqref{MMD} as
\begin{equation}\label{E}
    E_{\alpha,\beta}=  \Big\{ f\in\mathcal{H}: \,  \,  2\scalT{f}{(\alpha D_\theta + \beta I) f}_{\mathcal{H}} \leq 1\Big\}.
\end{equation}
This constraint  can be interpreted as a regularization through the following unconstrained formulation.
\begin{proposition}\label{duality} Let $\Delta_{\theta}(f) := \int f(x)\, p(dx)- \int f(x)\,q_\theta(dx)$.
For $\alpha\geq 0$ and $\beta>0$, we have 
\begin{align*}
&\mathrm{MMD}_{\alpha,\beta}(p,q_{\theta})^2\\
&= \sup_{f\in \mathcal{H}} \left\{ \Delta_{\theta}(f)  - \frac{\alpha}{2} \|\nabla_\theta[\langle f, \bmu_{q_\theta} \rangle_\calH]\|_{\R^p}^2-\frac{\beta}{2} \nor{f}^2_{\mathcal{H}}\right\}\\
&=\frac12  \scalT{\bmu_{p -q_\theta}}{(\alpha D_\theta  +\beta I)^{-1} \bmu_{p -q_\theta}}_{\mathcal{H}}.
\end{align*}
Moreover, the witness function $f^*$ realizing the above supremum is given by:
\begin{equation}\label{witness-fn}
   (\alpha D_\theta + \beta I) f^* = \bmu_{p -q_\theta}. 
\end{equation}
\end{proposition}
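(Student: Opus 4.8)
The plan is to reduce everything to a quadratic optimization over the Hilbert space $\calH$. Write $g := \bmu_{p-q_\theta}=\bmu_p-\bmu_{q_\theta}$ and $A := \alpha D_\theta + \beta I$. The reproducing identity $\int f\,d\rho = \langle f,\bmu_\rho\rangle_\calH$ gives $\Delta_\theta(f) = \langle f,g\rangle_\calH$, and property iii) of Proposition~\ref{operator} gives $\|\nabla_\theta[\langle f,\bmu_{q_\theta}\rangle_\calH]\|_{\R^p}^2 = \langle f, D_\theta f\rangle_\calH$. Hence the functional being maximized over $\calH$ in the statement equals $J(f) := \langle f,g\rangle_\calH - \tfrac12\langle f,Af\rangle_\calH$, while by \eqref{E} the original discrepancy is $\mathrm{MMD}_{\alpha,\beta}(p,q_\theta) = \sup\{\langle f,g\rangle_\calH : \langle f,Af\rangle_\calH \le \tfrac12\}$.

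For the second (closed-form) equality I would argue as follows. Since $D_\theta$ is a bounded positive operator (being the composition $L_\theta^\top L_\theta$ of bounded operators under \eqref{kG-cond-1}) and $\beta>0$, the operator $A$ is bounded, self-adjoint, and satisfies $\langle f,Af\rangle_\calH \ge \beta\|f\|_\calH^2$ for every $f$; by the spectral theorem $A$ therefore has a bounded inverse $A^{-1}$ and a bounded positive square root $A^{1/2}$ whose inverse $A^{-1/2}$ is also bounded. The functional $J$ is strictly concave because of the coercivity estimate above, so its unique maximizer is the critical point $f^\ast$ determined by the first-order condition $g - A f^\ast = 0$, that is $Af^\ast = g$, which is exactly \eqref{witness-fn}, and $f^\ast = A^{-1}g$. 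Substituting back and using self-adjointness, $J(f^\ast) = \langle f^\ast,g\rangle_\calH - \tfrac12\langle f^\ast, Af^\ast\rangle_\calH = \tfrac12\langle f^\ast,g\rangle_\calH = \tfrac12\langle g, A^{-1}g\rangle_\calH$.

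It remains to identify the squared constrained supremum defining $\mathrm{MMD}_{\alpha,\beta}(p,q_\theta)$ with $J(f^\ast)$. Here I would use the substitution $h = A^{1/2}f$, or equivalently Cauchy--Schwarz: for any $f\in\calH$,
\[
\langle f,g\rangle_\calH = \langle A^{1/2}f, A^{-1/2}g\rangle_\calH \le \|A^{1/2}f\|_\calH\,\|A^{-1/2}g\|_\calH = \sqrt{\langle f,Af\rangle_\calH}\,\sqrt{\langle g,A^{-1}g\rangle_\calH},
\]
so on the ellipsoid $\langle f,Af\rangle_\calH\le\tfrac12$ we obtain $\langle f,g\rangle_\calH \le \tfrac{1}{\sqrt2}\sqrt{\langle g,A^{-1}g\rangle_\calH}$, with equality for $f = A^{-1}g/\sqrt{2\langle g,A^{-1}g\rangle_\calH}$ when $g\ne 0$ (both sides vanish when $g=0$). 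Squaring gives $\mathrm{MMD}_{\alpha,\beta}(p,q_\theta)^2 = \tfrac12\langle g,A^{-1}g\rangle_\calH = J(f^\ast)$, which closes the chain of equalities; note that the maximizer of the constrained problem is a positive multiple of $f^\ast = A^{-1}g$, consistent with the witness function claim for the unconstrained formulation. The only point requiring care is the functional-analytic justification of $A^{-1/2}$ and the attainment of the suprema, but this is immediate since $A$ is boundedly invertible with spectrum in $[\beta,\infty)$; the rest is a routine completion-of-squares / Cauchy--Schwarz computation, so I do not anticipate a genuine obstacle.
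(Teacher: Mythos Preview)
Your proof is correct. Both you and the paper reduce to the Hilbert-space quadratic $J(f)=\langle f,g\rangle_\calH-\tfrac12\langle f,Af\rangle_\calH$ with $g=\bmu_{p-q_\theta}$ and $A=\alpha D_\theta+\beta I$, and both compute the unconstrained optimum by first-order optimality, obtaining $f^*=A^{-1}g$ and $J(f^*)=\tfrac12\langle g,A^{-1}g\rangle_\calH$. The genuine difference lies in how the constrained supremum defining $\mathrm{MMD}_{\alpha,\beta}$ is identified with this value. The paper invokes Lagrangian duality: it checks Slater's condition for the convex constraint $2\langle f,Af\rangle_\calH\le 1$, applies a strong-duality theorem to obtain a min-max over a multiplier $\gamma\ge 0$, solves the inner problem, and then minimizes $\tfrac{1}{8\gamma}\langle g,A^{-1}g\rangle_\calH+\gamma$ explicitly. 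You instead exploit the bounded invertibility of $A$ to form $A^{\pm 1/2}$ and apply Cauchy--Schwarz directly, $\langle f,g\rangle_\calH=\langle A^{1/2}f,A^{-1/2}g\rangle_\calH\le\sqrt{\langle f,Af\rangle_\calH}\sqrt{\langle g,A^{-1}g\rangle_\calH}$, with equality attained at a scalar multiple of $A^{-1}g$. Your route is more elementary in that it avoids any appeal to convex duality and works entirely within the spectral calculus of a single bounded positive operator; the paper's route is more systematic and would adapt more readily if the constraint set $E_{\alpha,\beta}$ were replaced by something not expressible as a sublevel set of a single positive quadratic form.
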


The next result shows that the regularized $\mathrm{MMD}_{\alpha,\beta}$ is  upper bounded by the $\mathrm{MMD}$, and gives a characterization on distributions for which the two discrepancies are the same. 
\begin{corollary}\label{upper-bound}
For $\alpha\geq 0$  and $\beta>0$, we have 
\[\sqrt{2\beta} \mathrm{MMD}_{\alpha,\beta}(p,q_{\theta})\leq 
 \mathrm{MMD}(p,q_{\theta}).
\]
In case  $\alpha>0$,  the equality happens if and only if $D_\theta \bmu_{p-q_\theta} =0$.
\end{corollary}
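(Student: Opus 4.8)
The plan is to reduce everything to the closed-form expression from Proposition~\ref{duality}, namely
\[
2\,\mathrm{MMD}_{\alpha,\beta}(p,q_{\theta})^2 = \scalT{w}{(\alpha D_\theta+\beta I)^{-1}w}_{\mathcal H},\qquad w:=\bmu_{p-q_\theta},
\]
and to exploit the symmetry and positivity of $D_\theta$ from Proposition~\ref{operator}ii)--iii). Write $A:=\alpha D_\theta+\beta I$ (invertible for $\beta>0$ by the remark following Proposition~\ref{operator}) and put $g:=A^{-1}w\in\mathcal H$, so that $w=Ag=\beta g+\alpha D_\theta g$. The crux is a single elementary computation: expanding $\scalT{w}{g}_{\mathcal H}=\scalT{Ag}{g}_{\mathcal H}=\beta\nor{g}_{\mathcal H}^2+\alpha\scalT{D_\theta g}{g}_{\mathcal H}$ and $\nor{w}_{\mathcal H}^2=\nor{Ag}_{\mathcal H}^2=\beta^2\nor{g}_{\mathcal H}^2+2\alpha\beta\scalT{D_\theta g}{g}_{\mathcal H}+\alpha^2\nor{D_\theta g}_{\mathcal H}^2$ (using that $D_\theta$ is symmetric), the cross terms cancel and one is left with
\[
\tfrac1\beta\nor{w}_{\mathcal H}^2 - \scalT{w}{A^{-1}w}_{\mathcal H} \;=\; \alpha\,\scalT{D_\theta g}{g}_{\mathcal H} + \tfrac{\alpha^2}{\beta}\,\nor{D_\theta g}_{\mathcal H}^2 .
\]

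By Proposition~\ref{operator}iii) the right-hand side is a sum of two nonnegative terms, hence $\scalT{w}{A^{-1}w}_{\mathcal H}\le\beta^{-1}\nor{w}_{\mathcal H}^2$; multiplying by $\beta$ and using $\nor{w}_{\mathcal H}=\mathrm{MMD}(p,q_\theta)$ gives $2\beta\,\mathrm{MMD}_{\alpha,\beta}(p,q_\theta)^2\le\mathrm{MMD}(p,q_\theta)^2$, and taking square roots yields the claimed inequality. For the equality characterization suppose $\alpha>0$. Then the displayed difference vanishes if and only if both nonnegative terms vanish; in particular $\nor{D_\theta g}_{\mathcal H}=0$, i.e. $D_\theta g=0$, whence $w=Ag=\beta g$, so $g=w/\beta$ and therefore $D_\theta\bmu_{p-q_\theta}=D_\theta w=\beta D_\theta g=0$. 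Conversely, if $D_\theta\bmu_{p-q_\theta}=0$ then $g:=w/\beta$ solves $Ag=\beta g+\alpha D_\theta g=w$, so $A^{-1}w=w/\beta$ and the left-hand side above is $0$, giving $\sqrt{2\beta}\,\mathrm{MMD}_{\alpha,\beta}(p,q_\theta)=\mathrm{MMD}(p,q_\theta)$.

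I do not anticipate a genuine obstacle: the whole statement follows from the above two-line operator identity plus the "a sum of nonnegatives is zero" observation, and this is valid verbatim in the possibly infinite-dimensional RKHS $\mathcal H$ since it never invokes spectral calculus — it uses only that $w,g\in\mathcal H$, that $A^{-1}$ is well defined, and the algebraic/positivity facts already recorded in Proposition~\ref{operator}. The one point worth a sentence is that when $\alpha=0$ one trivially has $A=\beta I$ and equality always holds, which is why the corollary restricts the equality discussion to $\alpha>0$.
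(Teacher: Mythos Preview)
Your proof is correct and follows essentially the same approach as the paper. Both arguments derive the identical key identity (after multiplying yours by $\beta$ and writing $g=f^*$): $\mathrm{MMD}(p,q_\theta)^2-2\beta\,\mathrm{MMD}_{\alpha,\beta}(p,q_\theta)^2=\alpha^2\|D_\theta f^*\|_{\calH}^2+\alpha\beta\langle f^*,D_\theta f^*\rangle_{\calH}$, then use positivity of $D_\theta$; for the equality case both reduce to showing $D_\theta f^*=0\Leftrightarrow D_\theta\bmu_{p-q_\theta}=0$, with your converse (checking directly that $w/\beta$ solves $Ag=w$) being marginally more direct than the paper's (which applies $D_\theta$ to the witness equation and takes an inner product).
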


\section{Generative Adversarial Networks via Parametric Regularized Flows}\label{sec:mmdflows}
 Let $p$ be the target distribution which is a probability measure  on $\Omega$.  Consider the  functional $\mathcal{F}(q_{\theta}) = \frac12 \text{MMD}^2(p,q_{\theta}) = \frac{1}{2}\nor{\bmu_{p -q_\theta}}^2_{\mathcal{H}}.$
We now focus on  the MMD GAN problem:
\begin{align*}
\min_{q\in \mathcal{P}_{\Theta}} \mathcal{F}(q)&=\min_{\theta \in \Theta } \mathcal{F}(q_{\theta})\\
&=\min_{\theta \in \Theta }  \frac{1}{2} \nor{\mathbb{E}_p k(x,.)- \mathbb{E}_{z\sim \nu} k(G_{\theta}(z),.) }^2_{\mathcal{H}}.
\end{align*}
This problem  has been investigated  in several works \cite{mmdGAN1,mmdGAN2,li2017mmd,SD,arbel2019maximum}.
In this section we propose a new descent direction in the parameter space of the generator.\\

\noindent \textbf{Continuous Descent.}        
\noindent We consider the following dynamic for any sequence of functions $f_{t}\in \mathcal{H}, t\geq 0$: 
\begin{equation}\label{theta-eq1}
    \frac{d \theta_{t}}{dt} = L_{\theta_t}(f_t)=\int  J_{\theta_t }G_{\theta_t}(z')  \nabla_{x}f_t(G_{\theta_t}(z'))\nu(d z').
\end{equation}

For a given $z\in \mathcal{Z}$, the dynamic of the generator is as follows
\begin{align}
\frac{d G_{\theta_{t}} (z)  }{d t }&=  J_{\theta_t}G_{\theta_{t}}(z)^{\top}  \frac{d \theta_{t}}{dt}\\
 &=\int J_{\theta_t}G_{\theta_{t}}(z)^{\top}JG_{\theta_t}(z')  \nabla_{x}f_t(G_{\theta_t}(z'))\nu(d z')  \nonumber \\
&= \int \Gamma_{\theta_{t}}(z,z') \nabla_{x}f_t(G_{\theta_t}(z'))\nu(dz'). 
\label{eq:Generatordynamic}
\end{align}
While the dynamic of particles is usually given by  velocities defined  at each particle, the generator's dynamic at a given $z$ is the average of the mass corrected velocities of all other samples from the generator. The mass correction is driven by the matrix valued kernel $\Gamma_{\theta_{t}}$ that defines a similarity in the hidden space $\mathcal{Z}$. 
For example in particles descent such as Sobolev Descent \cite{SD} the dynamic of particles $X_t$ is given by:
$$\frac{dX_t}{dt}=\nabla_x f(X_t),$$
 and  this simple advection dynamic is to be contrasted with the generator dynamic \eqref{eq:Generatordynamic}.\\

Using  the dynamic of the generator in \eqref{eq:Generatordynamic} and item iv) in Proposition~\ref{operator}, we also have the following dynamic of the mean embedding:
\begin{align}\label{rate-kernel-embedding}
 &\frac{d}{dt} \bmu_{q_{\theta_t}}=\frac{d}{dt} \int k( G_{\theta_t}(z), . ) \nu(dz) \nonumber\\
&= \int  \scalT{\nabla_x k( G_{\theta_t}(z), . )}{ \frac{d G_{\theta_{t}} (z)  }{d t } }  \nu(dz)\nonumber\\ 
 & =\iint  \scalT{\nabla_x k( G_{\theta_t}(z), . )}{  \Gamma_{\theta_{t}}(z,z') \nabla_{x}f_t(G_{\theta_t}(z'))}\nu(dz')  \nu(dz)\nonumber\\
&=D_{\theta_t} f_{t}.
\end{align}

Given the dynamic of the mean embedding in \eqref{rate-kernel-embedding} it is easy to derive the dynamic of the $\mathrm{MMD}$ distance:
\begin{eqnarray}
\frac{ d \mathcal{F}(q_{\theta_{t}})}{dt}
= \scalT{\bmu_{p -q_{\theta_t}}}{ - \frac{d}{dt} \bmu_{q_{\theta_t}} }_{\mathcal{H}}
= - \scalT{\bmu_{p -q_{\theta_t}}}{D_{\theta_t} f_{t} }_{\mathcal{H}}.
\label{eq:mmddynamic}
\end{eqnarray}

Let us consider the following choices for the sequence $f_{t}$:\\

\noindent $\bullet$ \textbf{Witness functions of $\mathrm{MMD}$.} In that case, we set  $f_{t}= \bmu_{p-q_{\theta_t}}$. Using the generator updates given in \eqref{theta-eq1}, we have therefore:
$$\frac{ d \mathcal{F}(q_{\theta_{t}})}{dt}= - \scalT{\bmu_{p -q_{\theta_t}}}{D_{\theta_t} \bmu_{p -q_{\theta_t}} }_{\mathcal{H}} \leq 0.$$
This is a valid descent direction, and is similar in spirit to the MMD flows of \cite{Arbel2020Kernelized}. Nevertheless as shown for the particles case in \cite{Arbel2020Kernelized}, it does not lead to convergence. In the discrete case, \cite{Arbel2020Kernelized} introduced a noising scheme that has convergence guarantees.\\ 

\noindent $\bullet$ \textbf{Witness functions of $\mathrm{MMD}_{1,0}$.} In that case,  let us assume that  solutions $f_t$   of
$D_{\theta_t}f_{t}= \bmu_{p-q_{\theta_t}}$ exist. Then by  using $d \theta_{t}= L_{\theta_t}(f_t) dt$
we obtain: 
\[\frac{ d \mathcal{F}(q_{\theta_{t}})}{dt}= - \scalT{\bmu_{p-q_{\theta_t}}}{\bmu_{p-q_{\theta_t}}}=  - 2 \mathcal{F}(q_{\theta_{t}}).\]
While this seems to be  the ideal choice as it gives us an exponential convergence, it comes with the caveat that $D_{\theta_{t}}$ may be singular and hence we have either no solution or infinitely many solutions for $f_t$. These derivations and the singularity issue of operator $D_\theta$ motivated the introduction of  $\mathrm{MMD}_{\alpha,\beta}$, and we consider hereafter  its flows.\\

\noindent $\bullet$ \textbf{Witness functions of $\mathrm{MMD}_{\alpha,\beta}$.} Let 
$f_{t}$ be the unique  witness function   of $\mathrm{MMD}_{\alpha,\beta}$ between $p$  and  $q_{\theta_{t}}= (G_{\theta_t})_{\#}\nu$ given by
\begin{equation}\label{theta-eq2}
(\alpha D_{\theta_{t}}+ \beta I ) f_{t} = \bmu_{p -q_{\theta_t}}.
\end{equation}
Theorem~\ref{theo:statisticalManifoldDescent} below gives the dynamic of the $\mathrm{MMD}$ when the generator parameters are updated according to Equation \eqref{theta-eq1} with $f_t$ being the  $\mathrm{MMD}_{\alpha,\beta}$ witness functions  given in Equation~\eqref{theta-eq2}.

\begin{theorem} [Parametric Regularized Flows Decrease the $\mathrm{MMD}$ Distance] Assume that $\alpha,\, \beta>0$.
Then the dynamic \eqref{theta-eq1}--\eqref{theta-eq2} defined by the witness function of the parametric regularized  $\mathrm{MMD}$ decreases the functional $\calF(q_\theta)$:
\begin{equation}\label{rate}
\frac{ d \mathcal{F}(q_{\theta_{t}})}{dt} = -\frac{2}{\alpha} \Big[\mathcal{F}(q_{\theta_{t}}) - \beta \,  \mathrm{MMD}_{\alpha,\beta}(p,q_{\theta_t})^2 \Big] \leq 0.
\end{equation}
Moreover, we have $\frac{ d \mathcal{F}(q_{\theta_{t}})}{dt} <0$  if and only if  $D_{\theta_t} \bmu_{p-q_{\theta_t}}\neq 0$.
\label{theo:statisticalManifoldDescent}
\end{theorem}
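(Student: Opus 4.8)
The plan is to start from the dynamic of the $\mathrm{MMD}$ already recorded in \eqref{eq:mmddynamic},
\[
\frac{ d \mathcal{F}(q_{\theta_{t}})}{dt}= - \scalT{\bmu_{p -q_{\theta_t}}}{D_{\theta_t} f_{t} }_{\mathcal{H}},
\]
and to simplify it in two complementary ways using the defining equation \eqref{theta-eq2} for the witness function. First, since $\alpha>0$, \eqref{theta-eq2} gives $\alpha D_{\theta_t}f_t=\bmu_{p-q_{\theta_t}}-\beta f_t$, and substituting this yields
\[
\frac{ d \mathcal{F}(q_{\theta_{t}})}{dt}=-\frac1\alpha\Big[\nor{\bmu_{p-q_{\theta_t}}}_{\mathcal H}^2-\beta\,\scalT{\bmu_{p-q_{\theta_t}}}{f_t}_{\mathcal H}\Big].
\]
By definition $\nor{\bmu_{p-q_{\theta_t}}}_{\mathcal H}^2=2\mathcal F(q_{\theta_t})$, and by Proposition~\ref{duality} the function $f_t=(\alpha D_{\theta_t}+\beta I)^{-1}\bmu_{p-q_{\theta_t}}$ is exactly the $\mathrm{MMD}_{\alpha,\beta}$ witness function, which satisfies $\mathrm{MMD}_{\alpha,\beta}(p,q_{\theta_t})^2=\tfrac12\scalT{\bmu_{p-q_{\theta_t}}}{f_t}_{\mathcal H}$, i.e.\ $\scalT{\bmu_{p-q_{\theta_t}}}{f_t}_{\mathcal H}=2\,\mathrm{MMD}_{\alpha,\beta}(p,q_{\theta_t})^2$. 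Plugging both identities in gives precisely \eqref{rate}.

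For the sign, I would instead substitute the other way into \eqref{eq:mmddynamic}: write $\bmu_{p-q_{\theta_t}}=(\alpha D_{\theta_t}+\beta I)f_t=\alpha D_{\theta_t}f_t+\beta f_t$, so that
\[
\frac{ d \mathcal{F}(q_{\theta_{t}})}{dt}=-\scalT{\alpha D_{\theta_t}f_t+\beta f_t}{D_{\theta_t}f_t}_{\mathcal H}=-\alpha\,\nor{D_{\theta_t}f_t}_{\mathcal H}^2-\beta\,\scalT{f_t}{D_{\theta_t}f_t}_{\mathcal H}.
\]
Both terms on the right are $\le 0$: the first trivially, the second by the positivity of $D_{\theta_t}$ (Proposition~\ref{operator}(iii)). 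Hence $\tfrac{d}{dt}\mathcal F(q_{\theta_t})\le 0$. Combined with \eqref{rate} this also re-derives the inequality $\mathcal F(q_{\theta_t})\ge\beta\,\mathrm{MMD}_{\alpha,\beta}(p,q_{\theta_t})^2$, consistent with Corollary~\ref{upper-bound}.

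For the equivalence, the last display shows $\tfrac{d}{dt}\mathcal F(q_{\theta_t})=0$ if and only if $D_{\theta_t}f_t=0$: vanishing of $\alpha\nor{D_{\theta_t}f_t}_{\mathcal H}^2$ already forces this, and conversely $D_{\theta_t}f_t=0$ kills both terms. It remains to check $D_{\theta_t}f_t=0\iff D_{\theta_t}\bmu_{p-q_{\theta_t}}=0$. If $D_{\theta_t}f_t=0$ then \eqref{theta-eq2} gives $\bmu_{p-q_{\theta_t}}=\beta f_t$, hence $D_{\theta_t}\bmu_{p-q_{\theta_t}}=\beta D_{\theta_t}f_t=0$. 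Conversely, applying $D_{\theta_t}$ to \eqref{theta-eq2} gives $D_{\theta_t}\bmu_{p-q_{\theta_t}}=(\alpha D_{\theta_t}+\beta I)(D_{\theta_t}f_t)$, so if $D_{\theta_t}\bmu_{p-q_{\theta_t}}=0$ then invertibility of $\alpha D_{\theta_t}+\beta I$ (valid for $\alpha\ge0,\ \beta>0$) forces $D_{\theta_t}f_t=0$. Therefore $\tfrac{d}{dt}\mathcal F(q_{\theta_t})<0$ exactly when $D_{\theta_t}\bmu_{p-q_{\theta_t}}\ne0$.

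I do not anticipate a genuine obstacle: the only nontrivial ingredient, the identity \eqref{eq:mmddynamic}, is already established in the excerpt (it comes from differentiating the mean embedding, \eqref{rate-kernel-embedding}), and everything else is linear algebra in $\mathcal H$ built on Propositions~\ref{operator} and~\ref{duality}. The one point worth a sentence of care is that the coupled flow \eqref{theta-eq1}--\eqref{theta-eq2} be well posed, so that a trajectory $\theta_t$ along which the computation is carried out actually exists; this rests on the invertibility of $\alpha D_\theta+\beta I$ together with the Lipschitz assumptions \eqref{kG-cond-1}, and I would either import it from the surrounding development or record it briefly.
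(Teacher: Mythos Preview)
Your proposal is correct and follows essentially the same route as the paper. The paper's proof cites identity \eqref{useful-iden} and Corollary~\ref{upper-bound} for the equality, sign, and strict-inequality characterization, and those are precisely the computations you carry out inline (your two substitutions of \eqref{theta-eq2} into \eqref{eq:mmddynamic} reproduce \eqref{useful-iden}, and your equivalence argument matches the one in the proof of Corollary~\ref{upper-bound}, with your use of invertibility of $\alpha D_{\theta_t}+\beta I$ being a slightly cleaner variant of the inner-product argument there).
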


We see from Theorem~\ref{theo:statisticalManifoldDescent} that  $\mathrm{MMD}_{\alpha,\beta}$  witness functions alleviate the singularity issue of $D_{\theta_t}$, but slows down the convergence by introducing a damping term that is proportional to $\mathrm{MMD}_{\alpha,\beta}^2$.

\section{Non-Asymptotic Convergence Of Gradient Descent In MMD GAN }\label{sec:gradientmmd}

In Section \ref{sec:mmdflows}, we  showed that $\mathrm{MMD}_{\alpha,\beta}$ witness functions provide descent directions for continuous MMD GAN. In this section we turn to (discrete) gradient descent in the parameter space of the generator, and give non asymptotic convergence results for gradient descent  in regularized MMD GANs.\\
 
\noindent \textbf{Discrete  Descent Directions.} We would like to identify  directions of $\theta$ along which the functional $\calF(q_\theta) $ decreases its value. For this, let us  compute the rate 
$\frac{d}{d\e}\big|_{\e=0}   
\calF(q_{\theta +\e v})$ for each vector $v\in \R^p$.  
\begin{lemma}\label{compute-rate}
Let $\theta\in\Theta$. Then for any vector $v\in \R^p$, we have 
\begin{equation*}
 \frac{d}{d\e}  
\calF(q_{\theta +\e v})
 =- \langle  \bmu_{p-q_{\theta+\e v}}, L_{\theta+\e v}^{\top} v \rangle_{\calH}\quad \mbox{for any}\quad \e\geq 0.
\end{equation*}
     \end{lemma}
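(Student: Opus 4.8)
The plan is to expand $\calF(q_{\theta+\e v}) = \tfrac12\|\bmu_p - \bmu_{q_{\theta+\e v}}\|_{\calH}^2$ and differentiate in $\e$ by the Hilbert-space product rule, so that the lemma reduces to computing the derivative of the $\calH$-valued curve $\e\mapsto \bmu_{q_{\theta+\e v}}$ and identifying it with $L_{\theta+\e v}^{\top}v$. Since $\Theta$ is open, $\theta+\e v\in\Theta$ for all $\e$ in a neighbourhood of any given nonnegative value, so every quantity below is well defined there.

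First I would fix $f\in\calH$ and differentiate the scalar map
\[
\e\longmapsto \langle f,\bmu_{q_{\theta+\e v}}\rangle_{\calH} = \int_{\Omega} f\, q_{\theta+\e v}(dx) = \int_{\mathcal{Z}} f\big(G_{\theta+\e v}(z)\big)\,\nu(dz),
\]
the last equality being the definition of the pushforward together with $\int f\, q_\theta = \langle f,\bmu_{q_\theta}\rangle_\calH$. By differentiability of $G_\theta$ in $\theta$ and the chain rule, for a.e. $z$ one has $\tfrac{d}{d\e}G_{\theta+\e v}(z) = J_{\theta}G_{\theta+\e v}(z)^{\top}v$ and hence $\tfrac{d}{d\e}f(G_{\theta+\e v}(z)) = \langle \nabla f(G_{\theta+\e v}(z)),\, J_{\theta}G_{\theta+\e v}(z)^{\top}v\rangle$. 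To differentiate under the integral sign I would dominate the difference quotients: by the reproducing property and the first bound in \eqref{kG-cond-1}, $|f(a)-f(b)| = |\langle f, k(a,\cdot)-k(b,\cdot)\rangle_\calH| \le L\|f\|_{\calH}\|a-b\|$, while the second bound gives $\|G_{\theta+\e v}(z)-G_{\theta+\e' v}(z)\| \le D(z)\,|\e-\e'|\,\|v\|$; combined, the difference quotients are bounded by $L\|f\|_{\calH}\|v\|\,D(z)\in L^1(\nu)$ because $\E_\nu[D(z)^2]<\infty$. Dominated convergence then yields
\[
\frac{d}{d\e}\,\langle f,\bmu_{q_{\theta+\e v}}\rangle_{\calH} = \int_{\mathcal{Z}} \big\langle \nabla f(G_{\theta+\e v}(z)),\, J_{\theta}G_{\theta+\e v}(z)^{\top}v\big\rangle\,\nu(dz) = \langle f,\, L_{\theta+\e v}^{\top}v\rangle_{\calH},
\]
where the last identity is the definition of $L_{\theta}^{\top}$ (equivalently it equals $\langle L_{\theta+\e v}f, v\rangle$, which is $\langle f, L_{\theta+\e v}^{\top}v\rangle_\calH$ by the adjoint relation of Proposition~\ref{operator}(i)). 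Since $f$ is arbitrary, $\e\mapsto\bmu_{q_{\theta+\e v}}$ is weakly differentiable with derivative $L_{\theta+\e v}^{\top}v$, and the same two bounds in \eqref{kG-cond-1} show $\e\mapsto\bmu_{q_{\theta+\e v}}$ is norm-continuous.

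I would then conclude with the product rule. Writing $u(\e):=\bmu_p-\bmu_{q_{\theta+\e v}}=\bmu_{p-q_{\theta+\e v}}$, weak differentiability of $\bmu_{q_{\theta+\e v}}$ together with its norm-continuity suffices: the product of a weakly convergent sequence and a strongly convergent sequence in $\calH$ converges, which gives $\tfrac{d}{d\e}\|u(\e)\|_{\calH}^2 = 2\langle u(\e), u'(\e)\rangle_{\calH}$. Hence
\[
\frac{d}{d\e}\,\calF(q_{\theta+\e v}) = \frac12\,\frac{d}{d\e}\|u(\e)\|_{\calH}^2 = \big\langle u(\e),\, u'(\e)\big\rangle_{\calH} = -\,\big\langle \bmu_{p-q_{\theta+\e v}},\, L_{\theta+\e v}^{\top}v\big\rangle_{\calH},
\]
which is the assertion.

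The main obstacle is the rigorous justification of differentiation under the integral sign in this $\calH$-valued setting, i.e. exhibiting the integrable dominating function for the difference quotients and making sense of $\nabla f$ for $f\in\calH$ (which is implicit in the paper's use of $\nabla_x k(G_\theta(z),\cdot)$); assumption \eqref{kG-cond-1}, namely the kernel Lipschitz bound and $\E_\nu[D(z)^2]<\infty$, is precisely what makes this work, and the remaining steps are the elementary Hilbert-space product rule. Working with the weak derivative (rather than a norm derivative) of $\e\mapsto\bmu_{q_{\theta+\e v}}$ is what keeps the required regularity of $k$ to a minimum.
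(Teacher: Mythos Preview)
Your proof is correct and takes a genuinely different route from the paper's. The paper expands $\mathrm{MMD}(p,q)^2$ as the double kernel integral $\iint k(x,y)\,q(dx)q(dy) - 2\int \bmu_p(x)\,q(dx) + \|\bmu_p\|_\calH^2$, differentiates each piece in $\e$ by a direct chain-rule computation in the $z$-integrals, uses the symmetry $\nabla_x k(z,z')=\nabla_y k(z',z)$ to merge the two cross terms, and only then identifies the resulting expression with $\langle \bmu_{q_{\theta+\e v}}, L_{\theta+\e v}^\top v\rangle_\calH - \langle L_{\theta+\e v}\bmu_p, v\rangle$. Your argument instead works entirely at the Hilbert-space level: you first establish the identity $\frac{d}{d\e}\langle f,\bmu_{q_{\theta+\e v}}\rangle_\calH = \langle f, L_{\theta+\e v}^\top v\rangle_\calH$ for arbitrary $f\in\calH$ (i.e.\ weak differentiability of the embedding curve), and then apply the product rule for $\|u(\e)\|_\calH^2$.

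What each approach buys: the paper's computation is self-contained and concrete but silently assumes differentiation under the integral sign. Your argument makes this step explicit via the domination $L\|f\|_\calH\|v\|D(z)\in L^1(\nu)$ coming from \eqref{kG-cond-1}, and your use of weak (rather than norm) differentiability of $\e\mapsto\bmu_{q_{\theta+\e v}}$ combined with its norm-continuity is what lets the product-rule step go through without invoking the extra Lipschitz condition \eqref{extra-1} on $\nabla_x k$. As a side benefit, you isolate the reusable fact that the weak derivative of the mean-embedding curve is $L_{\theta+\e v}^\top v$, which is essentially the content of \eqref{rate-kernel-embedding} in the paper. One small notational point: write $J_{\theta+\e v}G_{\theta+\e v}(z)$ rather than $J_\theta G_{\theta+\e v}(z)$ to match the paper's convention that the subscript records the evaluation point.
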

  
 Lemma~\ref{compute-rate}  implies that $v$ is a descent direction of $\calF(q_\theta )$ if and only if $\langle  \bmu_{p-q_{\theta}}, L_\theta^{\top} v \rangle_{\calH}>0$. The  next result gives one such direction.
\begin{proposition} \label{rate0}
Let $\theta \in \Theta$, and assume that $D_\theta \bmu_{p-q_{\theta}} \neq 0$. Let $v^*= L_\theta f$ with $f\in \calH$ being the solution of $(\alpha D_\theta +\beta I)f =\bmu_{p-q_{\theta}}$. Then $v^*$ is a descent direction of $\calF(q_\theta )$. Precisely, we have
\begin{equation*}
 \frac{d}{d\e}\Big|_{\e=0}   
\calF(q_{\theta +\e v^*})
 =- \Big[\alpha \| D_\theta f\|_{\calH}^2  
+\beta \langle f, D_\theta f \rangle_{\calH}\Big] <0.
\end{equation*}
In particular, 
$\calF(q_{\theta +\e v^*}) < \calF(q_{\theta})$
if $\e>0$ is small enough.
 \end{proposition}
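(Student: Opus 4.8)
The plan is to apply Lemma~\ref{compute-rate} at $\e = 0$ with the specific choice $v = v^* = L_\theta f$, and then repeatedly use the adjoint relation (Proposition~\ref{operator}, item~i)) together with $D_\theta = L_\theta^\top L_\theta$ to rewrite the resulting inner product in terms of $D_\theta$ and $f$ alone. First I would write, from Lemma~\ref{compute-rate},
\begin{equation*}
\frac{d}{d\e}\Big|_{\e=0} \calF(q_{\theta+\e v^*}) = -\langle \bmu_{p-q_\theta}, L_\theta^\top v^* \rangle_{\calH}.
\end{equation*}
Since $f$ solves $(\alpha D_\theta + \beta I)f = \bmu_{p-q_\theta}$, I substitute this expression for $\bmu_{p-q_\theta}$ and also substitute $v^* = L_\theta f$, so that $L_\theta^\top v^* = L_\theta^\top L_\theta f = D_\theta f$. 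This gives
\begin{equation*}
\frac{d}{d\e}\Big|_{\e=0} \calF(q_{\theta+\e v^*}) = -\langle (\alpha D_\theta + \beta I)f, D_\theta f \rangle_{\calH} = -\big[\alpha \langle D_\theta f, D_\theta f\rangle_{\calH} + \beta \langle f, D_\theta f\rangle_{\calH}\big],
\end{equation*}
using symmetry of $D_\theta$ (item~ii)) for the $\beta$-term, which is exactly $-[\alpha\|D_\theta f\|_{\calH}^2 + \beta\langle f, D_\theta f\rangle_{\calH}]$.

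It remains to show this quantity is strictly negative under the hypothesis $D_\theta\bmu_{p-q_\theta}\neq 0$. By item~iii) of Proposition~\ref{operator}, $\langle f, D_\theta f\rangle_{\calH}\geq 0$ and $\|D_\theta f\|_{\calH}^2\geq 0$, so the bracket is nonnegative; I need to rule out its being zero. If the bracket vanished, then since $\alpha,\beta>0$ both terms vanish, so $D_\theta f = 0$; but then $\bmu_{p-q_\theta} = (\alpha D_\theta + \beta I)f = \alpha D_\theta f + \beta f = \beta f$, hence $f = \beta^{-1}\bmu_{p-q_\theta}$, and therefore $D_\theta\bmu_{p-q_\theta} = \beta D_\theta f = 0$, contradicting the hypothesis. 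So the bracket is strictly positive and the derivative is strictly negative.

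Finally, for the ``in particular'' clause: the map $\e\mapsto \calF(q_{\theta+\e v^*})$ is differentiable on $[0,\infty)$ by Lemma~\ref{compute-rate} (which asserts the rate formula for all $\e\geq 0$), and its right derivative at $\e = 0$ is strictly negative; hence there exists $\e_0>0$ such that $\calF(q_{\theta+\e v^*}) < \calF(q_\theta)$ for all $\e\in(0,\e_0)$. I do not anticipate a serious obstacle here — the argument is essentially bookkeeping with the operator identities — but the one point to be careful about is the invertibility/well-posedness needed to even speak of the solution $f$: this is guaranteed by the remark before Proposition~\ref{duality} that $\alpha D_\theta + \beta I$ is invertible when $\alpha\geq 0,\beta>0$, so $f$ exists and is unique, and the displayed manipulations are all legitimate.
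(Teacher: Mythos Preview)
Your proof is correct and follows essentially the same approach as the paper: apply Lemma~\ref{compute-rate} at $\e=0$, use $L_\theta^\top v^* = D_\theta f$ and the defining equation for $f$ to obtain the displayed identity, and then argue strict negativity via $D_\theta f \neq 0$. The only cosmetic difference is that the paper cites the proof of Corollary~\ref{upper-bound} for the implication $D_\theta \bmu_{p-q_\theta}\neq 0 \Rightarrow D_\theta f\neq 0$, whereas you reprove that implication inline; the content is identical.
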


\noindent \textbf{Discrete Time Descent for MMD GAN.}
Hereafter, we denote $\|h\|_\calH : = \Big(\sum_{i=1}^d \|h_i\|_\calH^2\Big)^\frac12 $ for $h=(h_1,...,h_d)\in \calH^d$ and $\|A\| := (\sum_{i, j} a_{ij}^2)^\frac12 $ for a matrix $A=(a_{i j})$. The next result holds under the following extra conditions for $k$ and $G$:
\begin{align}
&\| \nabla_x k(z_1,.) - \nabla_x k(z_2,.)\|_\calH\leq \tilde L \, \|z_1-z_2\|, \label{extra-1} \\
&\|J_{\theta} G_{\theta}(z) - J_{\theta'} G_{\theta'}(z)\| \leq \tilde D(z)\,  \|\theta -\theta'\|\label{extra-2}
\end{align}
with $\E_{\nu}[\tilde D(z)]<\infty$. 

In Theorem \ref{discrete-conv} below we  find conditions  under which we can achieve  global convergence in $\mathrm{MMD}$ of gradient descent in MMD GAN. When the kernel is  characteristic, this is equivalent to the global weak convergence  of gradient descent in MMD GAN to the target distribution.  
Gradient descent for MMD GAN is given by the following updates: for all  $\ell\geq 1$,the witness functions between $p$ and $q_{\theta_{\ell}}= (G_{\theta_{\ell}})_{\#}\nu$ update:
\begin{equation}
f_{\ell} := (\alpha_{\ell} D_{\theta_{\ell}}+\beta_{\ell} I )^{-1}\bmu_{p -q_{\theta_{\ell}}}
\label{witness}
\end{equation}
and the generator update,
\begin{equation}
\theta_{\ell+1} := \theta_{\ell}+ \varepsilon_{\ell} L_{\theta_{\ell}} (f_{\ell}),
\label{eq:update_theta}
\end{equation}
where $\alpha_{\ell},\beta_{\ell}$ is a sequence of regularization parameters, and $\varepsilon_{\ell}$ is  a sequence of learning rates.
 
\begin{theorem}\label{discrete-conv}
Assume that $k$ and $G$ satisfy \eqref{kG-cond-1}  and 
 \eqref{extra-1}--\eqref{extra-2}. Let $\lambda_{i}(\theta)>0$ be the smallest non-zero eigenvalue of $D_{\theta}$, and define $a(\theta,f) :=  1 - \frac{ || P_{\mathrm{Null}(D_{\theta})}f ||^2_{\calH}}{ ||f||^2_{\calH} }$ with  $P_{\mathrm{Null}(D_{\theta})}$ being the projection to the null space of $D_{\theta}$. Let $\calF(q_\theta)=\frac12\mathrm{MMD}(p,q_\theta)^2$.
Consider the gradient descent updates given in \eqref{witness} and \eqref{eq:update_theta}. 
Let  $0< \tau< 1$ and $\theta_1\in \Theta$ be the starting point chosen such that:
\begin{equation}
a(\theta_{1},f_{1}) > \tau.
\end{equation}
The sequence $\varepsilon_{\ell}$ is chosen so that the following two  conditions are satisfied for each $\ell$:
\begin{equation}\label{null-condition}
a(\theta_{\ell+1},f_{\ell+1}) > \tau
\end{equation}
and
\begin{equation}\label{step-size-cond}
2 C (2\beta_{\ell})^{-1} \big(1 + \sqrt{\calF(q_{\theta_1})}\big) \leq \e_\ell^{-1}
\end{equation}
with $C>0$ depending only on the constants $C_1, \, C_2, \, C_3, \, C_4$ given in Lemma~\ref{Lipschitz-est}.
 
Under the condition \eqref{null-condition} on $\e_{\ell}$ we have $\tau< a(\theta_{\ell},f_{\ell})\leq1$. Let $\chi_j := \frac{ \lambda_i(\theta_j)a(\theta_{j},f_{j})}{\alpha_j \lambda_i(\theta_j)a(\theta_{j},f_{j}) +\beta_j} >0$. Then we have:
\begin{align*}
    \calF(q_{\theta_{\ell +1}})   
    &\leq  \calF(q_{\theta_1 }) \exp(- \sum_{j=1}^{\ell}\e_j \chi_j) \quad   \forall \ell\geq 1.
  \end{align*}
In particular for $\alpha_{\ell}\leq \frac{ \tau}{2}$ and $\beta_{\ell}=\alpha_{\ell}\lambda_{i}(\theta_{\ell})$, we obtain $\chi_{\ell}\geq 1$ and it follows that:
\begin{align*}
    \calF(q_{\theta_{\ell +1}})   
    &\leq  \calF(q_{\theta_1 }) \exp(- \sum_{j=1}^{\ell}\e_j ).
  \end{align*}
\end{theorem}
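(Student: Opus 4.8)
\textbf{Proof proposal for Theorem~\ref{discrete-conv}.}

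The plan is to track the decrease of $\calF(q_{\theta_\ell})$ across one gradient step and then iterate. First I would establish a one-step descent inequality. Using a second-order Taylor-type expansion of $\e\mapsto \calF(q_{\theta_\ell+\e v})$ with $v=L_{\theta_\ell}(f_\ell)$, the first-order term is given by Lemma~\ref{compute-rate} and Proposition~\ref{rate0}, namely
\[
\frac{d}{d\e}\Big|_{\e=0}\calF(q_{\theta_\ell+\e v})
=-\Big[\alpha_\ell\|D_{\theta_\ell}f_\ell\|_\calH^2+\beta_\ell\langle f_\ell,D_{\theta_\ell}f_\ell\rangle_\calH\Big],
\]
while the remainder term must be controlled by a Lipschitz estimate on the gradient of $\e\mapsto\calF(q_{\theta_\ell+\e v})$; this is where the auxiliary Lemma~\ref{Lipschitz-est} (with constants $C_1,\dots,C_4$, hence $C$) and the extra regularity assumptions \eqref{extra-1}--\eqref{extra-2} enter. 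The step-size condition \eqref{step-size-cond} is exactly what is needed to make the quadratic remainder at most half of the linear gain, so that
\[
\calF(q_{\theta_{\ell+1}})\le \calF(q_{\theta_\ell})-\tfrac12\,\e_\ell\Big[\alpha_\ell\|D_{\theta_\ell}f_\ell\|_\calH^2+\beta_\ell\langle f_\ell,D_{\theta_\ell}f_\ell\rangle_\calH\Big].
\]
Here I would also use the a priori bound $\sqrt{\calF(q_{\theta_\ell})}\le\sqrt{\calF(q_{\theta_1})}$, which holds by induction once descent is known, to replace the $\ell$-dependent factor in \eqref{step-size-cond} by its value at $\theta_1$.

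Next I would lower-bound the bracketed quantity by a multiple of $\calF(q_{\theta_\ell})$. Write $g_\ell:=\bmu_{p-q_{\theta_\ell}}$, so $f_\ell=(\alpha_\ell D_{\theta_\ell}+\beta_\ell I)^{-1}g_\ell$ and $\calF(q_{\theta_\ell})=\tfrac12\|g_\ell\|_\calH^2$. Decomposing $g_\ell$ in the eigenbasis of the symmetric positive operator $D_{\theta_\ell}$ (Proposition~\ref{operator}, ii)--iii)), split into the null space and its orthogonal complement. On the null space $f_\ell=\beta_\ell^{-1}P_{\mathrm{Null}(D_{\theta_\ell})}g_\ell$ contributes nothing to $D_{\theta_\ell}f_\ell$; on the complement, every eigenvalue is $\ge\lambda_i(\theta_\ell)$, and the spectral calculus gives
\[
\alpha_\ell\|D_{\theta_\ell}f_\ell\|_\calH^2+\beta_\ell\langle f_\ell,D_{\theta_\ell}f_\ell\rangle_\calH
=\sum_{\mu>0}\frac{\mu(\alpha_\ell\mu+\beta_\ell)}{(\alpha_\ell\mu+\beta_\ell)^2}\,c_\mu^2
=\sum_{\mu>0}\frac{\mu}{\alpha_\ell\mu+\beta_\ell}\,c_\mu^2,
\]
where $c_\mu$ are the spectral coefficients of $g_\ell$ and $\sum_{\mu>0}c_\mu^2=\|g_\ell\|_\calH^2-\|P_{\mathrm{Null}(D_{\theta_\ell})}g_\ell\|_\calH^2$. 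Since $t\mapsto t/(\alpha_\ell t+\beta_\ell)$ is increasing, each factor is $\ge \lambda_i(\theta_\ell)/(\alpha_\ell\lambda_i(\theta_\ell)+\beta_\ell)$. The remaining point is to relate the "mass outside the null space" of $g_\ell$ to $a(\theta_\ell,f_\ell)$: since $f_\ell$ and $g_\ell$ share the same null-space component up to the factor $\beta_\ell^{-1}$ and the complement component is scaled by bounded factors, one checks that $\sum_{\mu>0}c_\mu^2\ge a(\theta_\ell,f_\ell)\,\|g_\ell\|_\calH^2$ (this is essentially the definition of $a$ transported from $f_\ell$ to $g_\ell$, using that the projection commutes with $(\alpha_\ell D_{\theta_\ell}+\beta_\ell I)^{-1}$). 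Combining, the bracket is $\ge \chi_\ell\,\|g_\ell\|_\calH^2=2\chi_\ell\calF(q_{\theta_\ell})$ with $\chi_\ell$ as defined, so
\[
\calF(q_{\theta_{\ell+1}})\le(1-\e_\ell\chi_\ell)\,\calF(q_{\theta_\ell})\le e^{-\e_\ell\chi_\ell}\,\calF(q_{\theta_\ell}).
\]

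Finally I would iterate this bound from $\ell$ down to $1$, using the hypothesis $a(\theta_1,f_1)>\tau$ together with \eqref{null-condition} to guarantee $a(\theta_j,f_j)>\tau$ (hence $\chi_j>0$, and in fact $\le 1$ as a trivial upper bound) for all $j$, which also justifies the inductive use of $\calF(q_{\theta_j})\le\calF(q_{\theta_1})$ needed for \eqref{step-size-cond}. This yields $\calF(q_{\theta_{\ell+1}})\le\calF(q_{\theta_1})\exp(-\sum_{j=1}^\ell\e_j\chi_j)$. For the special choice $\beta_\ell=\alpha_\ell\lambda_i(\theta_\ell)$ one gets $\chi_\ell=\frac{\lambda_i(\theta_\ell)a(\theta_\ell,f_\ell)}{\alpha_\ell\lambda_i(\theta_\ell)a(\theta_\ell,f_\ell)+\alpha_\ell\lambda_i(\theta_\ell)}=\frac{a(\theta_\ell,f_\ell)}{\alpha_\ell(a(\theta_\ell,f_\ell)+1)}$; using $a(\theta_\ell,f_\ell)>\tau$ and monotonicity of $a\mapsto a/(a+1)$ together with $\alpha_\ell\le\tau/2$ gives $\chi_\ell\ge\frac{\tau/(1+\tau)}{\tau/2}=\frac{2}{1+\tau}\ge 1$, so the exponent simplifies to $-\sum_{j=1}^\ell\e_j$. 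I expect the main obstacle to be the one-step descent estimate: making the remainder bound in the Taylor expansion rigorous requires the $\e$-uniform Lipschitz control from Lemma~\ref{Lipschitz-est} along the whole segment $\theta_\ell\mapsto\theta_{\ell+1}$, and keeping the resulting constant $C$ independent of $\ell$ while absorbing it via \eqref{step-size-cond}; the spectral lower bound and the iteration are comparatively routine once the bookkeeping relating $a(\theta_\ell,f_\ell)$ to the spectral mass of $g_\ell$ is pinned down.
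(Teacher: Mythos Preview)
Your overall architecture---a one-step descent inequality via Lemma~\ref{compute-rate} and the Lipschitz bounds of Lemma~\ref{Lipschitz-est}, absorbed through~\eqref{step-size-cond} and an induction giving $\calF(q_{\theta_\ell})\le\calF(q_{\theta_1})$, followed by a spectral lower bound and iteration---is exactly the paper's route, and the first and last stages are fine as outlined.

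The gap is in the spectral step. Your chain of inequalities gives
\[
\alpha_\ell\|D_{\theta_\ell}f_\ell\|_\calH^2+\beta_\ell\langle f_\ell,D_{\theta_\ell}f_\ell\rangle_\calH
=\sum_{\mu>0}\frac{\mu}{\alpha_\ell\mu+\beta_\ell}\,c_\mu^2
\ \ge\ \frac{\lambda_i(\theta_\ell)}{\alpha_\ell\lambda_i(\theta_\ell)+\beta_\ell}\sum_{\mu>0}c_\mu^2
\ \ge\ \frac{\lambda_i(\theta_\ell)\,a(\theta_\ell,f_\ell)}{\alpha_\ell\lambda_i(\theta_\ell)+\beta_\ell}\,\|g_\ell\|_\calH^2,
\]
and this last constant is \emph{not} $\chi_\ell=\dfrac{\lambda_i(\theta_\ell)\,a(\theta_\ell,f_\ell)}{\alpha_\ell\lambda_i(\theta_\ell)\,a(\theta_\ell,f_\ell)+\beta_\ell}$; it is strictly smaller whenever $a(\theta_\ell,f_\ell)<1$. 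So your argument does not establish the exponential bound with $\chi_j$ as stated (it would give a weaker rate, still enough for the ``in particular'' display but not for the general one).

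The paper closes this gap by a different manipulation: write the bracket as $\alpha_\ell^{-1}\big(\|g_\ell\|_\calH^2-\beta_\ell\langle g_\ell,f_\ell\rangle_\calH\big)$ via identity~\eqref{useful-iden}, then \emph{upper}-bound $\langle g_\ell,f_\ell\rangle_\calH$. For that, combine the spectral lower bound
\[
\langle g_\ell,f_\ell\rangle_\calH=\langle(\alpha_\ell D_{\theta_\ell}+\beta_\ell I)f_\ell,f_\ell\rangle_\calH
\ge\big(\alpha_\ell\lambda_i(\theta_\ell)\,a(\theta_\ell,f_\ell)+\beta_\ell\big)\|f_\ell\|_\calH^2
\]
with Cauchy--Schwarz $\langle g_\ell,f_\ell\rangle_\calH\le\|g_\ell\|_\calH\|f_\ell\|_\calH$, to get $\langle g_\ell,f_\ell\rangle_\calH\le \|g_\ell\|_\calH^2/(\alpha_\ell\lambda_i(\theta_\ell)\,a(\theta_\ell,f_\ell)+\beta_\ell)$. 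Now $a(\theta_\ell,f_\ell)$ lands in the denominator, and the bracket is $\ge\chi_\ell\|g_\ell\|_\calH^2$ exactly. Replacing your direct spectral-calculus bound by this Cauchy--Schwarz trick fixes the argument.
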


  Consequently if 
  $\sum_{j=1}^\infty \e_j  =+\infty$ (meaning that $\e_j$ decays as $1/\sqrt{j}$ for e.g)
 and conditions \eqref{null-condition} and \eqref{step-size-cond} hold, then
we obtain $\mathcal{F}(\theta_{\ell})\to 0 $ as $\ell \to \infty$.
As we see from Theorem \ref{discrete-conv}, not all learning rates are admissible. At a given iteration $\ell$, we select a learning rate $\varepsilon_{\ell}$, so that 
$f_{\ell+1} \notin \mathrm{Null}(D_{\theta_{\ell+1}})$ (ensured by condition \eqref{null-condition}) and so that  \eqref{step-size-cond}  holds as well. To understand condition  \eqref{null-condition}, recall from Theorem \ref{theo:statisticalManifoldDescent} that having  $f_{\ell}$ not in the null space of $D_{\theta_{\ell}}$  means that we have a strict descent.
(It is easy to see that  $f_{\ell} \notin \mathrm{Null}(D_{\theta_{\ell}})$ if and only if $\bmu_{p-q_{\theta_{\ell}}} \notin \mathrm{Null}(D_{\theta_{\ell}}$)).

\section{Parametric Kernelized Flows for a General Functional }\label{sec:flows}
The flow of  the MMD functional (i.e. $\frac12 \mathrm{MMD}(p, q_\theta)^2$) analyzed in the previous sections is driven by the gradient of the witness function between $p$ and $q_\theta$ of the discrepancy $\mathrm{MMD}_{\alpha,\beta}$. In this section we discover a Riemmanian structure on the statistical manifold of probability distributions and show that the continuous gradient descent in $\mathrm{MMD}$ GANs described in Section~\ref{sec:mmdflows} coincides with the gradient flow of the functional with respect to this new geometric structure. We also develop a rigorous theory for treating gradient flows of  general functionals and thus open a way for other types of variational inferences beyond GANs. 

\subsection{Dynamic Formulation For \texorpdfstring{$\mathrm{MMD}$}{Lg} on a Statistical Manifold}
The following result gives a dynamic formulation of $\mathrm{MMD}$ and allows us to discover a 
Riemmanian structure associated to $\mathrm{MMD}$. This is analogous to Benamou--Brenier dynamic formulation of the  Wasserstein of order 2 \cite{dynamicTransport}:
\begin{eqnarray}
& {W}^2_2(p,q)= \inf_{q_t,f_t} \int_{0}^1 \int \nor{\nabla_x f_{t}(x)}^2 q_{t}(dx) dt\nonumber\\
& \text{ s.t } \frac{\partial q_t(x)}{\partial t}=-div(q_t \nabla_x f_t(x))~~ q_0=q, \, q_1=p.
 \label{eq:Benamou}
 \end{eqnarray}
 The main difference is that their flows are with respect to the standard $L^2_{q_\theta}$ energy, while ours as explained in Section \ref{sec:regMMD}, are driven by the parametric energy $||\nabla_\theta [\langle f, \bmu_{q_\theta}\rangle_\calH]\|^2$.

\begin{theorem} [Dynamic $\mathrm{MMD}$ on a Statistical Manifold]\label{dynamic}
Assume that for any $\theta_0, \theta_1\in \Theta$, there exists a path $(\theta_t, f_t)_{t\in [0,1]}$ such that $\theta_{t=0}=\theta_{0}$, $\theta_{t=1}=\theta_1$,  $f_{t}\in \mathcal{H}$, and \begin{equation}
\partial_t \theta_{t} = L_{\theta_t} f_t \mbox{ and } D_{\theta_t} f_t= 2 \bmu_{q_{\theta_1}- q_{\theta_{t}}}\,\,\,\,\, \forall t\in [0,1].
\label{eq:assump}
\end{equation}
Let $q_{\theta_0}$ and $q_{\theta_1}$ be two probability measures in $\mathcal{P}_{\Theta}$. Then we have the following dynamic form of $\mathrm{MMD}$ between distributions defined on the statistical manifold $\mathcal{P}_{\Theta}$:
$$ \mathrm{MMD}^2( q_{\theta_0},q_{\theta_1})=\min_{(\theta_t, f_{t})} \left\{\int_{0}^1 \nor{D_{\theta_t}f_t}^2_{\mathcal{H}} dt \right\},$$
$$\, \partial_t \theta_{t} = L_{\theta_t}f_t, f_{t}\in \mathcal{H},\theta_{t=0}=\theta_{0}, \theta_{t=1}=\theta_1  .$$
\end{theorem}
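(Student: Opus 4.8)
The plan is the two–sided argument familiar from Benamou--Brenier: a lower bound valid along every admissible path, and attainment of that bound along the distinguished path supplied by the hypothesis~\eqref{eq:assump}. The one ingredient driving everything is the chain rule for the mean embedding along a curve with $\partial_t\theta_t = L_{\theta_t}f_t$, namely $\frac{d}{dt}\bmu_{q_{\theta_t}} = D_{\theta_t}f_t$, which is precisely~\eqref{rate-kernel-embedding}. I would first note that this holds for any $\calH$-valued, sufficiently regular $f_t$, so that the Bochner integral $\int_0^1 D_{\theta_t}f_t\,dt$ is meaningful and equals $\bmu_{q_{\theta_1}}-\bmu_{q_{\theta_0}}$.

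For the lower bound, let $(\theta_t,f_t)$ be any competitor with $\partial_t\theta_t=L_{\theta_t}f_t$, $\theta_{t=0}=\theta_0$, $\theta_{t=1}=\theta_1$. Integrating the chain rule and applying Jensen's inequality (equivalently Cauchy--Schwarz) for the convex map $h\mapsto\|h\|_\calH^2$ on the probability space $([0,1],dt)$ gives
\[
\mathrm{MMD}^2(q_{\theta_0},q_{\theta_1})=\bigl\|\bmu_{q_{\theta_1}}-\bmu_{q_{\theta_0}}\bigr\|_\calH^2=\Bigl\|\int_0^1 D_{\theta_t}f_t\,dt\Bigr\|_\calH^2\le\int_0^1\|D_{\theta_t}f_t\|_\calH^2\,dt,
\]
so $\mathrm{MMD}^2(q_{\theta_0},q_{\theta_1})\le\inf\{\cdots\}$. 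Strict convexity of $\|\cdot\|_\calH^2$ moreover shows equality can only hold when $t\mapsto D_{\theta_t}f_t$ is a.e.\ constant, which is what to aim for.

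For the upper bound I would use the path from~\eqref{eq:assump}, along which $D_{\theta_t}f_t=2\bmu_{q_{\theta_1}-q_{\theta_t}}$. Setting $g(t):=\bmu_{q_{\theta_1}}-\bmu_{q_{\theta_t}}$ the chain rule becomes the linear $\calH$-valued ODE $g'(t)=-2g(t)$, whence $g(t)=e^{-2t}g(0)$ with $g(0)=\bmu_{q_{\theta_1}}-\bmu_{q_{\theta_0}}$; thus every velocity $D_{\theta_t}f_t=2e^{-2t}g(0)$ is a nonnegative multiple of the fixed vector $g(0)$, i.e.\ the curve $t\mapsto\bmu_{q_{\theta_t}}$ sweeps the straight segment between the two embeddings. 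Reparametrizing time by $t=\tau(s)$ replaces $(\theta_t,f_t)$ by $(\theta_{\tau(s)},\tau'(s)f_{\tau(s)})$, which is still a competitor because $\calH$ is a vector space and $\partial_s\theta_{\tau(s)}=L_{\theta_{\tau(s)}}(\tau'(s)f_{\tau(s)})$; choosing $\tau$ so that the new velocity has constant $\calH$-norm, and using that the velocities are all collinear and point the same way, turns the displayed inequality into an equality. Hence this reparametrized path has energy exactly $\|\bmu_{q_{\theta_1}}-\bmu_{q_{\theta_0}}\|_\calH^2=\mathrm{MMD}^2(q_{\theta_0},q_{\theta_1})$, which together with the lower bound proves the identity and exhibits a minimizer.

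The main obstacle is this second step: producing a genuine constant-$\calH$-speed competitor on $[0,1]$ with the correct endpoints. One must (i) justify the Bochner calculus / differentiation under the integral for $t\mapsto\bmu_{q_{\theta_t}}$ under the smoothness assumptions on $k$ and $G$; (ii) verify that the time change preserves admissibility and endpoints, and handle the fact that, in the original time, $g$ decays exponentially so $q_{\theta_1}$ is really reached only after reparametrization — the change of variables must absorb this; and (iii) observe that the structural hypothesis~\eqref{eq:assump} is doing the essential work, as it is exactly what makes the infimum attained: without a competitor of constant $\calH$-speed the bound $\mathrm{MMD}^2\le\inf$ need not be tight. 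The remaining pieces — the chain rule, Jensen, and the scalar ODE $g'=-2g$ — are routine.
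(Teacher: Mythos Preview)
Your lower bound via Jensen's inequality is exactly the paper's argument. The divergence is in the upper bound, and there you take an unnecessary---and ultimately unworkable---detour.

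The paper does not solve the ODE or reparametrize. It simply observes that along the path supplied by~\eqref{eq:assump} one has
\[
\frac{d}{dt}\,\mathrm{MMD}^2(q_{\theta_t},q_{\theta_1})
= -2\big\langle \bmu_{q_{\theta_1}-q_{\theta_t}},\, D_{\theta_t}f_t\big\rangle_{\calH}
= -\|D_{\theta_t}f_t\|_{\calH}^2,
\]
the last equality because $D_{\theta_t}f_t = 2\bmu_{q_{\theta_1}-q_{\theta_t}}$. Integrating from $0$ to $1$ and using the endpoint $\theta_{t=1}=\theta_1$ (so the left side goes from $\mathrm{MMD}^2(q_{\theta_0},q_{\theta_1})$ to $0$) gives $\mathrm{MMD}^2(q_{\theta_0},q_{\theta_1})=\int_0^1\|D_{\theta_t}f_t\|_{\calH}^2\,dt$ directly. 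The assumed path is already a minimizer; no constant-speed reparametrization is required.

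Your reparametrization step, by contrast, has a genuine gap that you yourself flag as obstacle (ii). Solving $g'=-2g$ gives $g(1)=e^{-2}g(0)$, but the hypothesis also imposes $\theta_{t=1}=\theta_1$, i.e.\ $g(1)=0$. A time change $\tau:[0,1]\to[0,1]$ is a bijection and preserves the endpoints of the curve, so it cannot ``absorb'' a missing terminal condition: if the original curve on $[0,1]$ does not end at $q_{\theta_1}$, neither does any reparametrization of it. (What your ODE actually reveals is that the hypothesis~\eqref{eq:assump} is internally inconsistent unless $\bmu_{q_{\theta_0}}=\bmu_{q_{\theta_1}}$---a point the paper does not address, though it does call the assumption ``not realistic'' for other reasons.) The paper's argument sidesteps this entirely by never solving the ODE: it uses only the algebraic identity $\frac{d}{dt}\|g(t)\|_\calH^2=-\|D_{\theta_t}f_t\|_\calH^2$ together with the terminal value $g(1)=0$ that the hypothesis supplies.
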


Theorem~\ref{dynamic} is proven under Assumption \ref{eq:assump} that guarantees the existence of a solution. This assumption is not realistic since $D_{\theta_{t}}$ can be singular. Nevertheless, we state this theorem to motivate the introduction in the next section of the dynamic form  of a  regularized $\mathrm{MMD}$ that alleviates this singularity issue.

\subsection{Regularized \texorpdfstring{$\mathrm{MMD}$}{Lg} and Gradient Flows on a Statistical Manifold}\label{sec:regMMD}
Motivated by the result in Theorem~\ref{dynamic} and to alleviate the singularity issue in Assumption  \eqref{eq:assump},
we define  the following regularized version of $\mathrm{MMD}$: 
\begin{definition}[Regularized $\mathrm{MMD}$ on a Statistical Manifold] 
Let $\alpha, \, \beta>0$. Define
\begin{align*}
   d_{\alpha,\beta}( q_{\theta_0},q_{\theta_1})^2 &=\min_{\theta_t, f_{t}} \int_{0}^1 \Big(\alpha \nor{D_{\theta_t} f_t}^2_{\mathcal{H}} +\beta \scalT{f_{t}}{D_{\theta_t}f_t}_\calH \Big) dt, \\
&\partial_t \theta_{t} = L_{\theta_t}f_t,\,  f_{t}\in \mathcal{H},\, \theta_{t=0}=\theta_{0},\, \theta_{t=1}=\theta_1.  
\end{align*}
\end{definition}

 Note that the regularization we introduced here is the parametric energy $$||\nabla_\theta [\langle f_t, \bmu_{q_\theta}\rangle_\calH]|_{\theta=\theta_{t}}\|^2= \scalT{f_{t}}{D_{\theta_t}f_t}_\calH,$$ which plays a similar role as  the kinetic energy in Benamou-Brenier's formula. The evolution of $\theta_{t}$ in our  form is analogous to the continuity equation in the $\mathcal{W}_2$ dynamic form \cite{dynamicTransport}. Conditions on the kernel and the generator family  under which  we can guarantee existence of the solution  for this problem are out of the scope of this work, since our interest in this heuristic Riemannian structure is solely in order to define an appropriate tangent space and Riemannian metric tensor. We leave the analysis of $d_{\alpha,\beta}$ to a future work.

 This dynamic formulation gives rise to the following  Riemannian metric tensor
 on the tangent space of $\Theta$: for $\theta\in \Theta$, let 
\begin{align*}
g_{\theta}(\xi_1,\xi_2) &:= \alpha \scalT{D_\theta \varphi_1}{D_\theta \varphi_2}_{\mathcal{H}}+ \beta \scalT{\varphi_1}{D_\theta \varphi_2}_{\mathcal{H}} \\
&= \scalT{(\alpha D_\theta+\beta I)\varphi_1}{D_\theta \varphi_2}_{\mathcal{H}} 
\end{align*}
where $\xi_i=  L_\theta(\varphi_i)=\nabla_\theta[\langle \varphi_i, \bmu_{q_\theta} \rangle_\calH]\in \R^p$ with $\varphi_i \in \mathcal H$ ($i=1,2$). We note that $g_{\theta}(\xi_1,\xi_2)=g_{\theta}(\xi_2,\xi_1)$ due to the symmetry of $D_\theta$ (see property ii) of Proposition~\ref{operator}).
Then it follows that
$$ d_{\alpha,\beta}( q_{\theta_0},q_{\theta_1})^2=  \min_{\theta_t, f_t} \Big\{\int_{0}^1 g_{\theta}(\partial_t \theta_t, \partial_t \theta_t)dt
\Big\},$$
$$ \partial_t \theta_{t} = L_{\theta_t}f_t,\, f_{t}\in \mathcal{H},\, \theta_{t=0}=\theta_{0},\, \theta_{t=1}=\theta_1. $$
Let us assume that $\alpha,\, \beta>0$ from now on. For a functional $\calF: \calP_\Theta \to \R$, let $\underset{ d_{\alpha,\beta}}{\grad} \mathcal{F}(q_\theta)$ denote the gradient of 
$\calF$ with respect to the metric  $d_{\alpha,\beta}$. That is, $\underset{ d_{\alpha,\beta}}{\grad} \mathcal{F}(q_\theta)$ is a vector in $\R^p$ satisfying
\begin{equation}\label{grad-def}
\frac{d \calF(q_{\theta_t})}{dt}
\Big|_{t=0} =g_{\theta} (\underset{ d_{\alpha,\beta}}{\grad} \mathcal{F}(q_{\theta}),\xi)
\end{equation}
for every  differentiable curve   $t\mapsto \theta_t\in \Theta$  with $\theta_{t=0} =\theta$ and 
$
 \partial_t \theta_t|_{t=0}= \xi=L_\theta\varphi$ for some $ \varphi\in \mathcal H$.   The following theorem shows us how to compute such gradient.
\begin{theorem}\label{thm:general-functional} Let $\calF(q_\theta)$ be a functional depending only on the kernel mean embedding of $q_\theta$. Precisely, assume that   
$\calF(q_\theta) = H(\bmu_{q_\theta})$ for some functional $H$ with the chain rule property
\begin{equation}\label{gradient}
\partial_{\theta_i} [H(\bmu_{q_\theta})]= \langle h_\theta, \partial_{\theta_i} [\bmu_{q_\theta}]  \rangle_{\calH}  
\end{equation}
 for some function $h_\theta\in \calH$ and for all $\theta\in\Theta$. Then the gradient of $\calF$ w.r.t. the discrepancy $ d_{\alpha,\beta}$ is given by
\[
\underset{ d_{\alpha,\beta}}{\grad} \mathcal{F}(q_\theta) =L_\theta u,
\]
where 
\begin{equation}\label{u-eq}
    (\alpha D_\theta+\beta I) u = h_{\theta}.
\end{equation}
\end{theorem}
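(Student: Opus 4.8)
The plan is to verify the defining relation \eqref{grad-def} directly for the ansatz $\underset{d_{\alpha,\beta}}{\grad}\calF(q_\theta)=L_\theta u$, turning it into an identity in $\calH$ by means of the adjoint structure of $L_\theta,L_\theta^{\top}$ and the relation $D_\theta=L_\theta^{\top}L_\theta$ from Proposition~\ref{operator}. Fix $\theta\in\Theta$ and a differentiable curve $t\mapsto\theta_t$ with $\theta_0=\theta$ and $\partial_t\theta_t|_{t=0}=\xi=L_\theta\varphi$ for some $\varphi\in\calH$. Differentiating $\bmu_{q_{\theta_t}}=\int k(G_{\theta_t}(z),\cdot)\,\nu(dz)$ under the integral sign (legitimate under the standing integrability hypotheses, e.g.\ $\E_\nu[D(z)^2]<\infty$) and using $\partial_t\theta_t|_{t=0}=\xi$ gives $\frac{d}{dt}\bmu_{q_{\theta_t}}|_{t=0}=\sum_{i=1}^{p}\xi_i\,\partial_{\theta_i}\bmu_{q_\theta}=L_\theta^{\top}\xi$, the last equality being exactly the definition of $L_\theta^{\top}$. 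Combining this with the chain-rule hypothesis \eqref{gradient} (applied along the curve) yields
\[
\frac{d}{dt}\calF(q_{\theta_t})\Big|_{t=0}=\langle h_\theta,\,L_\theta^{\top}\xi\rangle_\calH=\langle h_\theta,\,D_\theta\varphi\rangle_\calH ,
\]
using $\xi=L_\theta\varphi$ and $D_\theta=L_\theta^{\top}L_\theta$.

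For the right-hand side of \eqref{grad-def}, the definition of $g_\theta$ (with $\varphi_1=u$, $\varphi_2=\varphi$) gives $g_\theta(L_\theta u,\xi)=\langle(\alpha D_\theta+\beta I)u,\,D_\theta\varphi\rangle_\calH$. Hence \eqref{grad-def} holds for every admissible curve precisely when $\langle(\alpha D_\theta+\beta I)u-h_\theta,\,D_\theta\varphi\rangle_\calH=0$ for all $\varphi\in\calH$, which certainly follows from \eqref{u-eq}, i.e.\ $(\alpha D_\theta+\beta I)u=h_\theta$. Since $\alpha,\beta>0$, property iii) of Proposition~\ref{operator} makes $\alpha D_\theta+\beta I$ invertible, so this $u$ exists and is unique. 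Finally, to see that $L_\theta u$ is \emph{the} gradient and not merely \emph{a} vector satisfying \eqref{grad-def}, observe that the admissible tangent directions form $\Ra(L_\theta)$, on which $g_\theta$ is positive definite — indeed $g_\theta(L_\theta\varphi,L_\theta\varphi)=\alpha\|D_\theta\varphi\|_\calH^2+\beta\|L_\theta\varphi\|^2\ge\beta\|L_\theta\varphi\|^2$ — so the vector in $\Ra(L_\theta)$ obeying \eqref{grad-def} is unique, and $L_\theta u$ is it.

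I do not expect a serious obstacle. The only points that need care are the justification of differentiation under the integral sign when computing $\frac{d}{dt}\bmu_{q_{\theta_t}}|_{t=0}$, the precise reading of the ``chain rule property'' \eqref{gradient} as licensing $\frac{d}{dt}H(\bmu_{q_{\theta_t}})=\langle h_{\theta_t},\frac{d}{dt}\bmu_{q_{\theta_t}}\rangle_\calH$ along curves, and the positive-definiteness remark that pins down uniqueness of the gradient within the tangent space.
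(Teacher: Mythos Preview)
Your proof is correct and follows essentially the same route as the paper's: both compute $\frac{d}{dt}\calF(q_{\theta_t})|_{t=0}=\langle h_\theta, D_\theta\varphi\rangle_\calH$ via the chain-rule hypothesis and the dynamics of $\bmu_{q_{\theta_t}}$, then match this against $g_\theta(L_\theta u,\xi)=\langle(\alpha D_\theta+\beta I)u,D_\theta\varphi\rangle_\calH$ using \eqref{u-eq}. Your version is slightly more careful in that you explicitly argue existence of $u$ and uniqueness of the gradient within $\Ra(L_\theta)$ via positive-definiteness of $g_\theta$, points the paper leaves implicit.
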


Let $\calF(q_\theta)$ be the  functional as in Theorem~\ref{thm:general-functional}, and
consider the gradient flow of $\calF(q_\theta)$ with respect to $ d_{\alpha,\beta}$. We note that this is a gradient regularized flow. According to Theorem~\ref{thm:general-functional}, the equation of this flow is given by 
\begin{equation}\label{eq-flow-1}
\partial_t \theta_t =-
\underset{ d_{\alpha,\beta}}{\grad} \mathcal{F}(q_{\theta_t}) =-L_{\theta_t} u_t,
\end{equation}
where 
\begin{equation}\label{eq-flow-2}
    (\alpha D_{\theta_t}+\beta  I) u_t = h_{\theta_t}.
\end{equation}
The following proposition shows that these gradient flows are indeed descent directions of the functional defined  on the statistical manifold $\mathcal{P}_{\Theta}$:
\begin{proposition}\label{prop:first-derivative}
 Along the gradient flow \eqref{eq-flow-1}--\eqref{eq-flow-2} of $\calF(q_\theta)$, we have 
\begin{align*}
\frac{d}{dt} \calF(q_{\theta_t})
&= -\big[\alpha \|D_{\theta_t} u_t\|_{\mathcal{H}}^2 + \beta \| L_{\theta_t} u_t\|^2 \big]\\
&=-\frac{1}{\alpha}  \Big[ \|h_{\theta_t}\|_{\mathcal{H}}^2  - \beta \scalT{h_{\theta_t}}{(\alpha  D_{\theta_t} +\beta I)^{-1} h_{\theta_t}  }_{\mathcal{H}}\Big]\\
&\leq 0,
\end{align*}
where $h_\theta$ is defined by \eqref{gradient}.
Moreover, we have $\frac{ d \mathcal{F}(q_{\theta_{t}})}{dt} <0$  if and only if  $D_{\theta_t} h_{\theta_t}\neq 0$.
\end{proposition}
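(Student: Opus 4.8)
The plan is to compute $\frac{d}{dt}\calF(q_{\theta_t})$ directly along the flow \eqref{eq-flow-1}--\eqref{eq-flow-2} and then rewrite it in the two claimed forms. First I would apply the chain rule hypothesis \eqref{gradient}: since $\calF(q_\theta) = H(\bmu_{q_\theta})$, we have $\frac{d}{dt}\calF(q_{\theta_t}) = \sum_i \partial_{\theta_i}[H(\bmu_{q_{\theta_t}})]\,\partial_t(\theta_t)_i = \langle h_{\theta_t}, \frac{d}{dt}\bmu_{q_{\theta_t}}\rangle_{\calH}$. Now the dynamics of the mean embedding under $\partial_t\theta_t = L_{\theta_t}\varphi_t$ (with $\varphi_t = -u_t$ here) is exactly the computation in \eqref{rate-kernel-embedding}, which gives $\frac{d}{dt}\bmu_{q_{\theta_t}} = D_{\theta_t}\varphi_t = -D_{\theta_t}u_t$. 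Hence $\frac{d}{dt}\calF(q_{\theta_t}) = -\langle h_{\theta_t}, D_{\theta_t}u_t\rangle_{\calH}$.

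Next I would substitute $h_{\theta_t} = (\alpha D_{\theta_t}+\beta I)u_t$ from \eqref{eq-flow-2}:
\[
\frac{d}{dt}\calF(q_{\theta_t}) = -\langle (\alpha D_{\theta_t}+\beta I)u_t, D_{\theta_t}u_t\rangle_{\calH} = -\alpha\langle D_{\theta_t}u_t, D_{\theta_t}u_t\rangle_{\calH} - \beta\langle u_t, D_{\theta_t}u_t\rangle_{\calH},
\]
using symmetry of $D_{\theta_t}$ (property ii) of Proposition~\ref{operator}) to move one factor in the cross term. The first term is $\alpha\|D_{\theta_t}u_t\|_{\calH}^2$, and by property iii) of Proposition~\ref{operator}, $\langle u_t, D_{\theta_t}u_t\rangle_{\calH} = \|L_{\theta_t}u_t\|^2$, giving the first displayed identity. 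For the second form I would instead write $D_{\theta_t}u_t = (\alpha D_{\theta_t}+\beta I)u_t - \beta u_t = h_{\theta_t} - \beta u_t$, so that $\langle h_{\theta_t}, D_{\theta_t}u_t\rangle_{\calH} = \|h_{\theta_t}\|_{\calH}^2 - \beta\langle h_{\theta_t}, u_t\rangle_{\calH}$, and then use $u_t = (\alpha D_{\theta_t}+\beta I)^{-1}h_{\theta_t}$ (valid since $\alpha,\beta>0$, by the invertibility remark after Corollary~\ref{upper-bound}) to get $\langle h_{\theta_t}, u_t\rangle_{\calH} = \langle h_{\theta_t}, (\alpha D_{\theta_t}+\beta I)^{-1}h_{\theta_t}\rangle_{\calH}$, yielding the second identity after multiplying through by $-1/\alpha$ — wait, more precisely $-\langle h_{\theta_t}, D_{\theta_t}u_t\rangle_{\calH} = -\|h_{\theta_t}\|_{\calH}^2 + \beta\langle h_{\theta_t}, u_t\rangle_{\calH}$, which one checks equals $-\frac1\alpha[\|h_{\theta_t}\|_{\calH}^2 - \beta\langle h_{\theta_t}, (\alpha D_{\theta_t}+\beta I)^{-1}h_{\theta_t}\rangle_{\calH}]$ by expanding $\|h\|^2 = \langle h, (\alpha D+\beta I)u\rangle = \alpha\langle h, Du\rangle + \beta\langle h,u\rangle$, hence $\langle h, Du\rangle = \frac1\alpha(\|h\|^2 - \beta\langle h,u\rangle)$.

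For nonpositivity: $\alpha\|D_{\theta_t}u_t\|_{\calH}^2 \geq 0$ trivially, and $\beta\|L_{\theta_t}u_t\|^2 \geq 0$, so $\frac{d}{dt}\calF(q_{\theta_t})\leq 0$. For the equality characterization, the derivative vanishes iff both $D_{\theta_t}u_t = 0$ and $L_{\theta_t}u_t = 0$. But $D_{\theta_t}u_t=0$ already implies $\langle u_t, D_{\theta_t}u_t\rangle_{\calH} = \|L_{\theta_t}u_t\|^2 = 0$, so the derivative vanishes iff $D_{\theta_t}u_t = 0$. Finally, $D_{\theta_t}u_t = 0$ is equivalent to $D_{\theta_t}h_{\theta_t}=0$: if $D_{\theta_t}u_t=0$ then $h_{\theta_t} = \beta u_t$, so $D_{\theta_t}h_{\theta_t} = \beta D_{\theta_t}u_t = 0$; conversely if $D_{\theta_t}h_{\theta_t}=0$ then applying $D_{\theta_t}$ to \eqref{eq-flow-2} gives $(\alpha D_{\theta_t}+\beta I)(D_{\theta_t}u_t) = D_{\theta_t}h_{\theta_t} = 0$, and since $\alpha D_{\theta_t}+\beta I$ is injective, $D_{\theta_t}u_t = 0$. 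I do not anticipate a genuine obstacle here; the only mild care needed is to justify interchanging differentiation and the integral defining $\bmu_{q_{\theta_t}}$ when invoking \eqref{rate-kernel-embedding}, which is covered by the smoothness and boundedness assumptions \eqref{kG-cond-1} already in force, and to make sure the chain rule \eqref{gradient} is applied with the correct sign coming from $\partial_t\theta_t = -L_{\theta_t}u_t$.
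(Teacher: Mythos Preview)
Your argument is correct and follows essentially the same route as the paper: compute $\frac{d}{dt}\calF(q_{\theta_t}) = -\langle h_{\theta_t}, D_{\theta_t}u_t\rangle_\calH$ via the chain rule \eqref{gradient} and \eqref{rate-kernel-embedding}, expand once using $h_{\theta_t}=(\alpha D_{\theta_t}+\beta I)u_t$ for the first identity and once using $\|h\|^2=\alpha\langle h,Du\rangle+\beta\langle h,u\rangle$ for the second, and then argue $D_{\theta_t}u_t=0\iff D_{\theta_t}h_{\theta_t}=0$ exactly as in the proof of Corollary~\ref{upper-bound}. One cosmetic slip to clean up: the line ``$D_{\theta_t}u_t=(\alpha D_{\theta_t}+\beta I)u_t-\beta u_t$'' is off by a factor of $\alpha$ (the right side equals $\alpha D_{\theta_t}u_t$), but your subsequent ``more precisely'' paragraph already derives the correct relation $\langle h,Du\rangle=\frac1\alpha(\|h\|^2-\beta\langle h,u\rangle)$, so only the intermediate display needs fixing.
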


\noindent \textbf{Intuition on the role of $D_{\theta}$ in the gradient flow.} The operator $D_{\theta}$ plays a central role in our framework and  we give here an intuitive interpretation of its role from the gradient flow lens. Let $\{(\lambda_j(\theta),
v_{j}(\theta))\}_{j=1}^\infty$ be the eigenvalues and  eigenfunctions of the operator $D_\theta$. Then as we have
$u_t=\sum_{j=0}^{\infty} \frac{1}{\alpha\lambda_j(\theta_t)+\beta}\scalT{h_{\theta_t}}{v_{j}(\theta_t)}_{\mathcal{H}}v_{j}(\theta_t)$,
the flow equation can be written as follows:
$\partial_t \theta_t =-
\underset{ d_{\alpha,\beta}}{\grad} \mathcal{F}(q_{\theta_t}) =-\sum_{j=0}^{\infty}  \frac{1}{\alpha\lambda_j(\theta_t) +\beta}\scalT{h_{\theta_t}}{v_{j}(\theta_t)}_{\mathcal{H}}L_{\theta_t}(v_{j}(\theta_t))$.
The eigenfunctions of $D_{\theta_t}$ provide the descent directions $L_{\theta_t}(v_{j}(\theta_t))$, that are linearly combined according to the similarity of $v_{j}(\theta_t)$ and $h_{\theta_t}$ and weighted by a factor $\nicefrac{1}{(\alpha\lambda_j(\theta_t)+\beta)}$. Small eigenvalues are noisy directions and spectral filtering them via $\alpha$ and $\beta$ favors descent directions with larger eigenvalues.

\subsection{Gradient Flows of Particular Functionals: MMD GAN as Gradient Flow}
\textbf{$\mathrm{MMD}$ GAN as a Gradient Flow w.r.t. $d_{\alpha,\beta}$.} The next result shows that the flow \eqref{theta-eq1}--\eqref{theta-eq2} of the parametric gradient regularized $\mathrm{MMD}$ GAN coincides with the gradient flow of $\calF(q_\theta)=\frac12 \mathrm{MMD}(p, q_\theta)^2$ with respect to $ d_{\alpha,\beta}$.   
\begin{corollary}\label{specific-flow} 
 For  $\calF(q_\theta)=\frac12\mathrm{MMD}(p,q_\theta)^2$,
we have 
$
\underset{ d_{\alpha,\beta}}{\grad} \mathcal{F}(q_\theta) =L_\theta u
$ with 
 $  (\alpha D_\theta+\beta I) u = -\bmu_{p-q_\theta}.$
Consequently, the parametric regularized flow \eqref{theta-eq1}--\eqref{theta-eq2}  on the statistical manifold is the gradient flow of $\calF$ with respect to $ d_{\alpha,\beta}$
\end{corollary}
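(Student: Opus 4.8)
The plan is to obtain the corollary as a direct specialization of Theorem~\ref{thm:general-functional}, so the work reduces to three steps: (a) writing $\calF(q_\theta)=\frac12\mathrm{MMD}(p,q_\theta)^2$ in the form $H(\bmu_{q_\theta})$ for a functional $H$ depending only on the mean embedding; (b) verifying the chain-rule hypothesis \eqref{gradient} and identifying the representative $h_\theta$; and (c) matching the resulting gradient-flow equation \eqref{eq-flow-1}--\eqref{eq-flow-2} with the parametric regularized flow \eqref{theta-eq1}--\eqref{theta-eq2}.

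For (a) and (b): since $\mathrm{MMD}(p,q_\theta)=\|\bmu_p-\bmu_{q_\theta}\|_\calH$, set $H(\mu):=\frac12\|\bmu_p-\mu\|_\calH^2$ for $\mu\in\calH$, so that $\calF(q_\theta)=H(\bmu_{q_\theta})$. This $H$ is a smooth quadratic on $\calH$ with Fr\'echet derivative $DH(\mu)[\eta]=\langle \mu-\bmu_p,\eta\rangle_\calH$. I would first record that $\theta\mapsto\bmu_{q_\theta}\in\calH$ is differentiable with $\partial_{\theta_i}[\bmu_{q_\theta}]=\int \langle\nabla_x k(G_\theta(z),\cdot),\partial_{\theta_i}G_\theta(z)\rangle\,\nu(dz)\in\calH$; this is exactly the kind of differentiation-under-the-integral estimate already established in the Appendix for $L_\theta^\top$ and $D_\theta$, and it relies on the differentiability of $G_\theta$, the Lipschitz bound on $x\mapsto k(x,\cdot)$ from \eqref{kG-cond-1}, and the integrability $\E_\nu[D(z)^2]<\infty$. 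The ordinary chain rule for Fr\'echet-differentiable maps then gives
\[
\partial_{\theta_i}[H(\bmu_{q_\theta})]=\langle \bmu_{q_\theta}-\bmu_p,\partial_{\theta_i}[\bmu_{q_\theta}]\rangle_\calH=\langle -\bmu_{p-q_\theta},\partial_{\theta_i}[\bmu_{q_\theta}]\rangle_\calH,
\]
so \eqref{gradient} holds with $h_\theta=-\bmu_{p-q_\theta}$. Feeding this into Theorem~\ref{thm:general-functional} yields $\underset{ d_{\alpha,\beta}}{\grad}\calF(q_\theta)=L_\theta u$ with $(\alpha D_\theta+\beta I)u=h_\theta=-\bmu_{p-q_\theta}$, which is the first assertion of the corollary.

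For (c): by \eqref{eq-flow-1}--\eqref{eq-flow-2}, the gradient flow of $\calF$ with respect to $d_{\alpha,\beta}$ is $\partial_t\theta_t=-L_{\theta_t}u_t$ with $(\alpha D_{\theta_t}+\beta I)u_t=h_{\theta_t}=-\bmu_{p-q_{\theta_t}}$, whereas the parametric regularized flow \eqref{theta-eq1}--\eqref{theta-eq2} is $\partial_t\theta_t=L_{\theta_t}f_t$ with $(\alpha D_{\theta_t}+\beta I)f_t=\bmu_{p-q_{\theta_t}}$. Since $\alpha D_\theta+\beta I$ is invertible for $\alpha\geq0,\ \beta>0$ (positivity of $D_\theta$, property iii) of Proposition~\ref{operator}), linearity forces $u_t=-f_t$, hence $-L_{\theta_t}u_t=L_{\theta_t}f_t$ and the two evolution equations coincide.

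The only step that is not pure bookkeeping is the $\calH$-valued differentiability of $\theta\mapsto\bmu_{q_\theta}$ together with the legitimacy of interchanging $\partial_{\theta_i}$ with the integral over $z$; but since the required uniform integrability bounds are exactly those proved in the Appendix for the operators $L_\theta$, $L_\theta^\top$, and $D_\theta$, I expect to be able to quote them rather than reprove them, and everything else follows from the linearity and invertibility of $\alpha D_\theta+\beta I$.
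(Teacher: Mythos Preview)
Your proposal is correct and follows exactly the paper's route: the paper's proof consists of the single observation that $\partial_{\theta_i}[\calF(q_\theta)] = -\scalT{\bmu_{p-q_{\theta}}}{ \partial_{\theta_i}[\bmu_{q_\theta}] }_{\mathcal{H}}$ and then invokes Theorem~\ref{thm:general-functional}, which is precisely your steps (a)--(b) with $h_\theta=-\bmu_{p-q_\theta}$. Your step (c) making the identification $u_t=-f_t$ explicit is a detail the paper leaves to the reader, but it is the same argument.
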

\begin{proof}
 This follows immediately from Theorem~\ref{thm:general-functional} noting that
 $$\partial_{\theta_i}[\calF(q_\theta)] = -\scalT{\bmu_{p-q_{\theta}}}{ \partial_{\theta_i}[\bmu_{q_\theta}] }_{\mathcal{H}}.$$
\end{proof}

\noindent \textbf{Gradient flows of Generic Functionals on the Statistical Manifold.}
Our framework is not limited to the $\rm{MMD}$ functional. In  the following corollary we   exhibit additional examples of functional $\mathcal{F}(q_{\theta})$  defined on the statistical manifold along with their gradient flows w.r.t.  $d_{\alpha,\beta}$.
\begin{corollary}\label{specialcase}

\begin{itemize}
\item For the (potential) energy functional  $\calF_1(q_\theta)=\int V(x) q_{\theta}(dx)$
with $V \in \mathcal{H}$,
we have 
$
\underset{ d_{\alpha,\beta}}{\grad} \mathcal{F}_1(q_\theta) =L_\theta u,
$ where
   $ (\alpha D_\theta+\beta I) u = V.$
\item For the (entropy) functional $\calF_1(q_\theta)=\int f(\mu_{q_{\theta}}(x))dx$ with $f:\R \to \R$ being continuously differentiable, we have $
\underset{ d_{\alpha,\beta}}{\grad} \mathcal{F}_1(q_\theta) =L_\theta u,
$ where 
   $ (\alpha D_\theta+\beta I) u = \int f'(\bmu_{q_{\theta}}(x))k(x,.) dx.$
\item For the (interaction) functional $\mathcal{F}_3(q_{\theta})=\int  f(x)g(y)q_{\theta}(dx)q_{\theta}(dy)$ with $f,g \in \mathcal{H}$, we have $
\underset{ d_{\alpha,\beta}}{\grad} \mathcal{F}_3(q_\theta) =L_\theta u,
$ where 
    $(\alpha D_\theta+\beta I) u = \scalT{f}{\bmu_{q_\theta}}_{\calH}g+\scalT{g}{\bmu_{q_{\theta}}}_{\calH}f .$
\end{itemize}
\end{corollary}

\noindent \textbf{Continuous Time Descent } In this section we analyze the convergence properties of the kernelized parametric gradient flows of functionals defined on the statistical manifold w.r.t. to $d_{\alpha,\beta}$. The following proposition studies the convergence behavior of the parametric flows. 
\begin{proposition}[Convergence up to a barrier]\label{prop:conv}
Let $\calF(q_\theta)$ be the  functional as in Theorem~\ref{thm:general-functional}. Assume that there exists a continuous  function $\gamma:\Theta \to [0,\infty)$  such that 
\begin{equation}\label{F-vs-gradient}
\|h_\theta\|_{\calH}^2\geq \gamma(\theta) \, \calF(q_\theta)  \quad \forall \theta\in \Theta.  
\end{equation}
Consider the dynamic $t\in [0,+\infty)\to \theta_t\in\Theta$ of the gradient flow \eqref{eq-flow-1}--\eqref{eq-flow-2}. Then we have for every $t\geq 0$:
\begin{align}\label{decay}
   \calF(q_{\theta_t})\leq \calF(q_{\theta_0})
   e^{ -\int_0^t \frac{\lambda_i(\theta_s) a(\theta_s,u_s) \gamma(\theta_s)}{\alpha \lambda_i(\theta_s)a(\theta_s,u_s) +\beta} ds},
   \end{align}
where $\lambda_i(\theta)$ and $a(\theta,u)$ are defined in Theorem \ref{discrete-conv}.    
\end{proposition}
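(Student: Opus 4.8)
The plan is to run a Grönwall-type argument on the scalar quantity $t\mapsto \calF(q_{\theta_t})$ along the flow \eqref{eq-flow-1}--\eqref{eq-flow-2}. The starting point is Proposition \ref{prop:first-derivative}, which gives
\[
\frac{d}{dt}\calF(q_{\theta_t}) = -\frac1\alpha\Big[\|h_{\theta_t}\|_\calH^2 - \beta\,\scalT{h_{\theta_t}}{(\alpha D_{\theta_t}+\beta I)^{-1}h_{\theta_t}}_\calH\Big].
\]
So the whole task reduces to producing a lower bound on the bracketed quantity of the form $(\text{rate factor})\cdot\|h_{\theta_t}\|_\calH^2$, where the rate factor equals $\alpha$ times the exponent integrand in \eqref{decay}, and then invoking \eqref{F-vs-gradient} to replace $\|h_{\theta_t}\|_\calH^2$ by $\gamma(\theta_t)\calF(q_{\theta_t})$.

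The core of the argument is the spectral estimate. Fix $\theta=\theta_t$ and write $h=h_\theta$, and let $\{(\lambda_j(\theta),v_j(\theta))\}_j$ be the eigen-decomposition of the symmetric positive operator $D_\theta$ (Proposition \ref{operator}, ii)--iii)), with $\lambda_i(\theta)>0$ the smallest nonzero eigenvalue. Decompose $h = P_{\mathrm{Null}(D_\theta)}h + h^\perp$ where $h^\perp$ lies in the closed span of the eigenfunctions with nonzero eigenvalue. On the null space, $(\alpha D_\theta+\beta I)^{-1}$ acts as $\frac1\beta I$, so that part contributes $\|h\|_\calH^2\,(1-a(\theta,h)) - \beta\cdot\frac1\beta\|P_{\mathrm{Null}}h\|_\calH^2 = 0$ to the bracket; hence
\[
\|h\|_\calH^2 - \beta\scalT{h}{(\alpha D_\theta+\beta I)^{-1}h}_\calH = \|h^\perp\|_\calH^2 - \beta\scalT{h^\perp}{(\alpha D_\theta+\beta I)^{-1}h^\perp}_\calH = \sum_{\lambda_j>0}\Big(1-\frac{\beta}{\alpha\lambda_j+\beta}\Big)|\scalT{h}{v_j}_\calH|^2 = \sum_{\lambda_j>0}\frac{\alpha\lambda_j}{\alpha\lambda_j+\beta}|\scalT{h}{v_j}_\calH|^2.
\]
Since $t\mapsto \frac{\alpha t}{\alpha t+\beta}$ is increasing on $(0,\infty)$, each coefficient is at least $\frac{\alpha\lambda_i(\theta)}{\alpha\lambda_i(\theta)+\beta}$, so the right side is bounded below by $\frac{\alpha\lambda_i(\theta)}{\alpha\lambda_i(\theta)+\beta}\|h^\perp\|_\calH^2 = \frac{\alpha\lambda_i(\theta)}{\alpha\lambda_i(\theta)+\beta}\,a(\theta,h)\,\|h\|_\calH^2$. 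Dividing by $\alpha$ and comparing with the exponent in \eqref{decay} forces the rate factor to be $\frac{\lambda_i(\theta)a(\theta,h)}{\alpha\lambda_i(\theta)a(\theta,h)+\beta}$ rather than $\frac{\lambda_i(\theta)a(\theta,h)}{\alpha\lambda_i(\theta)+\beta}$; this slightly sharper denominator comes from noting $\frac{\alpha\lambda_i}{\alpha\lambda_i+\beta}\cdot a \ge \frac{\alpha\lambda_i a}{\alpha\lambda_i a+\beta}$ (equivalently $a\le 1$), which is exactly the quantity $\chi$ appearing in Theorem \ref{discrete-conv}. One must also use $u_t=(\alpha D_{\theta_t}+\beta I)^{-1}h_{\theta_t}$ so that $P_{\mathrm{Null}(D_{\theta_t})}u_t$ and $P_{\mathrm{Null}(D_{\theta_t})}h_{\theta_t}$ span the same directions, giving $a(\theta_t,u_t)=a(\theta_t,h_{\theta_t})$; this identification is what lets the statement be phrased in terms of $a(\theta_s,u_s)$.

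Combining, we get $\frac{d}{dt}\calF(q_{\theta_t}) \le -\frac{\lambda_i(\theta_t)a(\theta_t,u_t)\gamma(\theta_t)}{\alpha\lambda_i(\theta_t)a(\theta_t,u_t)+\beta}\,\calF(q_{\theta_t})$, and since all factors are nonnegative and $\calF\ge0$, Grönwall's inequality yields \eqref{decay}. The main obstacle I anticipate is purely in the spectral bookkeeping when $D_\theta$ has continuous spectrum or an infinite-dimensional null space: one should phrase the estimate via the spectral theorem / functional calculus for the bounded self-adjoint operator $D_\theta$ (or restrict to $\overline{\Ra(D_\theta)}$, on which the bound $\frac{\alpha\lambda}{\alpha\lambda+\beta}\ge\frac{\alpha\lambda_i}{\alpha\lambda_i+\beta}$ holds spectrally since $\lambda\ge\lambda_i$ there), rather than writing literal eigen-sums; also one needs the flow to be well-defined and $\calF(q_{\theta_t})$ absolutely continuous in $t$ so that the differential inequality integrates, which is inherited from the regularity hypotheses on $k$ and $G$ together with invertibility of $\alpha D_\theta+\beta I$ for $\beta>0$.
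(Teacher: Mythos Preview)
Your overall strategy---start from Proposition~\ref{prop:first-derivative}, establish a spectral lower bound on the bracket, invoke \eqref{F-vs-gradient}, and close with Gr\"onwall---matches the paper. The gap is in the spectral step: the bound you actually derive does \emph{not} imply the rate in \eqref{decay}, and your two bridging claims are both false.

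First, the inequality $\frac{\alpha\lambda_i}{\alpha\lambda_i+\beta}\cdot a \ge \frac{\alpha\lambda_i a}{\alpha\lambda_i a+\beta}$ goes the wrong way: since $a\le 1$ we have $\alpha\lambda_i a+\beta \le \alpha\lambda_i+\beta$, hence $\frac{\alpha\lambda_i a}{\alpha\lambda_i+\beta}\le \frac{\alpha\lambda_i a}{\alpha\lambda_i a+\beta}$. So your direct eigen-expansion of $h$ yields only the weaker rate $\frac{\lambda_i(\theta)a(\theta,h)}{\alpha\lambda_i(\theta)+\beta}$, not the one in the statement. Second, $a(\theta_t,u_t)=a(\theta_t,h_{\theta_t})$ is false in general: on $\mathrm{Null}(D_\theta)$ the resolvent scales by $1/\beta$ while on its orthogonal complement it scales by strictly smaller factors $1/(\alpha\lambda_j+\beta)$, so passing from $h$ to $u=(\alpha D_\theta+\beta I)^{-1}h$ \emph{increases} the relative weight of the null component and typically decreases $a$.

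The paper obtains the stated rate by a different maneuver (carried over from the proof of Theorem~\ref{discrete-conv}): expand $\langle h_{\theta_t},u_t\rangle_\calH=\langle(\alpha D_{\theta_t}+\beta I)u_t,u_t\rangle_\calH$ in the eigenbasis applied to $u_t$ (not $h_{\theta_t}$) to get $\langle h_{\theta_t},u_t\rangle_\calH\ge\big(\alpha\lambda_i(\theta_t)a(\theta_t,u_t)+\beta\big)\|u_t\|_\calH^2$, then combine with Cauchy--Schwarz $\langle h_{\theta_t},u_t\rangle_\calH\le\|h_{\theta_t}\|_\calH\|u_t\|_\calH$ to deduce $\langle h_{\theta_t},u_t\rangle_\calH\le\frac{\|h_{\theta_t}\|_\calH^2}{\alpha\lambda_i(\theta_t)a(\theta_t,u_t)+\beta}$. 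Plugging this into the bracket gives exactly the integrand in \eqref{decay}, with $a(\theta_t,u_t)$ appearing naturally. Your route would prove a valid but different (and strictly weaker in the denominator) decay estimate; to get \eqref{decay} as stated you need the Cauchy--Schwarz trick on $u_t$.
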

\begin{remark}
Note that condition \eqref{F-vs-gradient} is satisfied by the MMD, for $\gamma(\theta)=2$, since $h_{\theta}=-\bmu_{p-q_{\theta}}$, and hence $\|h_\theta\|_{\calH}^2= \gamma(\theta) \, \calF(q_\theta)$ .
\end{remark}

\section{Related Work}\label{sec:relatedwork}
\textbf{MMD GAN.} Since their introduction in \cite{GANoriginal}, many cost functions have been proposed  for training GANs. Related to our work is MMD GAN introduced in \cite{mmdGAN1,mmdGAN2} and later improved in \cite{li2017mmd}.\\

\noindent \textbf{Gradient Regularized GANs.} GAN's training is  notoriously known to be unstable. Wasserstein GAN 
\cite{WGAN} tackled that issue by considering $\calW_1$ as a loss function for training. Imposing lipchitizity in practice is challenging and     a gradient penalty  in the input space was introduced in  WGAN-GP \cite{gulrajani2017improved} as means to impose lipchitizity. Sobolev GAN \cite{SobolevGAN} connected this gradient penalty to advection and semi-Sobolev norms.  The work \cite{Arbel:2018} considered this input gradient regularizer in MMD GANs. 

Closely related to our study are the gradient penalties on the parameter space of the generator and the  discriminator  considered in  \cite{mescheder2017numerics}. It was demonstrated  in \cite{roth2017stabilizing,mescheder2018training}  that these gradient regularizers 
ensure stability and convergence of the min/max game. \cite{nagarajan2017gradient} showed that GANs with a variant of a gradient penalty on the parameter space of the discriminator is locally stable. Even though these works studied  the stability of the min/max game and its convergence to a saddle point, they  did not show the global convergence of the discrete gradient flow in the distributional sense which is what we  prove in this paper thanks to the newly proposed  Riemannian structure. \\

\noindent \textbf{Gradient Flows.}  Wasserstein $\mathcal{W}_2$ flows for minimizing functionals over probabilities (see  the excellent introduction  \cite{santambrogio2017euclidean}) are related to our work. These flows are non parametric and are given by the continuity equation. Recently Wasserstein flows were extended  to  statistical manifold \cite{chenoptimal}, and a kernelized Wasserstein  natural gradient flow was introduced in \cite{Arbel2020Kernelized}.  
Kernelized particle flows such as Stein Descent \cite{steindescent,Stein,duncan2019geometry}, Sobolev Descent\cite{SD} and  MMD flows \cite{arbel2019maximum} are not defined on a statistical manifold and only act on the particle level. \cite{Stein,duncan2019geometry} introduce similar Riemannian structures to ours for the Stein geometry, nevertheless they are not defined on a statistical manifold, it would be interesting to derive a dynamical form of the Stein metric  on a manifold of parametric explicit densities. \cite{barp2019minimum} studied minimum stein estimators and introduced a natural Stein gradient, it would be interesting to connect their study to a definition of an appropriate dynamic Stein  structure on a parametric manifold.

\section{Numerical Experiments}\label{sec:exp}
 \begin{figure*}[ht!]
\vspace{-1.0em}
    \begin{subfigure}[L]{0.65\textwidth}
    \centering
        \includegraphics[width=1\textwidth]{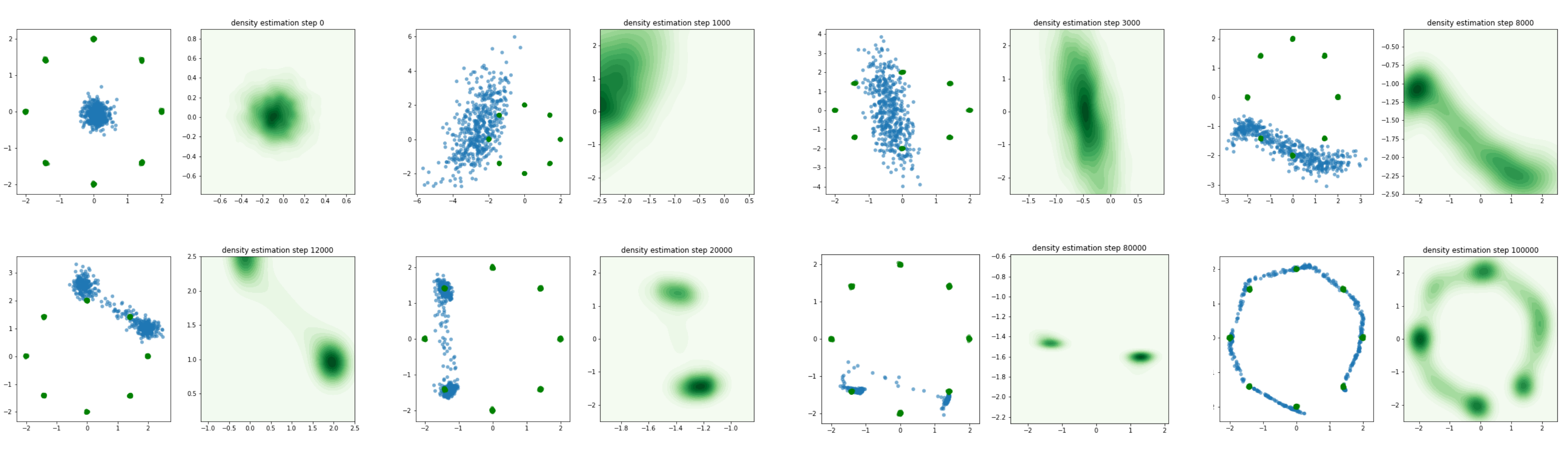}
        \caption{Trajectories of Flows for $\alpha=0$: Euclidean Gradients Flows}
        \label{fig:0traj}
    \end{subfigure}
    \begin{subfigure}[R]{0.4\textwidth}
    \centering
        \includegraphics[width=\textwidth]{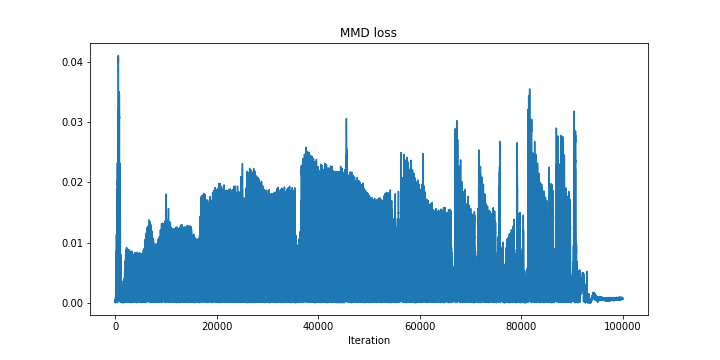}
        \caption{MMD Loss}
        \label{fig:0loss}
    \end{subfigure}
    \begin{subfigure}[L]{0.65\textwidth}
    \centering
        \includegraphics[width=\textwidth]{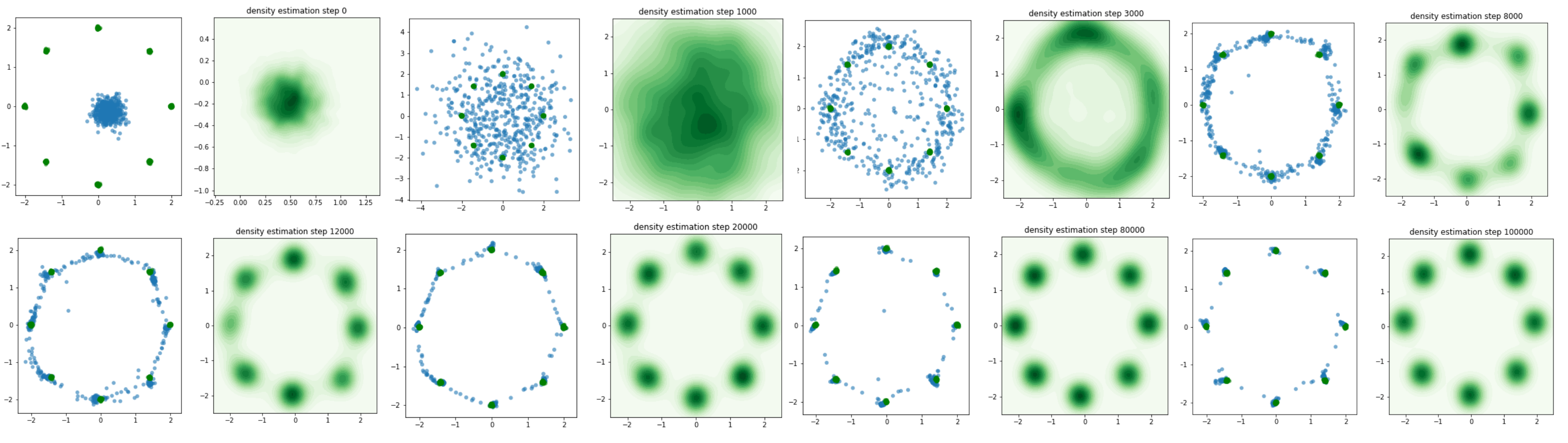}
        \caption{Trajectories of Flows for $\alpha=100$: Kernelized  Gradients Flows w.r.t. $d_{\alpha,\beta}$}
        \label{fig:100traj}
    \end{subfigure}
    \begin{subfigure}[R]{0.4\textwidth}
    \centering
        \includegraphics[width=\textwidth]{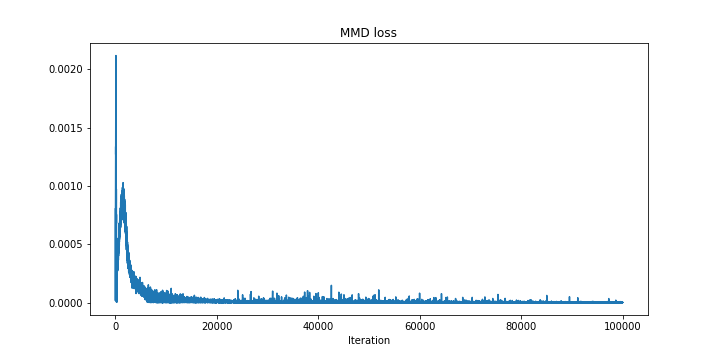}
        \caption{MMD Loss}
        \label{fig:100loss}
    \end{subfigure}
    
    \caption{ Trajectories of Kernelized Gradient flows of the $\mathrm{MMD}$ functional for $\alpha=0$ (no gradient regularization) and $\alpha=100$. It is clear that the Riemannian structure induced by  $d_{\alpha,\beta}$, $\alpha>0$ guarantees the convergence, while GAN suffer from cycles and mode collapse for $\alpha=0$. }
    \label{fig:convergence}
    \vskip -0.12 in    
\end{figure*}
 We consider a finite dimensional RKHS defined by a random feature map \cite{RF,bach2017breaking}  $\Phi(x):\Omega \to \mathbb{R}^m$. $\Phi$ in our case is a $4$ layers Relu Network with hidden dimension $m=512$, with weights sampled from  standard Gaussian and then fixed  \cite{NIPS2009_3628}. The space $\mathcal{H}=\{f(x)=\scalT{w}{\Phi(x)},\, w\in \mathbb{R}^m\}$.\\ 
 
\noindent \textbf{MMD GAN as a Kernelized Gradient Flow.}
In parametric gradient regularized  MMD GANs, we find the $\mathrm{MMD}_{\alpha,\beta}(p,q_{\theta_{\ell}})$ witness function $f_{\ell}$  between $p$ and $q_{\theta_{\ell}}$, and then update the generator according to: $\theta_{\ell+1}=\theta_{\ell}+\varepsilon_{\ell} L_{\theta_{\ell}}f_{\ell}$. 
We showed that this  coincides exactly with the gradient flow of the $\mathrm{MMD}$ functional w.r.t. the geometric structure $d_{\alpha,\beta}$ introduced in this paper, and we proved its  global convergence under mild assumptions. We show here an example, where the target distribution is a two dimensional mixture of Gaussians with $8$ modes. In this example, $\mathcal{Z}$  is $256$ dimensional space and $\nu$ is standard Gaussian on $\mathcal{Z}$. The  generator $G_{\theta}$ is a $2$ layer Relu Network  (hidden size $128$) and two dimensional output. We fix the mini-batch size in the training to $512$, the learning rate of the witness function and generator  is set to $1e^{-4}$. We train the witness function $f_{\ell}$ of $\mathrm{MMD}_{\alpha,\beta}$ with stochastic gradient (RmsProp \cite{Tieleman2012}) for $5$ iterations. Note that in this case only the last linear layer is updated in the witness function, and $\Phi$ is kept fixed.  We then update the parameter $\theta$ with gradient descent on $-\mathbb{E}_{q_{\theta_\ell}}f_{\ell}$. We use also RmsProp to ensure that the learning rates are adaptive. Note that given the witness function $f_{\ell}$, the discrete flow update given in Eq. \eqref{eq:update_theta} is exactly the gradient descent update on $-\mathbb{E}_{q_{\theta_\ell}}f_{\ell}$.\\

We give in Figure \ref{fig:convergence} the trajectories of the flows for $\alpha=100$ and $\alpha=0$ for $L=100K$ iterations. Since in our implementation (Appendix \ref{sec:algo}) we use stochastic gradient for learning the witness function $f_{\ell}$, the desired regularization effect of $\beta$ is ensured by early stopping and SGD \cite{yao2007earlystopping,bottou2016optimization}. The case $\alpha=100$ corresponds to the gradient flow w.r.t.  Riemannian structure $d_{\alpha,\beta}$, the case $\alpha=0$ corresponds to unregularized  $\mathrm{MMD}$ GAN that uses Euclidean gradients on the objective $\mathrm{MMD}(p,q_{\theta})$ \cite{mmdGAN1,mmdGAN2,li2017mmd}. We see from Figure \ref{fig:convergence} that the kernelized flows induced by $d_{\alpha,\beta}$ for $\alpha>0$ ensure graceful convergence of the generator to the target distribution as predicted by Theorem \ref{discrete-conv}, while  GAN suffers from cycles and mode collapse for $\alpha=0$.  \\

In Appendix \ref{App:plots}, we give  trajectories for neural witness functions (i.e. $\Phi$ is learned as well), and we see similar behavior to the fixed kernel. We leave for future work the extension of $d_{\alpha,\beta}$ and its gradient flows to neural function spaces as in \cite{chizat2018global,NIPS2018_8076,Ma_2018}. 
Analysing the dynamic of the  min-max optimization of parametric gradient regularized MMD GAN  descent in the neural case  will be interesting to conduct in the spirit of \cite{domingoenrich2020meanfield}.

\section{Conclusion} We propose an energy regularized gradient descent to $\mathrm{MMD}$ GAN training and derive a condition guaranteeing its global convergence.  We demonstrate that the resulting flow coincides with the gradient flow of $\mathrm{MMD}$ with respect to a newly proposed Riemannian structure on the statistical manifold of probability distributions. Our investigation deepens  the role of gradient regularization in GANs.
Future directions include more  understanding about the relationship between the static discrepancy and the dynamic discrepancy introduced in this paper, and investigation of other variational  problems on a 
statistical manifold using the proposed Riemannian structure. 

\bibliographystyle{plainnat}
\bibliography{refs,simplex}
\newpage
\appendix 

\section{Proofs}
The appendix parallels the paper in terms of sections and contains proofs of results presented within  each Section.  
\subsection{Preliminaries}
\begin{proof}[Proof of Proposition \ref{operator}]
The proof follows from noting  that $q_{\theta}= (G_{\theta})_{\#} \nu$ and $L_\theta(f) =\nabla_\theta 
\int f(x) q_\theta(dx) =\nabla_\theta[\langle f, \bmu_{q_\theta} \rangle_\calH]$.

[i)] $L_{\theta}$ and $L_{\theta}^{\top}$ are adjoint operators:
 \begin{eqnarray*}
\scalT{L_\theta(f)}{v} &=& \scalT{  \int   J_{\theta}G_{\theta}(z)\nabla f (G_{\theta}(z)) \nu(dz)}{v} \\
&=& \int v^{\top} J_{\theta}G_{\theta}(z) \nabla f (G_{\theta}(z)) \nu(dz) \\
&=&\int \scalT{\nabla f (G_{\theta}(z))}{J_{\theta}G_{\theta}(z) ^{\top}v}\nu(dz) \\
&=& \scalT{f}{\int \langle \nabla_x k(G_{\theta}(z),.), J_\theta G_{\theta}(z)^{\top}v\rangle \, \nu(d z)}_{\mathcal{H}}\\
&=& \scalT{f}{L_{\theta}^{\top}v}_{\calH}.
\end{eqnarray*}

[ii)] $D_\theta$ is symmetric, i.e., $\langle D_\theta f, g \rangle_{\mathcal H} =\langle  f, D_\theta g \rangle_{\mathcal H}$
     for $f, \, g\in \mathcal H$.
  \begin{align*}
   \langle D_\theta f, g \rangle_{\mathcal H}&= \langle L_\theta^{\top}L_{\theta} f, g \rangle_{\mathcal H}\\
     &=  \langle L_{\theta} f,   L_\theta g \rangle  \text{(applying (i))}\\
     &= \langle  f,  L_{\theta}^{\top} L_\theta g \rangle_{\calH}   \text{(applying (i) again)}\\
     &=  \langle  f, D_{\theta} g \rangle_{\calH} 
\end{align*}
[iii)] $\scalT{f}{D_\theta f}_{\mathcal{H}}=\|\nabla_\theta[\langle f, \bmu_{q_\theta} \rangle_\calH]\|^2\geq 0$. In particular, $D_\theta$ is a positive operator and hence its spectrum is contained in $[0,\infty)$.
\begin{align*}
\scalT{f}{D_\theta f}_{\mathcal{H}}&= \langle L_{\theta} f,   L_\theta f \rangle = ||  L_{\theta} f||^2 = \left|\left |\int   J_{\theta}G_{\theta}(z)\nabla f (G_{\theta}(z)) \nu(dz) \right|\right|^2 \\
&=\left|\left |\int  \nabla_\theta f(G_{\theta}(z)) \nu(dz) \right|\right|^2\\
&= \left|\left |  \scalT{f} { \int  \nabla_\theta k(G_{\theta}(z),.) \nu(dz)}_{\mathcal{H}} \right|\right|^2\\
&=  \left|\left |  \nabla_\theta \scalT{f} { \int   k(G_{\theta}(z),.) \nu(dz)}_{\mathcal{H}} \right|\right|^2 \text{ (by regularity of the kernel and linearity) }\\
&=  \left|\left |  \nabla_\theta \scalT{f} {\bmu_{q_{\theta}}}_{\mathcal{H}} \right|\right|^2
\end{align*}

[iv)] For $f\in \mathcal H$, we have $(D_\theta f)(x)= \langle D(x,\cdot), f \rangle_{\mathcal H}$ with 
$$ D(y,y') :=  \iint \langle  \partial_{\theta}k(G_{\theta}(z),y) ,  \partial_{\theta}k(G_{\theta}(z'),y') \rangle \nu(d z)\nu(d z')$$, where: $ \partial_{\theta}k(G_{\theta}(z),y) =J_{\theta}G_{\theta}(z) \nabla_{x} k(G_{\theta}(z),y)$. Equivalently $$D(y,y') =\iint  Trace (\nabla_{x} k(G_{\theta}(z),y) \otimes  \Gamma_{\theta}(z,z')  \nabla_x k(G_{\theta}(z'),y') )  \nu(d z)\nu(d z').$$
      
\begin{align*}
 \langle  f, D_{\theta} g \rangle_{\calH} &= \langle L_{\theta} f,   L_\theta g \rangle\\
 &= \scalT{ \int   J_{\theta}G_{\theta}(z)\nabla f (G_{\theta}(z)) \nu(dz) }{\int   J_{\theta}G_{\theta}(z')\nabla g (G_{\theta}(z')) \nu(dz') }\\
 &= \iint  \scalT{ J_{\theta}G_{\theta}(z)\nabla f (G_{\theta}(z)) }{    J_{\theta}G_{\theta}(z')\nabla g (G_{\theta}(z'))} \nu(dz) \nu(dz') \\
 &= \iint  \scalT{\nabla f (G_{\theta}(z)) }{ J_{\theta}G_{\theta}(z)^{\top}    J_{\theta}G_{\theta}(z')\nabla g (G_{\theta}(z'))} \nu(dz) \nu(dz') \\
  &= \iint  \scalT{\nabla f (G_{\theta}(z)) }{ \Gamma_{\theta}(z,z') \nabla g (G_{\theta}(z'))} \nu(dz) \nu(dz') \\
  &= \scalT{f}{   \iint  Trace (\nabla_{x} k(G_{\theta}(z),.) \otimes  \Gamma_{\theta}(z,z')  \nabla_x k(G_{\theta}(z'),.) )  \nu(d z)\nu(d z')  g}_{\cal{H}},
\end{align*}      
hence we identify the operator $D_{\theta}$.
\end{proof}
\subsection{Parametric Energy Regularization of MMD}
\begin{proof}[Proof of Proposition \ref{duality}]
It follows from the fact $f(x)=\langle f, k(x,\cdot)\rangle_{\mathcal H}$ that $\int f(x) p(dx) = \langle f, \bmu_{p} \rangle_{\mathcal H}$ for $f\in \mathcal H$. This and definition 
 \eqref{MMD} gives  $\mathrm{MMD}_{\alpha,\beta}(p,q_{\theta}) 
    = \sup_{f\in E_{\alpha,\beta}}   \langle f, \bmu_{p -q_\theta}\rangle_{\mathcal H}$.
 Then as the quadratic form in 
\eqref{E} is convex in $f\in \mathcal H$ and Slater’s condition holds due to $2\scalT{f}{(\alpha D_\theta + \beta I) f}_{\mathcal{H}} - 1<0$ when $f=0$,  we can apply standard  duality result \cite[Thm. 8.7.1]{Lu} to obtain
\begin{align*}
    \mathrm{MMD}_{\alpha,\beta}(p,q_{\theta}) 
    =  \min_{\gamma\geq 0} \sup_{f\in \mathcal H}  \Big\{ \langle f, \bmu_{p -q_\theta}\rangle_{\mathcal H} +\gamma \big[1- 2\scalT{f}{(\alpha D_\theta + \beta I) f}_{\mathcal{H}} \big]\Big\}.
\end{align*}
Since the above supremum is attained at the funtion $f$ satisfying $\bmu_{p -q_\theta}= 4 \gamma (\alpha D_\theta + \beta I) f$, we further get
\begin{align}\label{for-MMD}
    \mathrm{MMD}_{\alpha,\beta}(p,q_{\theta}) 
    &=\min_{\gamma\geq 0} \Big\{ \frac{1}{8\gamma} \scalT{\bmu_{p -q_\theta}}{(\alpha D_\theta  +\beta I)^{-1} \bmu_{p -q_\theta}}_{\mathcal{H}} +\gamma\Big\}\nonumber\\
    &=\sqrt{\frac12 \scalT{\bmu_{p -q_\theta}}{(\alpha D_\theta  +\beta I)^{-1} \bmu_{p -q_\theta}}_{\mathcal{H}}}.
\end{align}
By the same reasoning, we also have
\begin{align*}
    &\sup_{f\in \mathcal{H}} \left\{\int f(x)\, p(dx)- \int f(x)\,q_\theta(dx)  - \frac{\alpha}{2}  \nor{  \nabla_\theta[\langle f, \bmu_{q_\theta} \rangle_\calH] }^2 -\frac{\beta}{2} \nor{f}^2_{\mathcal{H}}\right\}\\
  &=\sup_{f\in \mathcal{H}} \langle f, \bmu_{p -q_\theta}\rangle_{\mathcal H} -\frac{1}{2} \scalT{f}{(\alpha D_\theta + \beta I) f}_{\mathcal{H}}  
  =\frac12  \scalT{\bmu_{p -q_\theta}}{(\alpha D_\theta  +\beta I)^{-1} \bmu_{p -q_\theta}}_{\mathcal{H}}
\end{align*}
and the witness function realizing this supremum is given by:
$$(\alpha D_\theta + \beta I) f^* = \bmu_{p -q_\theta}.$$
This together with \eqref{for-MMD} gives the  conclusion of the proposition.  
\end{proof}

\begin{proof}[Proof of Corollary \ref{upper-bound}]
By Proposition~\ref{duality}, we have
\begin{align*}
\mathrm{MMD}(p,q_{\theta})^2 - 2 \beta\, \mathrm{MMD}_{\alpha,\beta}(p,q_{\theta})^2=
\|\bmu_{p -q_\theta}\|_{\mathcal H}^2 -\beta \langle \bmu_{p -q_\theta}, f^* \rangle_{\calH}
=\langle \bmu_{p -q_\theta}, \bmu_{p -q_\theta} -\beta f^* \rangle_{\calH}.
\end{align*}
But  equation \eqref{witness-fn} for $f^*$ implies that $ \bmu_{p -q_\theta} -\beta f^* = \alpha D_\theta f^*$. Therefore, we obtain
\begin{align}\label{useful-iden}
    &\mathrm{MMD}(p,q_{\theta})^2 - 2 \beta\,  \mathrm{MMD}_{\alpha,\beta}(p,q_{\theta})^2
    =
\alpha \langle \bmu_{p -q_\theta}, D_\theta f^* \rangle_{\calH}\\
&=\alpha \langle (\alpha D_\theta  +\beta I) f^*, D_\theta f^* \rangle_{\calH}
=\alpha \Big[\alpha \| D_\theta f^*\|_{\calH}^2 +\beta \langle f^*, D_\theta f^* \rangle_{\calH} \Big]\geq 0.\nonumber
\end{align}
This in particular gives the  first conclusion of the corollary. When $\alpha>0$, it also shows that the equality happens if and only if $D_\theta f^*=0$. We next claim  that $D_\theta f^*=0$ is equivalent to $D_\theta \bmu_{p-q_\theta} =0$. Indeed, if $D_\theta f^*=0$ then it follows from 
\eqref{witness-fn} that $\bmu_{p-q_\theta} =\beta f^*$ and hence $D_\theta \bmu_{p-q_\theta}=\beta  D_\theta f^*=0$. Conversely, if $D_\theta \bmu_{p-q_\theta} =0$ then we obtain from \eqref{witness-fn} that 
$\alpha D_\theta(D_\theta f^*)  +\beta D_\theta f^* = 0$ yielding $\alpha \langle D_\theta(D_\theta f^*),  D_\theta f^*\rangle_{\calH}  +\beta \langle D_\theta f^* ,  D_\theta f^*\rangle_{\calH} = 0$. As the two terms are nonnegative, this only happens if $D_\theta f^*=0$. Thus we have proved the claim and the proof is complete.
\end{proof}

\subsection{Generative Adversarial Networks via Parametric Regularized Flows}
\begin{proof}[Proof of Theorem \ref{theo:statisticalManifoldDescent}]
 From the definition of $\calF$ and by using \eqref{rate-kernel-embedding}, we have 
\begin{eqnarray*}
\frac{ d \mathcal{F}(q_{\theta_{t}})}{dt}
= \scalT{\bmu_{p -q_{\theta_t}}}{ - \frac{d}{dt} \bmu_{q_{\theta_t}} }_{\mathcal{H}}
= - \scalT{\bmu_{p -q_{\theta_t}}}{D_{\theta_t} f_{t} }_{\mathcal{H}}.
\end{eqnarray*}
But from the identity \eqref{useful-iden} we obtain
\[ \scalT{\bmu_{p -q_{\theta_t}}}{ D_{\theta_t}f_{t} }_{\mathcal{H}}   = \frac{2}{\alpha}\big[ \calF(q_{\theta_t}) - \beta  \mathrm{MMD}_{\alpha,\beta}(p,q_{\theta_t})^2\big].
\]
Therefore, we conclude that the equality  in \eqref{rate} holds. On the other hand, the nonpositivity   in \eqref{rate} and the strict inequality assertion are just a consequence of Corollary~\ref{upper-bound}. 
\end{proof}

\section{Non-Asymptotic Convergence Of Gradient Descent In MMD GAN }
 \begin{proof}[Proof of Lemma \ref{compute-rate}]
 It is easy to see that
 \begin{align*}
\mathrm{MMD}( p, q)^2
=\langle \bmu_q  -\bmu_p,  \bmu_q -\bmu_p\rangle_{\calH}=  \iint  k(x,y) q(dx) q(dy) 
-2 \int \bmu_p(x) q(dx)   + \| \bmu_p\|_{\calH}^2.
\end{align*}
It follows that
\begin{align*}
 \frac{d}{d\e}
\calF(q_{\theta +\e v})
&=\frac12  \frac{d}{d\e}     \iint  k(G_{\theta+\e v}(z),G_{\theta+\e v}(z')) \nu(dz) \nu(dz') -  \frac{d}{d\e} \int \bmu_p(G_{\theta+\e v}(z)) \nu(dz)\\  
&= \frac12 \Big[   \iint \langle  \nabla_x k(G_{\theta+\e v}(z),G_{\theta+\e v}(z')),J_{\theta+\e v} G_{\theta+\e v}(z)^{\top}v \rangle  \nu(dz) \nu(dz')\\  
&\qquad + \iint  \langle  \nabla_y k(G_{\theta+\e v}(z),G_{\theta +\e v}(z')),J_{\theta +\e v} G_{\theta +\e v}(z')^{\top}v \rangle \nu(dz) \nu(dz') \Big]\\
&\qquad - \int \langle  \nabla\bmu_p(G_{\theta +\e v}(z)),J_{\theta +\e v} G_{\theta +\e v}(z)^{\top}v \rangle \nu(dz).
\end{align*}
Since $\nabla_x k(z,z') =\nabla_y k(z',z)$ due to the symmetry of $k(x,y)$, we deduce that 
\begin{align*}
  \frac{d}{d\e}
\calF(q_{\theta +\e v})
&=   \iint \langle  \nabla_x k(G_{\theta +\e v}(z),G_{\theta +\e v}(z')),J_{\theta +\e v} G_{\theta +\e v}(z)^{\top}v \rangle  \nu(dz) \nu(dz')\\
&\quad - \int \langle  \nabla\bmu_p(G_{\theta +\e v}(z)),J_{\theta + \e v} G_{\theta +\e v}(z)^{\top}v \rangle \nu(dz)\\
&= \int (L_{\theta+\e v}^{\top} v)(G_{\theta +\e v}(z')) \nu(dz')
-\langle L_{\theta +\e v} \bmu_p, v \rangle.
\end{align*}
Observe that as $L_\theta^{\top} v \in \calH$, we have $(L_\theta^{\top} v)(G_{\theta}(z')) =\big\langle k(G_{\theta}(z'), \cdot ),  L_\theta^{\top} v \big\rangle_{\calH}$ which implies that
$\int (L_\theta^{\top} v)(G_{\theta}(z')) \nu(dz') = \Big\langle \int  k(G_{\theta}(z'), \cdot ) \nu(dz'),  L_\theta^{\top} v \Big\rangle_{\calH}=\big\langle \bmu_{q_\theta},  L_\theta^{\top} v \big\rangle_{\calH}$. Thus we can rewrite the above expression as 
\begin{align*}
  \frac{d}{d\e}  
\calF(q_{\theta +\e v})
&= \big\langle \bmu_{q_{\theta+\e v}},  L_{\theta +\e v}^{\top} v \big\rangle_{\calH} -  \langle L_{\theta +\e v} \bmu_p, v \rangle =-\big\langle \bmu_{p- q_{\theta +\e v}},  L_{\theta +\e v}^{\top} v \big\rangle_{\calH}.
\end{align*}
\end{proof}

 \begin{proof}[Proof of Proposition \ref{rate0}]
 From Lemma~\ref{compute-rate}  and by our choice of $v^*$ we get 
\begin{equation*}
 \frac{d}{d\e}\Big|_{\e=0}   
\calF(q_{\theta +\e v^*})
 =- \langle  \bmu_{p-q_{\theta}}, L_\theta^{\top} L_\theta f \rangle_{\calH}=- \langle  \bmu_{p-q_{\theta}}, D_\theta f \rangle_{\calH}
 =-\Big[\alpha \| D_\theta f\|_{\calH}^2  
+\beta \langle f, D_\theta f \rangle_{\calH}\Big].
\end{equation*}
Since $D_\theta \bmu_{p-q_{\theta}} \neq 0$, we
have from the proof of Corollary~\ref{upper-bound} that $D_\theta f\neq 0$. Therefore, the above expression is negative, i.e., $v^*$ is a descent direction.
\end{proof}  

\subsection{Discrete time descent for MMD GAN}
In order to prove Theorem~\ref{discrete-conv} we need  two auxiliary lemmas. The first one is:
\begin{lemma}\label{Lipschitz-est}  Assume that $k$ and $G$ satisfy \eqref{kG-cond-1}  and   \eqref{extra-1}--\eqref{extra-2}. Then  we have for any $\theta\in\Theta$, $\e\geq 0$, and $v\in \R^p$ that
\begin{align*}
 &\|  \bmu_{q_{\theta} - q_{\theta +\e v }}\|_\calH\leq C_1  \e \|v\|,\qquad 
\big\|  L_{\theta+\e v}^\top
    v  \big\|_\calH\leq C_2 \|v\|,\\
   & \big\| ( L_{\theta+\e v}^\top
       -L_{\theta}^\top \big)v  \big\|_{\calH}
       \leq (C_3 + C_4)\e\|v\|^2,
\end{align*}
where  $C_1 := L  \E_{\nu}[D(z)]$, $C_2 := \sqrt{d p} L  \E_{\nu}[D(z)]$, $C_3 := \sqrt{p} \tilde L  \E_{\nu}[D(z)^2]$, and $C_4 := \sqrt{d} L  \E_{\nu}[\tilde D(z)]$.
\end{lemma}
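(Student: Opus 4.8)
The plan is to prove the three estimates in turn, each reducing to an elementary bound on a Bochner integral once two pointwise consequences of the Lipschitz hypotheses are recorded. First I would note the preliminaries: since $\theta\mapsto G_{\theta}(z)$ is $D(z)$-Lipschitz by \eqref{kG-cond-1}, passing to difference quotients in each coordinate gives $\nor{\partial_{\theta_i}G_{\theta}(z)}\le D(z)$ and hence $\nor{J_{\theta}G_{\theta}(z)}\le\sqrt{p}\,D(z)$ with the Frobenius matrix norm; likewise, since $x\mapsto k(x,\cdot)$ is $L$-Lipschitz into $\calH$, each partial $\partial_{x_a}k(x,\cdot)$ has $\calH$-norm at most $L$, so $\nor{\nabla_x k(x,\cdot)}_{\calH}\le\sqrt{d}\,L$ for the product norm on $\calH^d$. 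These two facts, together with assumptions \eqref{extra-1}--\eqref{extra-2} and the estimate $\nor{Ax}\le\nor{A}\nor{x}$ (Frobenius norm), are all that is needed.

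For the first bound I would write $\bmu_{q_{\theta}-q_{\theta+\e v}}=\int\bigl(k(G_{\theta}(z),\cdot)-k(G_{\theta+\e v}(z),\cdot)\bigr)\nu(dz)$, pull the $\calH$-norm inside, and apply the two inequalities of \eqref{kG-cond-1} in succession to get $\nor{\bmu_{q_{\theta}-q_{\theta+\e v}}}_{\calH}\le\int L\,D(z)\,\e\nor{v}\,\nu(dz)=C_1\e\nor{v}$; the integral is finite because $\E_{\nu}[D(z)^2]<\infty$ forces $\E_{\nu}[D(z)]<\infty$. For the second bound I would estimate directly from $L_{\theta}^{\top}v=\int\langle\nabla_x k(G_{\theta}(z),\cdot),J_{\theta}G_{\theta}(z)^{\top}v\rangle\,\nu(dz)$: moving the norm inside and using $\nor{J_{\theta}G_{\theta}(z)^{\top}v}\le\sqrt{p}\,D(z)\nor{v}$ together with $\nor{\nabla_x k(G_{\theta}(z),\cdot)}_{\calH}\le\sqrt{d}\,L$ gives $\nor{L_{\theta}^{\top}v}_{\calH}\le\sqrt{dp}\,L\,\E_{\nu}[D(z)]\nor{v}=C_2\nor{v}$, uniformly in $\theta$, hence also with $\theta$ replaced by $\theta+\e v$.

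The third bound is where \eqref{extra-1}--\eqref{extra-2} enter, and the key move is a telescoping split of the integrand of $(L_{\theta+\e v}^{\top}-L_{\theta}^{\top})v=\int\bigl(\langle\nabla_x k(G_{\theta+\e v}(z),\cdot),J_{\theta+\e v}G_{\theta+\e v}(z)^{\top}v\rangle-\langle\nabla_x k(G_{\theta}(z),\cdot),J_{\theta}G_{\theta}(z)^{\top}v\rangle\bigr)\nu(dz)$ into $\langle\nabla_x k(G_{\theta+\e v}(z),\cdot)-\nabla_x k(G_{\theta}(z),\cdot),\,J_{\theta+\e v}G_{\theta+\e v}(z)^{\top}v\rangle+\langle\nabla_x k(G_{\theta}(z),\cdot),\,(J_{\theta+\e v}G_{\theta+\e v}(z)-J_{\theta}G_{\theta}(z))^{\top}v\rangle$. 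For the first summand, \eqref{extra-1} composed with the second inequality of \eqref{kG-cond-1} bounds its first factor by $\tilde L\,D(z)\,\e\nor{v}$ while the preliminary bound controls its second factor by $\sqrt{p}\,D(z)\nor{v}$, so integration contributes at most $\sqrt{p}\,\tilde L\,\E_{\nu}[D(z)^2]\,\e\nor{v}^2=C_3\e\nor{v}^2$. For the second summand, $\nor{\nabla_x k(G_{\theta}(z),\cdot)}_{\calH}\le\sqrt{d}\,L$ and \eqref{extra-2} give $\nor{(J_{\theta+\e v}G_{\theta+\e v}(z)-J_{\theta}G_{\theta}(z))^{\top}v}\le\tilde D(z)\,\e\nor{v}^2$, contributing at most $\sqrt{d}\,L\,\E_{\nu}[\tilde D(z)]\,\e\nor{v}^2=C_4\e\nor{v}^2$; adding the two completes the estimate.

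I do not expect a conceptual obstacle, since the whole lemma is a chain of triangle-inequality and Cauchy--Schwarz estimates. The only points needing attention are the bookkeeping of matrix norms — the factors $\sqrt{d}$, $\sqrt{p}$, $\sqrt{dp}$ in the constants arise exactly from using the coarse bound $\nor{Ax}\le\nor{A}\nor{x}$ with the Frobenius norm instead of the operator norm — and verifying that each Bochner integral converges, which is precisely where the moment hypotheses $\E_{\nu}[D(z)^2]<\infty$ and $\E_{\nu}[\tilde D(z)]<\infty$ are invoked.
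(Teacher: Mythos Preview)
Your proposal is correct and follows essentially the same approach as the paper's proof: the same preliminary pointwise bounds $\nor{J_{\theta}G_{\theta}(z)}\le\sqrt{p}\,D(z)$ and $\nor{\nabla_x k(x,\cdot)}_{\calH}\le\sqrt{d}\,L$, the same Bochner-integral estimates for the first two inequalities, and the identical telescoping split of the integrand for the third. The only difference is organizational --- you front-load the preliminaries while the paper interleaves them --- but the argument is step-for-step the same.
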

\begin{proof}
We have 
  \begin{align*}
 \|  \bmu_{q_{\theta} - q_{\theta +\e v }}\|_\calH
 &=\Big\| \int \big[ k(G_{\theta}(z),\cdot) - k(G_{\theta +\e v}(z),\cdot )\big]  \nu(dz) \Big\|_\calH\\
&\leq  \int \Big\|  k(G_{\theta}(z),\cdot) - k(G_{\theta +\e v}(z),\cdot )\Big\|_\calH  \nu(dz).  \end{align*}
 Hence by using  Lipschitz condition \eqref{kG-cond-1}, we  get
 \begin{align*}
 \|  \bmu_{q_{\theta} - q_{\theta +\e v }}\|_\calH&\leq  L \e \|v\| \int D(z)  \nu(dz) =: C_1  \e \|v\|. 
 \end{align*}
 
 In the following calculations, we use the observation that  if $h=(h_1,...,h_d)\in \calH^d$ and  $b\in \R^d$ then 
\[
\| \langle h, b\rangle \|_\calH=  \| \sum_{i=1}^d h_i b_i\|_\calH \leq \sum_{i=1}^d   \|h_i\|_\calH |b_i| \leq \big(\sum_{i=1}^d   \|h_i\|_\calH^2\big)^\frac12 \big(\sum_{i=1}^d   b_i^2\big)^\frac12= \|h\|_\calH \|b\|.
\]
Also note that condition \eqref{kG-cond-1} implies that 
\[
\| \nabla_x k(x,.)\|_\calH\leq \sqrt{d} \, L \quad \mbox{and}\quad 
\|J_{\theta} G_{\theta}(z)\| \leq \sqrt{p} \, D(z).
\]
We next have 
\begin{align*}
   \big\|  L_{\theta+\e v}^\top
       v  \big\|_{\calH}
       &=\Big\| \int \langle \nabla_x k(G_{\theta+\e v }(z),.), J_{\theta +\e v} G_{\theta+\e v }(z)^{\top}v\rangle \nu(d z)\Big\|_{\calH}\\
       &\leq  \int  \Big\|\langle \nabla_x k(G_{\theta+\e v }(z),.), J_{\theta +\e v} G_{\theta+\e v }(z)^{\top}v\rangle\Big\|_{\calH} \nu(d z)
       \\
       &\leq  \|v\| \int  \| \nabla_x k(G_{\theta+\e v }(z),.)\|_\calH \|J_{\theta +\e v} G_{\theta+\e v }(z)^{\top}\|  \nu(d z)
       \\
       &\leq \sqrt{d p }  \, L \|v\| \int D(z)   \nu(d z)=: C_2 \|v\|.
\end{align*}
Finally, observe that
\begin{align*}
   &\big\| ( L_{\theta+\e v}^\top
       -L_{\theta}^\top \big)v  \big\|_{\calH}
       =\Big\| \int A_\theta(z,\cdot) \nu(d z)\Big\|_{\calH} \leq \int \|A_\theta(z,\cdot)\|_{\calH} \nu(d z),
\end{align*}
where 
\begin{align*}
 A_\theta(z, \cdot) 
 &:=\langle \nabla_x k(G_{\theta+\e v }(z),.), J_{\theta +\e v} G_{\theta+\e v }(z)^{\top}v\rangle -   \langle \nabla_x k(G_{\theta}(z),.), J_\theta G_{\theta}(z)^{\top}v\rangle \\
 &= \langle \big[\nabla_x k(G_{\theta+\e v }(z),.) - \nabla_x k(G_{\theta}(z),.)\big], J_{\theta +\e v} G_{\theta+\e v }(z)^{\top}v\rangle \\
 &\quad +\langle \nabla_x k(G_{\theta}(z),.), \big[J_{\theta +\e v} G_{\theta+\e v }(z)^{\top} -J_\theta G_{\theta}(z)^{\top}\big]v\rangle.
\end{align*}
It follows that 
$\big\| ( L_{\theta+\e v}^\top
       -L_{\theta}^\top \big)v  \big\|_{\calH}
        \leq  I_1 + I_2$
with 
\begin{align*}
I_1 
&\leq  \|v\|\int  \| \nabla_x k(G_{\theta+\e v }(z),.) - \nabla_x k(G_{\theta}(z),.)\|_\calH  \|J_{\theta +\e v} G_{\theta+\e v }(z)^{\top} \|  \nu(dz)\\
&\leq  \sqrt{p} \tilde L \|v\|\int  \|G_{\theta+\e v }(z)- G_{\theta}(z)\|  D(z) \nu(dz)
\leq \sqrt{p} \tilde L  \e \|v\|^2 \int   D(z)^2 \nu(dz)=: C_3 \e \|v\|^2    
\end{align*}
and
\begin{align*}
I_2 
&\leq  \|v\| \int  \| \nabla_x k(G_{\theta}(z),.)\|_\calH  \big\|J_{\theta +\e v} G_{\theta+\e v }(z)^{\top} -J_\theta G_{\theta}(z)^{\top}\big\|  \nu(dz)\\
&\leq \sqrt{d} L \e \|v\|^2 \int  \tilde D(z) \nu(dz) 
=: C_4 \e \|v\|^2.    
\end{align*}
Notice that we have used condition \eqref{extra-1} to estimate $I_1$ and condition \eqref{extra-2} to estimate $I_2$. 
By combining the above estimates, we obtain the desired result.
\end{proof}

The second one is:

\begin{lemma}\label{quantity-est} 
Under the assumptions of Theorem~\ref{discrete-conv},  we have  
  \begin{align}\label{second-difference}
   \calF(q_{\theta_{\ell+1}})  -\calF(q_{\theta_\ell })
   \leq - \frac12 \e_\ell \langle  \bmu_{p-q_{\theta}}, D_{\theta_\ell} f_\ell 
  \rangle_{\calH}\quad \mbox{for all }\ell\geq 1.
  \end{align}
\end{lemma}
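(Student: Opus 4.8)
The plan is to interpolate linearly between $\theta_\ell$ and $\theta_{\ell+1}$, integrate the derivative of $\calF$ along the segment via Lemma~\ref{compute-rate}, and then show the integrand stays within a controlled factor of its value at the left endpoint. Write $v_\ell := L_{\theta_\ell}(f_\ell)\in\R^p$, so that $\theta_{\ell+1}=\theta_\ell+\e_\ell v_\ell$, and set $\vartheta_s := \theta_\ell + s\,\e_\ell v_\ell$ for $s\in[0,1]$. Applying Lemma~\ref{compute-rate} with base point $\theta_\ell$ and direction $v_\ell$, and rescaling $\e=s\e_\ell$ by the chain rule,
\[
\frac{d}{ds}\,\calF(q_{\vartheta_s}) = -\,\e_\ell\,\big\langle \bmu_{p-q_{\vartheta_s}},\, L_{\vartheta_s}^\top v_\ell\big\rangle_\calH ,
\]
hence $\calF(q_{\theta_{\ell+1}})-\calF(q_{\theta_\ell}) = -\e_\ell\int_0^1 \langle \bmu_{p-q_{\vartheta_s}}, L_{\vartheta_s}^\top v_\ell\rangle_\calH\,ds$. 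At $s=0$ the integrand equals $\langle \bmu_{p-q_{\theta_\ell}}, L_{\theta_\ell}^\top L_{\theta_\ell} f_\ell\rangle_\calH = \langle \bmu_{p-q_{\theta_\ell}}, D_{\theta_\ell} f_\ell\rangle_\calH$; using the witness equation \eqref{witness} in the form $\bmu_{p-q_{\theta_\ell}} = \alpha_\ell D_{\theta_\ell}f_\ell + \beta_\ell f_\ell$ and positivity of $D_{\theta_\ell}$ (Proposition~\ref{operator}), this quantity is nonnegative and satisfies $\langle \bmu_{p-q_{\theta_\ell}}, D_{\theta_\ell} f_\ell\rangle_\calH \ge \beta_\ell\,\langle f_\ell, D_{\theta_\ell} f_\ell\rangle_\calH = \beta_\ell\,\|L_{\theta_\ell} f_\ell\|^2 = \beta_\ell\|v_\ell\|^2$.

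Next I would control the deviation $R(s):=\langle \bmu_{p-q_{\vartheta_s}}, L_{\vartheta_s}^\top v_\ell\rangle_\calH - \langle \bmu_{p-q_{\theta_\ell}}, D_{\theta_\ell} f_\ell\rangle_\calH$ by splitting it as
\[
R(s) = \big\langle \bmu_{q_{\theta_\ell}-q_{\vartheta_s}},\, L_{\vartheta_s}^\top v_\ell\big\rangle_\calH + \big\langle \bmu_{p-q_{\theta_\ell}},\, (L_{\vartheta_s}^\top - L_{\theta_\ell}^\top) v_\ell\big\rangle_\calH
\]
and applying Lemma~\ref{Lipschitz-est} with perturbation $\e=s\e_\ell$, $v=v_\ell$ (so $\theta+\e v=\vartheta_s$), together with $\|\bmu_{p-q_{\theta_\ell}}\|_\calH=\sqrt{2\calF(q_{\theta_\ell})}$ and Cauchy--Schwarz; this yields $|R(s)| \le s\,\e_\ell\,\|v_\ell\|^2\,\big[ C_1 C_2 + (C_3+C_4)\sqrt{2\,\calF(q_{\theta_\ell})}\,\big]$. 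Assuming $\calF(q_{\theta_\ell})\le \calF(q_{\theta_1})$, the bracket is $\le C\big(1+\sqrt{\calF(q_{\theta_1})}\big)$ for a constant $C$ depending only on $C_1,\dots,C_4$; combining with $\|v_\ell\|^2 \le \beta_\ell^{-1}\langle \bmu_{p-q_{\theta_\ell}}, D_{\theta_\ell} f_\ell\rangle_\calH$ from the first paragraph and the step-size condition \eqref{step-size-cond}, which is exactly $\e_\ell\beta_\ell^{-1}C(1+\sqrt{\calF(q_{\theta_1})})\le 1$, we obtain $|R(s)| \le s\,\langle \bmu_{p-q_{\theta_\ell}}, D_{\theta_\ell} f_\ell\rangle_\calH$. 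Plugging this into the integral representation,
\[
\calF(q_{\theta_{\ell+1}})-\calF(q_{\theta_\ell}) \le -\e_\ell\int_0^1 (1-s)\,\langle \bmu_{p-q_{\theta_\ell}}, D_{\theta_\ell} f_\ell\rangle_\calH\,ds = -\tfrac12\,\e_\ell\,\langle \bmu_{p-q_{\theta_\ell}}, D_{\theta_\ell} f_\ell\rangle_\calH ,
\]
which is \eqref{second-difference}.

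Since the right-hand side of \eqref{second-difference} is $\le 0$, the estimate also shows $\calF$ is nonincreasing along the iterates, so the hypothesis $\calF(q_{\theta_\ell})\le\calF(q_{\theta_1})$ is not an extra assumption but something to bootstrap: I would package the whole argument as an induction on $\ell$, with trivial base case $\ell=1$ and inductive step deriving both \eqref{second-difference} at level $\ell$ and $\calF(q_{\theta_{\ell+1}})\le\calF(q_{\theta_\ell})\le\calF(q_{\theta_1})$. I expect the main obstacle to be precisely this bootstrapping together with the bookkeeping of constants: the only mechanism for absorbing the first-order error $R(s)$ into \emph{half} of the main term $\langle \bmu_{p-q_{\theta_\ell}}, D_{\theta_\ell} f_\ell\rangle_\calH$ is the inequality $\|L_{\theta_\ell}f_\ell\|^2 \le \beta_\ell^{-1}\langle \bmu_{p-q_{\theta_\ell}}, D_{\theta_\ell} f_\ell\rangle_\calH$ coming from the witness equation, and its interplay with the a priori bound on $\calF(q_{\theta_\ell})$ must be tracked carefully; the remaining ingredients (the Lipschitz estimates of Lemma~\ref{Lipschitz-est} and the integration in $s$) are routine.
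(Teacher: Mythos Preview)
Your proof is correct and follows essentially the same approach as the paper: interpolate along the segment, apply Lemma~\ref{compute-rate} to get an integral representation, split off the value at the left endpoint, bound the remainder with Lemma~\ref{Lipschitz-est}, absorb $\|v_\ell\|^2$ via $\|L_{\theta_\ell}f_\ell\|^2\le\beta_\ell^{-1}\langle\bmu_{p-q_{\theta_\ell}},D_{\theta_\ell}f_\ell\rangle_\calH$, and bootstrap the monotonicity $\calF(q_{\theta_\ell})\le\calF(q_{\theta_1})$ by induction using~\eqref{step-size-cond}. The only differences are cosmetic (you parametrize by $s\in[0,1]$ rather than $\e\in[0,\e_\ell]$, and you integrate $(1-s)$ directly rather than first deriving the intermediate estimate~\eqref{key-est}).
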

\begin{proof}
Let  $\ell\geq 1$ be arbitrary. For convenience, let us write $\theta := \theta_\ell$, $v := L_{\theta_{\ell}} f_{\ell}$, $\theta' := \theta_{\ell}+ \varepsilon_{\ell}v$, and $f:=f_\ell$. 
 Then by  Lemma~\ref{compute-rate}, we have
\begin{equation*}
 \frac{d}{d\e}
\calF(q_{\theta +\e v })
 =- \langle  \bmu_{p-q_{\theta+\e v }}, L_{\theta+\e v}^\top v \rangle_{\calH} \quad \forall \e\in [0, \e_\ell].
 \end{equation*}
 Therefore,
 \begin{align}\label{first-difference}
   \calF(q_{\theta'})  -\calF(q_{\theta })
  & =\int_0^{\e_\ell} \frac{d}{d\e}\big[
\calF(q_{\theta +\e v })\big]\, d\e
= -\int_0^{\e_\ell} \langle  \bmu_{p-q_{\theta+\e v }}, L_{\theta+\e v}^\top v \rangle_{\calH} d\e \nonumber \\
&= -\e_\ell \langle  \bmu_{p-q_{\theta}}, L_{\theta}^\top v \rangle_{\calH} 
-\int_0^{\e_\ell} \Big[ \langle  \bmu_{p-q_{\theta +\e v}}, L_{\theta+\e v}^\top v \rangle_{\calH} -
 \langle  \bmu_{p-q_{\theta}}, L_{\theta}^\top v\rangle_{\calH} \Big] d\e. 
 \end{align}
 To estimate  the integral term,  observe that
 \begin{align*}
 \Big|\langle  \bmu_{p-q_{\theta+\e v }}, L_{\theta+\e v}^\top v \rangle_{\calH} 
 &-\langle  \bmu_{p-q_{\theta}}, L_{\theta}^\top v\rangle_{\calH}\Big|\\
 &= \Big|\langle  \bmu_{q_{\theta} - q_{\theta +\e v }}, L_{\theta +\e v}^\top  v  \rangle_{\calH}
 +\langle  \bmu_{p-q_{\theta}}, (L_{\theta +\e v}^\top -L_{\theta}^\top) v\rangle_{\calH}\Big|\\
 &\leq \|  \bmu_{q_{\theta} - q_{\theta +\e v }}\|_\calH 
 \big\|  L_{\theta+\e v}^\top
    v  \big\|_\calH
    + \|\bmu_{p-q_{\theta}}\|_\calH \big\| ( L_{\theta+\e v}^\top
    -L_{\theta}^\top \big)v  \big\|_\calH.
 \end{align*}
This together with Lemma~\ref{Lipschitz-est} 
and the definition of $\calF(q_\theta)$ gives
 \begin{align*}
 \Big| \langle  \bmu_{p-q_{\theta+\e v }}, L_{\theta+\e v}^\top v \rangle_{\calH} 
 &-\langle  \bmu_{p-q_{\theta}}, L_{\theta}^\top v\rangle_{\calH}\Big| \leq   C \e   \Big(1 + \sqrt{\calF(q_\theta)}\Big) \|v\|^2
 \end{align*}
 with $C>0$ depending only on the constants $C_1, \, C_2, \, C_3, \, C_4$ given in Lemma~\ref{Lipschitz-est}. 
 Then by combining with \eqref{first-difference} and  $L_{\theta}^\top v =D_\theta f$ we arrive at 
 \begin{align*}
   \calF(q_{\theta'})  -\calF(q_{\theta })
  &\leq  -\e_\ell \langle  \bmu_{p-q_{\theta}}, D_{\theta} f 
  \rangle_{\calH} +C 2^{-1} \e_\ell^2   \Big(1 + \sqrt{\calF(q_\theta)}\Big) \|v\|^2. 
  \end{align*}
 Since  $\langle  \bmu_{p-q_{\theta}}, D_{\theta} f 
  \rangle_{\calH}=\langle  (\alpha D_\theta +\beta I) f, D_{\theta} f 
  \rangle_{\calH}=\alpha \|D_\theta f\|_\calH^2 + \beta \langle f, D_{\theta} f \rangle_\calH $ and
  $\|v\|^2 = \langle f, D_{\theta} f \rangle_\calH$ by iii) of Proposition~\ref{operator}, we also have  $\|v\|^2  \leq \frac{1}{\beta} \langle \bmu_{p-q_{\theta}}, D_{\theta} f_\ell \rangle_\calH$. Thus  we infer further that
 \begin{align}\label{key-est}
   \calF(q_{\theta_{\ell+1}})  -\calF(q_{\theta_\ell })
  &\leq  -\Big[1- C (2\beta)^{-1}\e_\ell \Big(1 + \sqrt{\calF(q_{\theta_\ell})}\Big) 
  \Big]\e_\ell \langle  \bmu_{p-q_{\theta_\ell}}, D_{\theta_\ell} f_\ell \rangle_{\calH}\quad \forall \ell\geq 1,
  \end{align}
where we have returned to the original notation. We claim that \eqref{key-est} implies in particular that 
 $\calF(q_{\theta_\ell })\leq \calF(q_{\theta_1 }) $ for all $\ell\geq 1$. Indeed,  from \eqref{key-est} for $\ell=1$ and condition \eqref{step-size-cond} we get $\calF(q_{\theta_{2}})  -\calF(q_{\theta_1 })\leq 0$. Now if $\calF(q_{\theta_\ell })\leq \calF(q_{\theta_1 }) $ for some $\ell\geq 1$, then again \eqref{key-est}  and condition \eqref{step-size-cond} give
\[
\calF(q_{\theta_{\ell+1}})  -\calF(q_{\theta_\ell })
  \leq  -\Big[1- C (2\beta)^{-1}\e_\ell \Big(1 + \sqrt{\calF(q_{\theta_1})}\Big) \Big]\e_\ell \langle  \bmu_{p-q_{\theta_\ell}}, D_{\theta_\ell} f_\ell \rangle_{\calH}\leq 0.
\]
Thus,  $\calF(q_{\theta_{\ell+1}})  \leq \calF(q_{\theta_\ell })\leq \calF(q_{\theta_1 })$. Therefore, it follows by induction that the claim holds true. Owing to this claim and condition \eqref{step-size-cond}, we deduce from \eqref{key-est} that 
\begin{align*}
   \calF(q_{\theta_{\ell+1}})  -\calF(q_{\theta_\ell })
  &\leq  -\Big[1- C (2\beta)^{-1}\e_\ell \Big(1 + \sqrt{\calF(q_{\theta_1})}\Big) \Big]\e_\ell \langle  \bmu_{p-q_{\theta_\ell}}, D_{\theta_\ell} f_\ell \rangle_{\calH}\\
  &\leq  -\frac12  \e_\ell \langle  \bmu_{p-q_{\theta}}, D_{\theta_\ell} f_\ell 
  \rangle_{\calH}
  \end{align*}
  for every $\ell\geq 1$.
\end{proof}

\begin{proof}[{\bf Proof of Theorem~\ref{discrete-conv}}] In order to not to clutter notations we note hereafter $\alpha:=\alpha_{\ell}$, and $\beta:= \beta_{\ell}$.
We have 
 $\alpha \langle \bmu_{p -q_{\theta_\ell}}, D_{\theta_\ell} f_\ell \rangle_{\calH} =2\calF(q_{\theta_\ell}) - 2 \beta\,  \mathrm{MMD}_{\alpha,\beta}(p,q_{\theta_\ell})^2$ from \eqref{useful-iden}. Thus by combining with  Lemma~\ref{quantity-est} we obtain 
 \begin{align*}
   \frac{\calF(q_{\theta_{\ell+1}})  -\calF(q_{\theta_\ell })}{\e_\ell}    \leq  -\alpha^{-1} \calF(q_{\theta_\ell}) + \beta \alpha^{-1} \mathrm{MMD}_{\alpha,\beta}(p,q_{\theta_\ell})^2
   \quad \forall \ell\geq 1.
  \end{align*}
  Let $\{\lambda_j,d_j\}_{j=0\dots \infty}$ be the eigensystem of $D_{\theta_{\ell}}$ , the eigenvalues are given in decreasing order. Let $\lambda_{i}(\theta)$ be the smallest non zero eigenvalue. 
 \begin{eqnarray*} 
 2  \mathrm{MMD}_{\alpha,\beta}(p,q_{\theta_\ell})^2 =\scalT{\bmu_{p- q_{\theta_{\ell}}}}{f_{\ell}  }_{\mathcal{H}}&=& \langle (\alpha D_{\theta_t} +\beta I) f_{\ell} ,f_{\ell} \rangle_{\calH}\\
 &=& \alpha \sum_{j=0}^{\infty} \lambda_j \scalT{f_{\ell}}{d_j}^2 + \beta ||f_{\ell}||^2_{\calH} \\
 &=& \alpha \sum_{j=0}^{i}  \lambda_j \scalT{f_{\ell}}{d_j}^2  + \beta ||f_{\ell}||^2_{\calH}\\
 &\geq&   \alpha \lambda_{i}(\theta_{\ell})  ||f_{\ell}||^2_{\calH} \left( 1 - \frac{ || P_{\mathrm{Null}(D_{\theta_{\ell}})}f_{\ell} ||^2_{\calH}}{ ||f_{\ell}||^2_{\calH} }\right) +  \beta ||f_{\ell}||^2_{\calH},
 \end{eqnarray*}
 where the projection on the null space of $D_{\theta_{\ell}}$ : 
 $$ P_{\mathrm{Null}(D_{\theta_{\ell}})}f_{\ell} = \sum_{j>i } \scalT{d_j}{f_{\ell}} d_j.$$
 Let $ a(\theta_{\ell}, f_{\ell})=  1 - \frac{ || P_{\mathrm{Null}(D_{\theta_{\ell}})}f_{\ell} ||^2_{\calH}}{ ||f_{\ell}||^2_{\calH} }$.  By our choice of $\theta_1$  and $\varepsilon_{\ell-1}$, we have $f_{\ell} \notin \mathrm{Null}(D_{\theta_{\ell}}) $, since we choose $\varepsilon_{\ell-1}$ so that:
 \[a(\theta_{\ell},f_{\ell})>\tau >0\]
 We conclude that:
 \[ \scalT{\bmu_{p- q_{\theta_{\ell}}}}{f_{\ell}  }_{\mathcal{H}} \geq \nor{f_{\ell}}^2_{\mathcal{H}} \left( \alpha \lambda_{i}(\theta_{\ell}) a(\theta_{\ell},f_{\ell}) + \beta \right)  \]
 
  Using this  we obtain
\[
\scalT{\bmu_{p- q_{\theta_{\ell}}}}{f_{\ell}  }_{\mathcal{H}}\leq \| \bmu_{p- q_{\theta_{\ell}}}\|_{\calH} \|f_{\ell}\|_{\calH}
\leq  \| \bmu_{p- q_{\theta_{\ell}}}\|_{\calH} \sqrt{\frac{\scalT{\bmu_{p- q_{\theta_{\ell}}}}{f_{\ell}  }_{\mathcal{H}}}{ \alpha \lambda_{i}(\theta_{\ell}) a(\theta_{\ell},f_{\ell}) + \beta}}
\]
yielding 
\[
\scalT{\bmu_{p- q_{\theta_{\ell}}}}{f_{\ell}  }_{\mathcal{H}}\leq\frac{1}{ \alpha \lambda_{i}(\theta_{\ell}) a(\theta_{\ell},f_{\ell}) + \beta}  \| \bmu_{p- q_{\theta_{\ell}}}\|^2_{\calH}.
\] 
  
  
Hence it follows that: 
\[ \mathrm{MMD}_{\alpha,\beta}(p,q_{\theta_\ell})^2 \leq  \frac{ \calF(q_{\theta_\ell })}{ \alpha \lambda_{i}(\theta_{\ell}) a(\theta_{\ell},f_{\ell}) + \beta }.
\]
Thus 
\begin{align*}
  \frac{ \calF(q_{\theta_{\ell+1}})  -\calF(q_{\theta_\ell })}{\e_\ell}     \leq  -\alpha^{-1}  \Big( 1- \frac{\beta}{\alpha \lambda_i(\theta_\ell) a(\theta_{\ell},f_{\ell})+\beta}\Big) \calF(q_{\theta_\ell})
   =- \frac{  \lambda_i(\theta_\ell) a(\theta_{\ell},f_{\ell})}{\alpha \lambda_i(\theta_\ell) a(\theta_{\ell},f_{\ell}) +\beta} \calF(q_{\theta_\ell})
  \end{align*}
  which implies 
  \begin{align*}
   \calF(q_{\theta_{\ell+1}})    \leq  \Big[1 - \frac{ \e_\ell \lambda_i(\theta_\ell)a(\theta_{\ell},f_{\ell})}{\alpha \lambda_i(\theta_\ell)a(\theta_{\ell},f_{\ell}) +\beta}\Big] \calF(q_{\theta_\ell})
  \end{align*}
  for every  $\ell=1,2,...$ By iterating this estimate, we arrive at
  \begin{align*}
    \calF(q_{\theta_{\ell +1}})   
   & \leq  \calF(q_{\theta_1 }) \prod_{j=1}^\ell\Big[1 - \frac{ \e_j \lambda_i(\theta_j)a(\theta_{j},f_{j})}{\alpha_j \lambda_i(\theta_j)a(\theta_{j},f_{j}) +\beta_j}\Big]  \quad \forall \ell\geq 1.\\
   &= \calF(q_{\theta_1 }) \prod_{j=1}^\ell\Big[1 - \e_j\chi_j\Big]\\
   &\leq \calF(q_{\theta_1 })  \exp(-\sum_{j=1}^{\ell}\e_j\chi_j).
  \end{align*}
  In particular we have for  for $\alpha_{\ell}\leq \frac{ \tau}{2}$, and $\beta_{\ell}=\alpha_{\ell}\lambda_{i}(\theta_{\ell})$, we have noting that for all $j$, we have: $\tau < a(\theta_j,f_j)  \leq 1$:
  \[\chi_j= \frac{ \lambda_i(\theta_j)a(\theta_{j},f_{j})}{\alpha_j \lambda_i(\theta_j)a(\theta_{j},f_{j}) +\beta_j} \geq \frac{\lambda_{i}(\theta) \tau}{2\alpha_j \lambda_i(\theta)}= \frac{\tau}{2\alpha_j} \geq 1,\]
  and hence we have:
  $$ \calF(q_{\theta_{\ell +1}})   \leq \calF(q_{\theta_1 })  \exp(-\sum_{j=1}^{\ell}\e_j). $$

\end{proof}

\section{Parametric Kernelized Flows for a General Functional}
\subsection{Dynamic Formulation For \texorpdfstring{$\mathrm{MMD}$}{Lg} on a Statistical Manifold}

\begin{proof}[Proof of Theorem \ref{dynamic}]
By using the dynamic of $\bmu_{q_{\theta_t}}$ in \eqref{rate-kernel-embedding}, we have 
\begin{equation*}
\mathrm{MMD}(q_{\theta_0},q_{\theta_1})=
\|\bmu_{q_{\theta_1}} - \bmu_{q_{\theta_0}}\|_{\mathcal{H}}
 =\nor{\int_{0}^1 \frac{d  }{dt} \bmu_{q_{\theta_t}} dt}_{\mathcal{H}}
= \nor{\int_{0}^1 D_{\theta_t} f_{t} dt}_{\mathcal{H}}
\end{equation*}
which together with Jensen inequality yields
$$\mathrm{MMD}^2(q_{\theta_0},q_{\theta_1})= \nor{\int_{0}^1 D_{\theta_t} f_{t} dt}^2_{\mathcal{H}}\leq \int_{0}^1 \nor{D_{\theta_t} f_{t}}^2_{\mathcal{H}} dt.$$
Thus it is enough to prove that there exists an admissible  path $(\theta^*_{t},f^*_{t})$ satisfying
\begin{equation}\label{minimizer}
    \mathrm{MMD}^2(q_{\theta_0},q_{\theta_1})= \int_{0}^1 \nor{D_{\theta^*_t} f^*_{t}}^2_{\mathcal{H}} dt.
\end{equation}
Let $(\theta^*_{t},f^*_{t})$ be the path given by assumption \eqref{eq:assump}. In particular, we have 
$$ D_{\theta^*_t} f^*_t= 2 \bmu_{q_{\theta^*_1}- q_{\theta^*_{t}}}.$$
This together with a calculation in  the proof of Theorem \ref{theo:statisticalManifoldDescent} gives
$$\frac{d}{dt} \mathrm{MMD}^2(q_{\theta^*_{t}},q_{\theta^*_1}) =  - 2 \scalT{\bmu_{q_{\theta^*_1} - q_{\theta^*_t}}}{ D_{\theta^*_t} f^*_{t}}_{\mathcal{H}} = -\nor{D_{\theta^*_t} f^*_t}^2_{\mathcal{H}}.$$
It follows that this  path satisfies optimal property \eqref{minimizer} since
$$\mathrm{MMD}^2(q_{\theta_0},q_{\theta_1})= \int_{0}^1 -\frac{d}{dt} \mathrm{MMD}^2(q_{\theta^*_{t}},q_{\theta^*_1}) dt = \int_{0}^1 \nor{D_{\theta^*_t} f^*_t}^2_{\mathcal{H}} dt. $$
\end{proof}
\subsection{Regularized \texorpdfstring{$\mathrm{MMD}$}{Lg} and gradient flows on a Statistical Manifold}

\begin{proof}[Proof of Theorem \ref{thm:general-functional}]
Let  $t\mapsto \theta_t\in \Theta$ be  a differentiable curve  passing through $\theta$ at $t=0$ and with tangent vector
\begin{equation}\label{initial-tangent}
 \partial_t \theta_t\big|_{t=0}= \xi=L_\theta\varphi\quad\mbox{for } \varphi\in \mathcal H.
\end{equation}
Then by using the chain rule we obtain
\begin{eqnarray*}
 \frac{d \calF(q_{\theta_t})}{dt}
\Big|_{t=0} 
&=& \scalT{\nabla_{\theta}[\mathcal{F}(q_{\theta_t})]}{\frac{d\theta_t}{dt}}_{\R^p}\Big|_{t=0}
=\scalT{\langle h_{\theta_t}, \nabla_{\theta} [\bmu_{q_{\theta_t}}]  \rangle_{\calH}  }{\frac{d\theta_t}{dt}}_{\R^p}\Big|_{t=0}\\
&=&\Big\langle h_{\theta_t}, \nabla_{\theta} [\bmu_{q_{\theta_t}}] \frac{d\theta_t}{dt} \Big\rangle_{\calH}\Big|_{t=0}
=\langle h_{\theta_t}, \frac{d}{dt} [\bmu_{q_{\theta_t}}] \rangle_{\calH}\Big|_{t=0}.
 \end{eqnarray*}
Hence it follows from \eqref{initial-tangent}, 
\eqref{rate-kernel-embedding}, and \eqref{u-eq} that
\begin{eqnarray*}
\frac{d \calF(q_{\theta_t})}{dt}
\Big|_{t=0}
= \langle h_{\theta}, D_\theta \varphi\rangle_{\calH}=\scalT{(\alpha D_\theta +\beta I) u}{ D_\theta \varphi}
= g_{\theta}\left(L_\theta u, \xi\right)_{\mathcal H}.
 \end{eqnarray*}
 Therefore, we conclude from definition \eqref{grad-def} that $\underset{ d_{\alpha,\beta}}{\grad} \mathcal{F}(q_{\theta}) =L_\theta u$.
\end{proof}
\textbf{Gradient flows of Generic Functionals on the Statistical Manifold.}
Our framework is not limited to the $\rm{MMD}$ functional. In  the following corollary we   exhibit additional examples of functional $\mathcal{F}(q_{\theta})$  defined on the statistical manifold along with their gradient flows w.r.t.  $d_{\alpha,\beta}$.
\begin{corollary}\label{specialcase}

\begin{itemize}
\item For the (potential) energy functional  $\calF_1(q_\theta)=\int V(x) q_{\theta}(dx)$
with $V \in \mathcal{H}$,
we have 
$
\underset{ d_{\alpha,\beta}}{\grad} \mathcal{F}_1(q_\theta) =L_\theta u,
$ where
   $ (\alpha D_\theta+\beta I) u = V.$
\item For the (entropy) functional $\calF_1(q_\theta)=\int f(\mu_{q_{\theta}}(x))dx$ with $f:\R \to \R$ being continuously differentiable, we have $
\underset{ d_{\alpha,\beta}}{\grad} \mathcal{F}_1(q_\theta) =L_\theta u,
$ where 
   $ (\alpha D_\theta+\beta I) u = \int f'(\bmu_{q_{\theta}}(x))k(x,.) dx.$
\item For the (interaction) functional $\mathcal{F}_3(q_{\theta})=\int  f(x)g(y)q_{\theta}(dx)q_{\theta}(dy)$ with $f,g \in \mathcal{H}$, we have $
\underset{ d_{\alpha,\beta}}{\grad} \mathcal{F}_3(q_\theta) =L_\theta u,
$ where 
    $(\alpha D_\theta+\beta I) u = \scalT{f}{\bmu_{q_\theta}}_{\calH}g+\scalT{g}{\bmu_{q_{\theta}}}_{\calH}f .$
\end{itemize}
\end{corollary}

\begin{proof}[Proof of Corollary \ref{specialcase}]
 This is a consequence of Theorem~\ref{thm:general-functional} observing  that
 $\partial_{\theta_i}[\calF_1(q_\theta)] = \scalT{V}{ \partial_{\theta_i}[\bmu_{q_\theta}] }_{\mathcal{H}}$
 since $\calF_1(q_\theta)=\langle V, \bmu_{q_\theta}\rangle_{\mathcal H}$ and  $\partial_{\theta_i}\calF_2(q_\theta) = \scalT{\int f'(\bmu_{q_{\theta}}(x))k(x,.) dx}{\partial_{\theta_i}\bmu_{q_{\theta}}}_{\mathcal{H}}$ since 
 $\calF_2(q_\theta) =\int f\left(\scalT{\bmu_{q_{\theta}}}{k(x,.)}_{\mathcal{H}}\right)dx$. Note also that as  $\mathcal{F}_3(q_{\theta})=\scalT{f}{\bmu_{q_{\theta}}}_{\mathcal{H}}\scalT{g}{\bmu_{q_{\theta}}}_{\mathcal{H}}$, we have  \[\partial_{\theta_i}[\calF_3(q_\theta)] = \scalT{\scalT{f}{\bmu_{q_\theta}}_{\calH}g+\scalT{g}{\bmu_{q_{\theta}}}_{\calH}f}{ \partial_{\theta_i}[\bmu_{q_\theta}] }_{\mathcal{H}}.
 \]
\end{proof}

\begin{proof}[Proof of Proposition \ref{prop:first-derivative}]
This general fact can be seen from the proof of Theorem~\ref{thm:general-functional}. Indeed, 
 \begin{align}\label{gen-derivative}
\frac{d}{dt} \calF(q_{\theta_t})
&= g_{\theta_t}(\underset{ d_{\alpha,\beta}}{\grad} \mathcal{F}(q_{\theta_t}), \partial_t \theta_t)
 =-g_{\theta_t}\big(L_{\theta_t} u_t, L_{\theta_t} u_t\big)\nonumber\\
 &=- \scalT{ (\alpha  D_{\theta_t} +\beta I) u_t}{D_{\theta_t} u_t}_{\mathcal{H}}=- \scalT{h_{\theta_t}}{D_{\theta_t} u_t}_{\mathcal{H}}.
     \end{align}
On the other hand, we obtain  from  equation \eqref{eq-flow-2} for $u_t$ that
\[\alpha \scalT{h_{\theta_t}}{ D_{\theta_t}u_{t} }_{\mathcal{H}}  +\beta \scalT{h_{\theta_t}}{u_{t} }_{\mathcal{H}} =\|h_{\theta_t}\|_{\mathcal{H}}^2
\]
which gives 
\[ \scalT{h_{\theta_t}}{D_{\theta_t} u_t}_{\mathcal{H}}  =\frac{1}{\alpha}  \Big[ \|h_{\theta_t}\|_{\mathcal{H}}^2  - \beta \scalT{h_{\theta_t}}{u_{t} }_{\mathcal{H}}\Big]
=\frac{1}{\alpha}  \Big[ \|h_{\theta_t}\|_{\mathcal{H}}^2  - \beta \scalT{h_{\theta_t}}{(\alpha  D_{\theta_t} +\beta I)^{-1} h_{\theta_t}  }_{\mathcal{H}}\Big].
\]
Therefore, we deduce the first conclusion of the proposition. From  \eqref{gen-derivative}, we also see that $\frac{d}{dt} \calF(q_{\theta_t}) =0$ if and only if $\alpha  \|D_{\theta_t}u_t\|_{\calH}^2 +\beta \scalT{  u_t}{D_{\theta_t} u_t}_{\mathcal{H}}=0$. That is, $\frac{d}{dt} \calF(q_{\theta_t}) =0$ if and only if $D_{\theta_t}u_t=0$. But by exactly the same reason as in the proof of Corollary~\ref{upper-bound}, we have $D_{\theta_t}u_t=0$ is equivalent to $D_{\theta_t} h_{\theta_t}=0$. Thus the last conclusion of the proposition follows.
\end{proof}

\section{Convergence of Flows}
\begin{proof}[Proof of Proposition \ref{prop:conv}]
From Proposition~\ref{prop:first-derivative}, we have 
 \begin{align}\label{derivative-F}
\frac{d}{dt} \calF(q_{\theta_t})
&=-\frac{1}{\alpha}  \Big[ \|h_{\theta_t}\|_{\mathcal{H}}^2  - \beta \scalT{h_{\theta_t}}{u_t  }_{\mathcal{H}}\Big].
\end{align}
By the proof of Theorem~\ref{discrete-conv} we have:
\[
\scalT{h_{\theta_t}}{u_t  }_{\mathcal{H}}\leq\frac{1}{\alpha \lambda_i(\theta_t)a(\theta_t,u_t) + \beta} \|h_{\theta_t}\|_{\calH}^2.
\]
This together with \eqref{derivative-F} and assumption \eqref{F-vs-gradient} gives
\begin{align*}
\frac{d}{dt} \calF(q_{\theta_t})
&\leq -\frac{1}{\alpha}  \Big( 1  - \frac{\beta}{\alpha \lambda_i(\theta_t)a(\theta_t,u_t) + \beta} \Big)\|h_{\theta_t}\|_{\calH}^2
\leq - \frac{\lambda_i(\theta_t)a(\theta_t,u_t)  \gamma(\theta_t)}{\alpha \lambda_i(\theta_t) a(\theta_t,u_t)+ \beta} 
 \calF(q_{\theta_t}).
\end{align*}
It follows that
\begin{align*}
    \log{\Big(\frac{\calF(q_{\theta_t})}{\calF(q_{\theta_0})}\Big)}= \int_0^t \frac{d}{ds} \big[ \log(\calF(q_{\theta_t}))\big] ds
    \leq  -\int_0^t \frac{\lambda_i(\theta_s)a(\theta_s,u_s) \gamma(\theta_s)}{\alpha \lambda_i(\theta_s)a(\theta_s,u_s) +\beta} ds,
\end{align*}
This gives   decay estimate \eqref{decay}.
 \end{proof}


\section{Algorithm}\label{sec:algo}

\begin{algorithm}[ht!]
\caption{MMD GAN with Discrete Parametric Kernelized Flows w.r.t. $d_{\alpha,\beta}$}
 \label{alg:MMDFLOWGAN}
\begin{algorithmic}
 \STATE {\bfseries Input:} $\alpha>0$ gradient penalty weight, $\eta$ Learning rate, $n_c$ number of iterations for training the critic, N batch size
 \STATE {\bfseries Initialize} witness function $f(x)=\scalT{v}{\Phi(x)}$, Generator $G_{\theta}$, initialize $v,\theta,\Phi$. Note by $p$ parameters of $v,\Phi$.
 \REPEAT
\STATE \COMMENT{\textcolor{blue}{Update of the witness function of $\mathrm{MMD}^2_{\alpha,\beta}(p,G_{\theta, \#}\nu )$} }\\
 \FOR{$j=1$ {\bfseries to} $n_c$}
 \STATE Sample a minibatch $x_i,i=1\dots N, x_i \sim p$ 
 \STATE Sample a minibatch $z_i,i=1\dots N, z_i \sim \nu$ 
 \STATE Compute $\widehat{\mathrm{MMD}}^2_{\alpha,\beta}(p,G_{\theta,\#}(\nu)) = - M(p,\theta) $
 \STATE $M(p,\theta)= \frac{1}{N}\sum_{i=1}^N f(G(\theta)(z_i)) - \frac{1}{N}\sum_{i=1}^N f(x_i)+ \alpha \nor{\nabla_{\theta}\frac{1}{N}\sum_{i=1}^N f(G_{\theta}(z_i))}^2$ \\
 \COMMENT{We omit the $\beta$ regularizer since SGD achieves similar effet}\\
 \COMMENT{A better estimate of $\nor{\nabla_{\theta} \mathbb{E}_{\nu}f(G_{\theta}(z))}^2$ can be obtained  by using $N^2$ independent samples or by using a variance estimate, but we found this estimate to be enough in practice }
 \IF{Fixed Kernel}
 \STATE $v \gets v -\eta \mathrm{RmsProp}(\nabla_{v}M(p,\theta))$
\ELSIF{Learned Kernel}
\STATE$p \gets p -\eta \mathrm{RmsProp}(\nabla_{p}M(p,\theta))$
 \ENDIF
 \ENDFOR
 \STATE \COMMENT{\textcolor{blue}{Generator update: Parametric  Discrete Kernel Flow w.r.t. $d_{\alpha,\beta}$ on the statistical Manifold}  }
 \STATE Sample ${z_i,i=1\dots N , z_i \sim \nu}$
 \STATE $d_{\theta}\gets  -\nabla_{\theta}\frac{1}{N}\sum_{i=1}^N f(G_{\theta}(z_i))$ 
 \STATE $\theta \gets \theta -\eta \mathrm{RmsProp}(d_{\theta})$
 \UNTIL{$\theta$ converges}
\end{algorithmic}
\end{algorithm}

\section{Additional Experiments and Plots}\label{App:plots}
\subsection{MMD GAN when the witness function is a multilayer  Neural Network: i.e. Learned Kernel }

\begin{figure}[ht!]
\vspace{-1.0em}
    \begin{subfigure}[L]{0.6\textwidth}
    \centering
        \includegraphics[width=\textwidth]{figs/traj_euclidean_kernel.pdf}
        \caption{Fixed Kernel: Trajectories of Flows for $\alpha=0$: Euclidean Gradients Flows}
        \label{fig:0trajbis}
    \end{subfigure}
    \begin{subfigure}[R]{0.4\textwidth}
    \centering
        \includegraphics[width=\textwidth]{figs/mmd_noconv.png}
        \caption{MMD Loss}
        \label{fig:0lossbis}
    \end{subfigure}
    \begin{subfigure}[L]{0.6\textwidth}
    \centering
        \includegraphics[width=\textwidth]{figs/traj_remannian_kernel.pdf}
        \caption{Fixed Kernel: Trajectories of Flows for $\alpha=100$: Kernelized  Gradients Flows w.r.t. $d_{\alpha,\beta}$}
        \label{fig:100trajbis}
    \end{subfigure}
    \begin{subfigure}[R]{0.4\textwidth}
    \centering
        \includegraphics[width=\textwidth]{figs/mmd_remannian_kernel.png}
        \caption{MMD Loss}
        \label{fig:100lossbis}
    \end{subfigure}
       \begin{subfigure}[L]{0.6\textwidth}
    \centering
        \includegraphics[width=\textwidth]{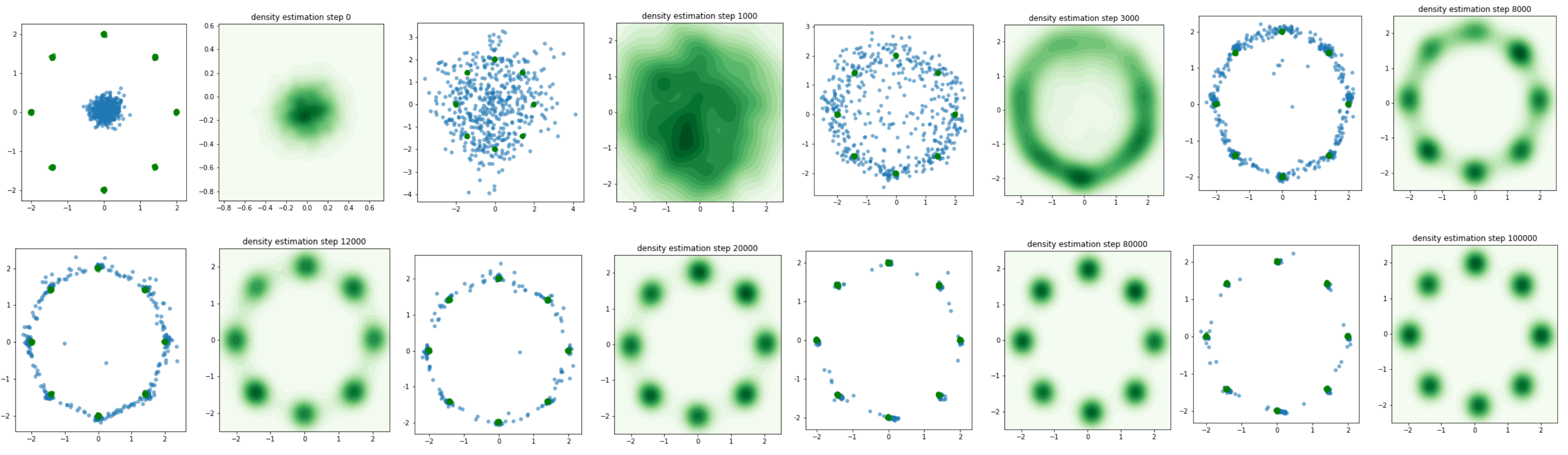}
        \caption{Learned Kernel: Trajectories of Flows for $\alpha=100$: Kernelized  Gradients Flows w.r.t. $d_{\alpha,\beta}$}
        \label{fig:100trajLearned}
    \end{subfigure}
    \begin{subfigure}[R]{0.4\textwidth}
    \centering
        \includegraphics[width=\textwidth]{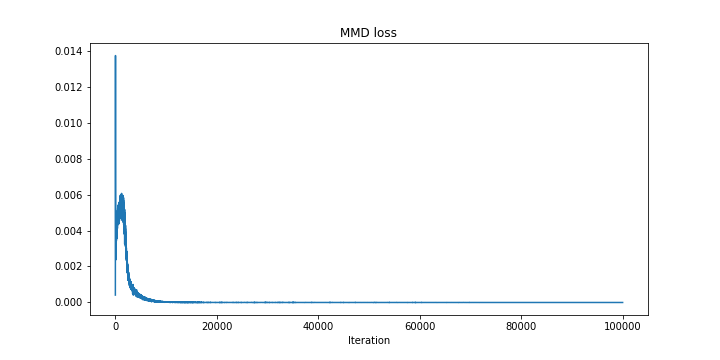}
        \caption{MMD Loss}
        \label{fig:100lossLearned}
    \end{subfigure}
    
    \caption{Trajectories of Kernelized Gradient flows of the $\mathrm{MMD}$ functional for $\alpha=0$ (no gradient regularization) and $\alpha=100$. It is clear that the Riemannian structure induced by  $d_{\alpha,\beta}$, $\alpha>0$ guarantees the convergence, while  we suffer from cycles and mode collapse for $\alpha=0$. When the kernel is learned , i.e. the witness function is a multilayer neural network, the gradient flow exhibits qualitatively similar behavior to the fixed kernel one (compare Fig \ref{fig:100traj}) of the fixed kernel to Fig \ref{fig:100trajLearned} of the learned kernel).
    This suggests possibly a Neural Tangent Kernel Regime that needs further analysis.   }
    \label{fig:convergencebis}
    \vskip -0.12 in    

\end{figure}

\end{document}